\documentclass{article} 
\usepackage{iclr2022_conference,times}


\usepackage{amsmath,amsfonts,bm}









\def\eqref#1{equation~\ref{#1}}









\def\1{\bm{1}}










\DeclareMathAlphabet{\mathsfit}{\encodingdefault}{\sfdefault}{m}{sl}
\SetMathAlphabet{\mathsfit}{bold}{\encodingdefault}{\sfdefault}{bx}{n}












\newcommand{\R}{\mathbb{R}}



\DeclareMathOperator*{\argmax}{arg\,max}
\DeclareMathOperator*{\argmin}{arg\,min}

\DeclareMathOperator{\sign}{sign}

\usepackage{header}
\usepackage{opc_acronyms}

\title{On-Policy Model Errors\\ in Reinforcement Learning}


\author{Lukas P. Fr{\"o}hlich\thanks{Work done partially while at the Bosch Center for Artificial Intelligence.}  \\
Institute for Dynamic Systems and Control \\
ETH Z{\"u}rich \\
Zurich, Switzerland \\
\texttt{lukasfro@ethz.ch} \\ 
\And
Maksym Lefarov \\
Bosch Center for Artificial Intelligence \\
Renningen, Germany \\
\texttt{Maksym.Lefarov@de.bosch.com} \\
\AND
Melanie N. Zeilinger  \\
Institute for Dynamic Systems and Control \hspace{3.65em} \\
ETH Z{\"u}rich \\
Zurich, Switzerland \\
\texttt{mzeilinger@ethz.ch} \\
\And
Felix Berkenkamp \\
Bosch Center for Artificial Intelligence \\
Renningen, Germany \\
\texttt{Felix.Berkenkamp@de.bosch.com}
}

%

\iclrfinalcopy 
\begin{document}

\maketitle

\doparttoc 
\faketableofcontents 

\begin{abstract}
  Model-free reinforcement learning algorithms can compute policy gradients given sampled environment transitions, but require large amounts of data.
  In contrast, model-based methods can use the learned model to generate new data, but model errors and bias can render learning unstable or suboptimal.
  In this paper, we present a novel method that combines real-world data and a learned model in order to get the best of both worlds.
  The core idea is to exploit the real-world data for on-policy predictions and use the learned model only to generalize to different actions. Specifically, we use the data as time-dependent on-policy correction terms on top of a learned model, to retain the ability to generate data without accumulating errors over long prediction horizons.
  We motivate this method theoretically and show that it counteracts an error term for model-based policy improvement.
  Experiments on MuJoCo- and PyBullet-benchmarks show that our method can drastically improve existing model-based approaches without introducing additional tuning parameters.
\end{abstract}

\section{Introduction}
Model-free \gls{rl} has made great advancements in diverse domains such as single- and multi-agent game playing \citep{mnih2015human,silver2016alphago,vinyals2019grandmaster}, robotics \citep{kalashnikov2018qtopt}, and neural architecture search \citep{zoph2016nasrl}. All of these model-free approaches rely on large numbers of interactions with the environment to ensure successful learning.
While this issue is less severe for environments that can easily be simulated, it limits the applicability of model-free \gls{rl} to (real-world) domains where data is scarce.

\Gls{mbrl} reduces the amount of data required for policy optimization by approximating the environment with a learned model, which we can use to generate simulated state transitions \citep{Sutton1990Dyna,racaniere2017imagination,moerland2020mbrlsurvey}.
While early approaches on low-dimensional tasks by \citet{schneider1997exploiting,Deisenroth2011PILCO} used probabilistic models with closed-form posteriors, recent methods rely on neural networks to scale to complex tasks on discrete \citep{kaiser2019modelatari} and continuous \citep{Chua2018pets, kurutach2018metrpo} action spaces.
However, the learned representation of the true environment always remains imperfect, which introduces approximation errors to the \gls{rl} problem  \citep{atkeson1997comparison,abbeel2006inaccuratemodels}.
Hence, a key challenge in \gls{mbrl} is \textit{model-bias}; small errors in the learned models that compound over multi-step predictions and lead to lower asymptotic performance than model-free methods. 

To address these challenges with both model-free and model-based \gls{rl}, \citet{levine2013guided,chebotar2017combining} propose to combine the merits of both.
While there are multiple possibilities to combine the two methodologies, in this work we focus on improving the model's predictive state distribution such that it more closely resembles the data distribution of the true environment.

\paragraph{Contributions}
The main contribution of this paper is on-policy corrections (\ourmethod), a novel hyperparameter-free methodology that uses on-policy transition data on top of a separately learned model to enable accurate long-term predictions for \gls{mbrl}. A key strength of our approach is that it does not introduce any new parameters that need to be hand-tuned for specific tasks.
We theoretically motivate our approach by means of a policy improvement bound and show that we can recover the true state distribution when generating trajectories on-policy with the model.
We illustrate how \ourmethod improves the quality of policy gradient estimates in a simple toy example and evaluate it on various continuous control tasks from the MuJoCo control suite and their PyBullet variants. There, we demonstrate that \ourmethod improves current state-of-the-art \gls{mbrl} algorithms in terms of data-efficiency, especially for the more difficult PyBullet environments.

\paragraph{Related Work}
To counteract model-bias, several approaches combine ideas from model-free and model-based \gls{rl}. 
For example, \citet{levine2013guided} guide a model-free algorithm via model-based planning towards promising regions in the state space, \citet{kalweit2017uncertainty} augment the training data by an adaptive ratio of simulated transitions, \citet{talvitie2017self} use `hallucinated' transition tuples from simulated to observed states to self-correct the model, and \citet{feinberg2018mbve,buckman2018steve} use a learned model to improve the value function estimate.
\citet{janner2019trust} mitigate the issue of compounding errors for long-term predictions by simulating short trajectories that start from real states. 
\citet{cheng2019predictor} extend first-order model-free algorithms via adversarial online learning to leverage prediction models in a regret-optimal manner.
\citet{clavera2020maac} employ a model to augment an actor-critic objective and adapt the planning horizon to interpolate between a purely model-based and a model-free approach.
\citet{Morgan2021ModelPredictiveActorCritic} combine actor-critic methods with model-predictive rollouts to guarantee near-optimal simulated data and retain exploration on the real environment.
A downside of most existing approaches is that they introduce additional hyperparameters that are critical to the learning performance \citep{zhang21hyperparameters}.

In addition to empirical performance, recent work builds on the theoretical guarantees for model-free approaches by \citet{Kakade02approximatelyoptimal,schulman2015trust} to provide guarantees for \gls{mbrl}.
\citet{luo2018slbo} provide a general framework to show monotonic improvement towards a local optimum of the value function, while  \citet{janner2019trust} present a lower-bound on performance for different rollout schemes and horizon lengths. \citet{yu2020mopo} show guaranteed improvement in the offline \gls{mbrl} setting by augmenting the reward with an uncertainty penalty, while \citet{clavera2020maac} present improvement guarantees in terms of the model's and value function's gradient errors.

Moreover, \citet{harutyunyan2016q} propose a similar correction term as the one introduced in this paper in the context of off-policy policy evaluation and correct the state-action value function instead of the transition dynamics.
Similarly, \citet{fonteneau2013batch} consider the problem of off-policy policy evaluation but in the batch \gls{rl} setting and propose to generate `artificial' trajectories from observed transitions instead of using an explicit model for the dynamics.

A related field to \gls{mbrl} that also combines models with data is \gls{ilc} \citep{bristow2006ilcsurvey}. While \gls{rl} typically focuses on finding parametric feedback policies for general reward functions, \gls{ilc} instead seeks an open-loop sequence of actions with fixed length to improve state tracking performance.
Moreover, the model in \gls{ilc} is often derived from first principles and then kept fixed, whereas in \gls{mbrl} the model is continuously improved upon observing new data.
The method most closely related to \gls{rl} and our approach is optimization-based \gls{ilc} \citep{owens2005iterative, schoellig2009optimization}, in which a linear dynamics model is used to guide the search for optimal actions.
Recently, \citet{Baumgaertner2020NonlinearILC} extended the \gls{ilc} setting to nonlinear dynamics and more general reward signals.
Little work is available that draws connections between \gls{rl} and \gls{ilc} \citep{zhang2019preliminary} with one notable exception:
\citet{abbeel2006inaccuratemodels} use the observed data from the last rollout to account for a mismatch in the dynamics model.
The limitations of this approach are that deterministic dynamics are assumed, the policy optimization itself requires a line search procedure with rollouts on the true environment and that it was not combined with model learning.
We build on this idea and extend it to the stochastic setting of \gls{mbrl} by making use of recent advances in \gls{rl} and model learning.

\section{Problem Statement and Background}

We consider the \gls{mdp} $(\statespace, \actionspace, p, r, \gamma, \rho)$, where $\statespace \subseteq \R^\nstate$ and $\actionspace \subseteq \R^\naction$ are the continuous state and action spaces, respectively.
The unknown environment dynamics are described by the transition probability $p(\state_{\ti + 1} \mid \state_\ti, \action_\ti)$, an initial state distribution $\rho(\state_0)$ and the reward signal $r(\state, \action)$.
The goal in \gls{rl} is to find a policy $\pi_\theta(\action_\ti \mid \state_\ti)$ parameterized by $\theta$ that maximizes the expected return discounted by $\gamma \in [0, 1]$ over episodes of length $T$,
\begin{align}\label{eq:rl_problem}
    \return{} = \E_{\state_{0:T}, \action_{0:T} } 
     \left[ \sum\nolimits_{\ti=0}^\Ti \gamma^\ti r(\state_\ti, \action_\ti) \right], 
    \quad
    \state_{\ti + 1} \sim p(\state_{\ti + 1} \mid \state_\ti, \action_\ti),
    \quad
    \state_0 \sim \rho,
    \quad 
    \action_\ti \sim \pi_\theta(\action_\ti \mid \state_\ti).
\end{align}
The expectation is taken with respect to the trajectory under the stochastic policy $\pi_\theta$ starting from a stochastic initial state $\state_0$. Direct maximization of \cref{eq:rl_problem} is challenging, since we do not know the environment's transition model $p$.
In \gls{mbrl}, we learn a model for the transitions and reward function from data, $\tilde{p}(\state_{\ti + 1} \mid \state_\ti, \action_\ti) \approx p(\state_{\ti + 1} \mid \state_\ti, \action_\ti)$ and $\tilde{r}(\state_\ti, \action_\ti) \approx r(\state_\ti, \action_\ti)$, respectively.
Subsequently, we maximize the model-based expected return $\modelreturn{}$ as a surrogate problem for the true \gls{rl} setting, where $\modelreturn{}$ is defined as in \cref{eq:rl_problem} but with $\tilde{p}$ and $\tilde{r}$ instead.
%
For ease of exposition, we assume a known reward function $\tilde{r} = r$, even though we learn it jointly with $\tilde{p}$ in our experiments.

We let $\eta_\iteration$ denote the return under the policy $\pi_\iteration = \pi_{\theta_\iteration}$ at iteration $\iteration$ and use $\refstate$ and $\refaction$ for states and actions that are observed on the true environment.
\Cref{alg:main} summarizes the overall procedure for \gls{mbrl}: At each iteration $\iteration$ we store $\batchsize$ on-policy trajectories $\data_\iteration^\batch = \{ ( \refstate^{\iteration,\batch}_{\ti}, \refaction^{\iteration,\batch}_\ti, \refstate^{\iteration,\batch}_{\ti+1} ) \}_{\ti=0}^{\Ti - 1}$ obtained by rolling out the current policy $\pi_\iteration$ on the real environment in \cref{alg:mbrl:rollout}. Afterwards, we approximate the environment with a learned model $\tilde{p}_\iteration(\state_{\ti + 1} \mid \state_\ti, \action_\ti)$ based on the data $\data_{1:n}$ in \cref{alg:mbrl:model_learning}, and optimize the policy based on the proxy objective $\modelreturn{}$ in \cref{alg:mbrl:policy_improvement}. Note that the policy optimization algorithm can be off-policy and employ its own, separate replay buffer.

\paragraph{Model Choices}
The choice of model $\tilde{p}$ plays a key role, since it is used to predict sequences $\tau$ of states transitions and thus defines the surrogate problem in \gls{mbrl}.
We assume that the model comes from a distribution family $\mathcal{P}$, which for each state-action pair $(\state_\ti, \action_\ti)$ models a distribution over the next state $\state_{\ti + 1}$. The model is then trained to summarize all past data $\data_{1:\iteration} = \cup_{i = 1}^{\iteration} \cup_{\batch=1}^\batchsize \data^\batch_i$ by maximizing the marginal log-likelihood $\mathcal{L}$,
\begin{equation}
    \label{eq:model_learning}
    \learnedmodel_\iteration(\state_{\ti + 1} \mid \state_\ti, \action_\ti) 
    = \argmax_{\tilde{p} \in \mathcal{P}} \sum_{(\refstate_\ti, \refaction_\ti, \refstate_{\ti+1}) \in \data_{1:\iteration} } \mathcal{L}(\refstate_{\ti+1}; \refstate_\ti, \refaction_\ti ).
\end{equation}
For a sampled trajectory index $\batch \sim
\mathcal{U}(\{1, \dots, \batchsize\})$,
sequences $\tau$ start from the initial state $\refstate_0^{\iteration,\batch}$ and are distributed according to $\learnedmodel_\iteration(\tau \mid \batch) = \delta(\state_0 - \refstate_0^{n,b}) \prod_{\ti=0}^{\Ti-1} \learnedmodel_\iteration(\state_{\ti+1} \mid \state_\ti, \action_\ti) \pi_\theta(\action_\ti \mid \state_\ti)$,
where $\delta(\cdot)$ denotes the Dirac-delta distribution. 
Using model-data for policy optimization is in contrast to model-free methods, which only use observed environment data by replaying past transitions from a recent on-policy trajectory $\batch \in \{ 1, \dots, \batchsize\}$. In our model-based framework, this replay buffer is equivalent to the non-parametric model
\begin{equation}
\label{eq:replay_buffer_trajectory}
     \datamodel_\iteration(\tau \mid \batch) = \delta(\state_0 - \refstate_0^{n,b}) \prod_{\ti=0}^{\Ti-1} \datamodel_\iteration(\state_{\ti+1} \mid \ti, \batch),  
     \text{~where~}
     \datamodel_\iteration(\state_{\ti+1} \mid \ti, \batch) =  \delta(\state_{\ti+1} - \refstate_{\ti+1}^{\iteration,\batch}) ,
\end{equation}
where we only replay observed transitions instead of sampling new actions from $\pi_\theta$.




\begin{algorithm}[t]
  \caption{General Model-based Reinforcement Learning}
  \label{alg:main}
  \begin{algorithmic}[1] 
    \For{$\iteration = 1,\dots$}
      \For{$\batch = 1, \dots, \batchsize$}
          \State Rollout policy $\pi_{\iteration}$ on environment and store transitions $\data_\iteration^\batch = \{ ( \refstate^{\iteration,\batch}_{\ti}, \refaction^{\iteration,\batch}_\ti, \refstate^{\iteration,\batch}_{\ti+1} ) \}_{\ti=0}^{\Ti - 1}$ 
          \label{alg:mbrl:rollout}
      \EndFor
      \State $\tilde{p}_n(\state_{\ti + 1} \mid \state_\ti, \action_\ti)$: Learn a global dynamics model given all data $\data_{1:n} = \bigcup_{i=1}^{\iteration} \bigcup_{\batch=1}^\batchsize \data_i^b $
      \label{alg:mbrl:model_learning}
      \State $\theta_{\iteration + 1} = \theta_\iteration + \alpha \nabla \modelreturn{\iteration}$: Optimize the policy based on the model $\tilde{p}$ with any \gls{rl} algorithm
      \label{alg:mbrl:policy_improvement}
    \EndFor
  \end{algorithmic}
  \label{alg:mbrl}
\end{algorithm}

\section{On-policy Corrections}
\label{sec:on_policy_corrections}

In this section, we analyze how the choice of model impacts policy improvement, develop \ourmethod as a model that can eliminate one term in the improvement bound, and analyze its properties. In the following, we drop the $\iteration$ sub- and superscript when the iteration is clear from context.

\subsection{Policy Improvement}
Independent of whether we use the data directly in \datamodel or summarize it in a world model \learnedmodel, our goal is to find an optimal policy that maximizes \cref{eq:rl_problem} via the corresponding model-based proxy objective.
To this end, we would like to know how a policy improvement $\modelreturn{\iteration + 1} - \tilde{\eta}_\iteration \geq 0$ based on the model $\tilde{p}$, which is what we optimize in \gls{mbrl}, relates to the true gain in performance $\return{\iteration + 1} - \eta_\iteration$ on the environment with unknown transitions $p$.
While the two are equal without model errors,
in general the larger the model error, the worse we expect the proxy objective to be \citep{Lambert20ObjectiveMismatch}.
Specifically, we show in \cref{ap:policy_improvement_bound} that the policy improvement can be decomposed as
\begin{align}\label{eq:improvement_bound}
    \undertext{True policy improvement}{
        \return{\iteration+1} - \return{\iteration}
    }
     & \geq
    \undertext{Model policy improvement}{
        \modelreturn{\iteration+1} - \modelreturn{\iteration}
    }
    - \quad
    \undertext{Off-policy model error}{
        | \return{\iteration+1} - \modelreturn{\iteration+1} |
    }
    \quad-
    \undertext{On-policy model error}{
        | \return{\iteration} - \modelreturn{\iteration} |
    }
    \, ,
\end{align}
where a performance improvement on our model-based objective $\modelreturn{}$ only translates to a gain in \cref{eq:rl_problem} if two error terms are sufficiently small. These terms depend on how well the performance estimate based on our model, $\modelreturn{}$, matches the true performance, $\eta$. If the reward function is known, this term only depends on the model quality of $\tilde{p}$ relative to $p$. Note that in contrast to the result by \citet{janner2019trust}, \cref{eq:improvement_bound} is a bound on the policy improvement instead of a lower bound on $\eta_{\iteration+1}$.

The first error term compares $\return{\iteration + 1}$ and $\modelreturn{\iteration + 1}$, the performance estimation gap under the optimized policy $\pi_{\iteration + 1}$ that we obtain in \cref{alg:mbrl:policy_improvement} of \cref{alg:mbrl}. Since at this point we have only collected data with $\pi_\iteration$ in \cref{alg:mbrl:rollout}, this term depends on the generalization properties of our model to new data; what we call the \emph{off-policy model error}. For our data-based model \datamodel that just replays data under $\pi_\iteration$ independently of the action, this term can be bounded for stochastic policies. For example, \citet{schulman2015trust} bound it by the average KL-divergence between $\pi_\iteration$ and $\pi_{\iteration + 1}$. For learned models \learnedmodel, it depends on the generalization properties of the model \citep{luo2018slbo,yu2020mopo}. While understanding model generalization better is an interesting research direction, we will assume that our learned model is able to generalize to new actions in the following sections.

While the first term hinges on model-generalization, the second term is the \emph{on-policy model error}, i.e., the deviation between \return{\iteration} and \modelreturn{\iteration} under the current policy $\pi_\iteration$.
This error term goes to zero for \datamodel as we use more on-policy data $\batchsize \to \infty$, since the transition data are sampled from the true environment, c.f.,
\cref{ap:prop_data_model}.
While the learned model is also trained with on-policy data, small errors in our model compound as we iteratively predict ahead in time. Consequently, 
the on-policy error term grows as $\bigO( \min( \gamma / (1 - \gamma)^2, \horizon / (1 - \gamma), \horizon^2 ) )$, c.f., \citep{janner2019trust} and \cref{ap:prop_learned_model}. 

\subsection{Combining Learned Models and Replay Buffer}
\label{sec:combining_models}

The key insight of this paper is that the learned model in \cref{eq:model_learning} and the replay buffer in \cref{eq:replay_buffer_trajectory} have opposing strengths:
The replay buffer has low error on-policy, but high error off-policy since it replays transitions from past data, i.e., they are independent of the actions chosen under the new policy.
In contrast, the learned model can generalize to new actions by extrapolating from the data and thus has lower error off-policy, but errors compound over multi-step predictions.

An ideal model would combine the model-free and model-based approaches in a way such that it retains the unbiasedness of on-policy generated data, but also generalizes to new policies via the model.
To this end, we propose to use the model to predict how observed transitions would \textit{change} for a new state-action pair.
In particular, we use the model's mean prediction
$\tilde{f}_\iteration(\state, \action) = \E [
        \learnedmodel_\iteration( \, \cdot \mid \state, \action)
    ] $
to construct the joint model
\begin{align}\label{eq:opc_model}
    \opcmodel_\iteration(\state_{\ti + 1} \mid \state_\ti, \action_\ti, \batch)
     & = \undertext{$\datamodel_\iteration(\state_{\ti + 1} \mid \ti, \batch)$}{
        \delta\big(\state_{\ti+1} - \refstate^{\iteration,\batch}_{\ti+1} \big)
    }
    * \,\,\,
    \undertext{Model mean correction to generalize to $\state_\ti$, $\action_\ti$}{
        \delta\big( \state_{\ti + 1} - \big[ \tilde{f}_\iteration(\state_\ti, \action_\ti)  - \tilde{f}_\iteration(\refstate^{\iteration,\batch}_\ti, \refaction^{\iteration,\batch}_\ti ) \big] \big),
    }
\end{align}
where $*$ denotes the convolution of the two distributions and $\batch$ refers to a specific rollout stored in the replay buffer that was observed in the true environment. Given a trajectory-index $\batch$, $\opcmodel_\iteration$ in \cref{eq:opc_model} transitions deterministically according to $\state_{\ti + 1} = \refstate^{\iteration,\batch}_{\ti+1} + \tilde{f}_\iteration(\state_\ti, \action_\ti) - \tilde{f}_\iteration(\refstate^{\iteration,\batch}_\ti, \refaction^{\iteration,\batch}_\ti)$, resembling the equations in \gls{ilc} (c.f., \citet{Baumgaertner2020NonlinearILC} and \cref{app:connection_rl_ilc}). If we roll out $\opcmodel_\iteration$ along a trajectory, starting from a state $\refstate_\ti^{\iteration,\batch}$ and apply the recorded actions from the replay buffer, $\refaction_\ti^{\iteration,\batch}$, the correction term on the right of \cref{eq:opc_model} cancels out and we have $\opcmodel_\iteration(\state_{\ti + 1} \mid \refstate^{\iteration,\batch}_\ti, \refaction^{\iteration,\batch}_\ti, \batch) = \datamodel_\iteration(\state_{\ti + 1} \mid \ti, \batch) = \delta(\state_{\ti + 1} - \refstate^{\iteration,\batch}_{\ti + 1})$. Thus \ourmethod retrieves the true on-policy data distribution independent of the prediction quality of the model, which is why we refer to this method as \emph{on-policy corrections}~(\ourmethod). This behavior is illustrated in \cref{fig:state_propagation}, where the model (blue) is biased on-policy, but \ourmethod corrects the model's prediction to match the true data. In \cref{fig:state_distribution}, we show how this affects predicted rollouts on a simple stochastic double-integrator environment. Although small on-policy errors in \learnedmodel (blue) compound over time, the corresponding \opcmodel matches the ground-truth environment data closely. Note that even though the model in \cref{eq:opc_model} is deterministic, we retain the environment's stochasticity from the data in the transitions to $\refstate_{\ti+1}$, so that we recover the on-policy aleatoric uncertainty (noise) from sampling different reference trajectories via indexes~$\batch$.

\begin{figure}
\centering

\newcommand{\tzero}{0}
\newcommand{\tone}{1}
\newcommand{\ttwo}{2}

\begin{subfigure}[b]{.49\linewidth}
    \begin{tikzpicture}
    \scriptsize



    \node[circle,draw=black, fill=black, inner sep=0pt,minimum size=5pt, label=above:{$\refstate_{\tzero}^{\batch} = {\color{tableauC2} \state_0}$}] (shat0) at (0.0, 2.0) {};
    \node[circle,draw=black, fill=black, inner sep=0pt,minimum size=5pt, 
    label=left:{$\refstate_{\tone}^{\batch} = \color{black} \opcmodel( \state_{\tone} | \refstate_{\tzero}^{\batch}, \refaction_{\tzero}^\batch, \batch)$}] (shat1) at (2.5, 0.0) {};
    \draw [-{Latex[length=2mm]}, dashed] (shat0) -- (shat1);
    
    \node[circle,draw=tableauC0, fill=tableauC0, inner sep=0pt,minimum size=5pt, label=below:{\color{tableauC0}$\learnedmodel(\state_{\tone} | \refstate_{\tzero}^{\batch}, \refaction^\batch_{\tzero})$}] (smean1) at (2.3, 2.0) {};
    \fill[rotate around={70:(smean1)},tableauC0, fill=tableauC0, fill opacity=0.1] (smean1) ellipse (0.4cm and 0.9cm);
    \draw [-{Latex[length=2mm]}, tableauC0] (shat0) -- (smean1);
    
    \node[circle,draw=tableauC3, fill=tableauC3, inner sep=0pt,minimum size=5pt, label=above left:{\color{tableauC3}$\learnedmodel(\state_{\tone} | \state_{\tzero}, \action_{\tzero})$}] (smodel1) at (2.5, 2.9) {};
    \fill[rotate around={80:(smodel1)},tableauC3, fill=tableauC3, fill opacity=0.1] (smodel1) ellipse (0.5cm and 1cm);
    \draw [-{Latex[length=2mm]}, tableauC3] (shat0) -- (smodel1);

    \node[circle,draw=tableauC2, fill=tableauC2, inner sep=0pt,minimum size=5pt, label=below left:{\color{tableauC2}$\opcmodel(\state_{\tone} | \state_{\tzero}, \action_{\tzero},\batch)$}] (sopc1) at (2.7, 0.9) {};
    \draw [-{Latex[length=2mm]}, tableauC2] (shat0) -- (sopc1);
    \draw [-{Latex[length=2mm]}, gray] (shat1) -- (sopc1);
    \draw [-{Latex[length=2mm]}, gray] (smean1) -- (smodel1);


    \node[circle,draw=black, fill=black, inner sep=0pt,minimum size=5pt, 
    label=above:{$\refstate_{\ttwo}^{\batch} = \color{black} \opcmodel( \state_{\ttwo} | \refstate_{\tone}^{\batch}, \refaction_{\tone}^\batch, \batch) 
        $}] (shat2) at (5.0, 2.0) {};
    \draw [-{Latex[length=2mm]}, dashed] (shat1) -- (shat2);

    \node[circle,draw=tableauC0, fill=tableauC0, inner sep=0pt,minimum size=5pt, label=above left:{\color{tableauC0}$\learnedmodel(\state_{\ttwo} | \refstate_{\tone}^{\batch}, \refaction^\batch_{\tone})$}] (smean2) at (5.7, 0.2) {};
    \fill[rotate around={-70:(smean2)},tableauC0, fill=tableauC0, fill opacity=0.1] (smean2) ellipse (0.4cm and 0.9cm);
    \draw [-{Latex[length=2mm]}, tableauC0] (shat1) -- (smean2);

    \node[circle,draw=tableauC3, fill=tableauC3, inner sep=0pt,minimum size=5pt, label=above:{\color{tableauC3}$\learnedmodel(\state_{\ttwo} | \state_{\tone}, \action_{\tone})$}] (smodel2) at (5.5, 1.2) {};
    \fill[rotate around={-80:(smodel2)},tableauC3, fill=tableauC3, fill opacity=0.1] (smodel2) ellipse (0.5cm and 1cm);
    \draw [-{Latex[length=2mm]}, tableauC3] (sopc1) -- (smodel2);
    
    \node[circle,draw=tableauC2, fill=tableauC2, inner sep=0pt,minimum size=5pt, label=above:{\color{tableauC2}$\opcmodel(\state_{\ttwo} | \state_{\tone}, \action_{\tone}, \batch)$}] (sopc2) at (4.8, 3.0) {};
    \draw [-{Latex[length=2mm]}, tableauC2] (sopc1) -- (sopc2);
    \draw [-{Latex[length=2mm]}, gray] (shat2) -- (sopc2);
    \draw [-{Latex[length=2mm]}, gray] (smean2) -- (smodel2);

\end{tikzpicture}
\caption{Multi-step predictions with \ourmethod for a given $\iteration$.}
\label{fig:state_propagation}
\end{subfigure} %
\hfill
\begin{subfigure}[b]{.49\linewidth}
    \includegraphics[scale=1]{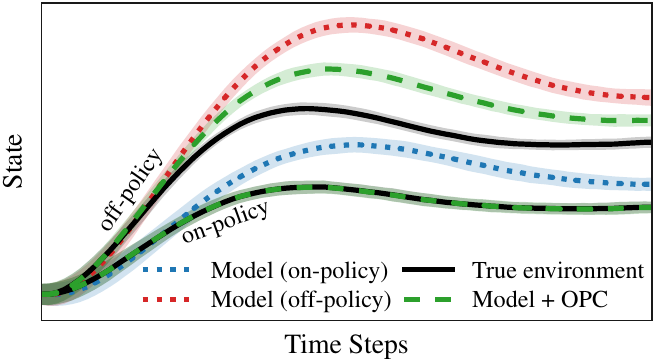}
    \caption{State distribution over time.}
    \label{fig:state_distribution}
\end{subfigure}
\caption{
    Illustration to compare predictions of the three models \cref{eq:replay_buffer_trajectory,eq:model_learning,eq:opc_model} starting from the same state $\refstate_\tzero^\batch$. In \cref{fig:state_propagation}, we see that on-policy, i.e., using actions $(\refaction^\batch_\tzero, \refaction^\batch_\tone)$, $\datamodel$ returns environment data, while $\learnedmodel$ (blue) is biased. We correct this on-policy bias in expectation to obtain $\opcmodel$. This allows us to retain the true state distribution when predicting with these models recursively (c.f., bottom three lines in \cref{fig:state_distribution}). 
    When using \ourmethod for off-policy actions ($\action_\tzero, \action_\tone$), $\opcmodel$ does not recover the true off-policy state distribution since it relies on the biased model.
    However, the corrections generalize locally and reduce prediction errors in \cref{fig:state_distribution} (top three lines).
}
\end{figure}

When our actions $\action_\ti$ are different from $\refaction^{\batch}_\ti$, \opcmodel still uses the data from the environment's transitions, but the correction term in \cref{eq:opc_model} uses the learned model to predict how the next state \emph{changes} in expectation relative to the prediction under $\refaction^{\batch}_\ti$. That is, in \cref{fig:state_propagation} for a new $\action_\ti$ the model predicts the state distribution shown in red. Correspondingly, we shift the static prediction $\refstate_{\ti + 1}^{\batch}$ by the difference in means (gray arrow) between the two predictions; i.e., the \emph{change in trajectory} by changing from $\refaction_\ti^{\batch}$ to $\action_\ti$. Since we shift the model predictions by a time-dependent but constant offset, this does not recover the true state distribution unless the model has zero error. However, empirically it can still help with long-term predictions in \cref{fig:state_distribution} by shifting the model off-policy predictions (red) to the \ourmethod predictions (green), which are closer to the environment's state distribution under the new policy.

\subsection{Theoretical Analysis}
\label{sec:theoretical_analysis}

In the previous sections, we have introduced \ourmethod to decrease the on-policy model error in~\cref{eq:improvement_bound} and tighten the improvement bound.
In this section, we analyze the on-policy performance gap from a theoretical perspective and show that with \ourmethod this error can be reduced independently of the learned model's error.
To this end, we assume infinitely many on-policy reference trajectories, $\batchsize \to \infty$, which is equivalent to a variant of $\opcmodel$ that considers $\refstate_{\ti+1}^{\batch}$ as a random variable that follows the true environment's transition dynamics. While impossible to implement in practice, this formulation is useful to understand our method.
We define the generalized \ourmethod-model as
%
\begin{align}
\label{eq:generalized_opc_model}
    \opcintmodel(\state_{\ti+1} \mid \state_\ti, \action_\ti, \batch) = \undertext{Environment on-policy transition}{\vphantom{\Big(}
            p( \refstate_{\ti+1} \mid \refstate_\ti^\batch, \refaction_\ti^\batch)
        }
        *\quad \undertext{\ourmethod correction term}{
            \delta \Big( \state_{\ti + 1} - \Big[ \tilde{f}(\state_\ti, \action_\ti) - \tilde{f}(\refstate_\ti^\batch, \refaction_\ti^\batch ) \Big] \Big)
        },
\end{align}
which highlights that it transitions according to the true on-policy dynamics conditioned on data from the replay buffer, combined with a correction term. We provide a detailed derivation for the generalized model in \cref{app:proofs}, \cref{lem:opc_distribution_general}.
With \cref{eq:generalized_opc_model}, we have the following result:

\begin{restatable}{theorem}{onpolicyerrorbound}
\label{thm:on_policy_error_bound_full}
Let $\opcreturn$ and $\return{}$ be the expected return under the generalized \ourmethod-model \cref{eq:generalized_opc_model} and the true environment, respectively.
Assume that the learned model's mean transition function $\tilde{f}(\state_\ti, \action_\ti) = \E[\learnedmodel(\state_{\ti+1} \mid \state_\ti, \action_\ti)]$ is $L_f$-Lipschitz and the reward $r(\state_\ti, \action_\ti)$ is $L_r$-Lipschitz. Further, if the policy $\pi(\action_\ti \mid \state_\ti)$ is $L_{\pi}$-Lipschitz with respect to $\state_\ti$ under the Wasserstein distance and its (co-)variance $\var[\pi(\action_\ti \mid \state_\ti)] = \Sigma_\pi(\state_\ti) \in \mathbb{S}_+^{\nact}$ is finite over the complete state space, i.e., $\max_{\state_\ti \in \statespace} \trace \{ \Sigma_\pi(\state_\ti) \} \leq \bar{\sigma}^2_\pi$, then with $C_1 = \sqrt{2 (1 + L_\pi^2)} L_f L_r$ and $C_2 = \sqrt{L_f^2 + L_\pi^2}$
\begin{align}
\label{eq:on_policy_error_bound_full}
    | \return{} - \opcreturn | &\leq \frac{\bar{\sigma}_\pi}{1 - \gamma}  \nact^{\frac{1}{4}} C_1  C_2^{\Ti} \sqrt{\Ti} .
\end{align}
\end{restatable}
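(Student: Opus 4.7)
The plan is to bound $|\return{} - \opcreturn|$ by coupling the true and OPC trajectories so that the return gap reduces to an expected state-distance satisfying a tractable linear recursion. First I would apply $L_r$-Lipschitzness of $r$ together with Jensen's inequality: under any valid joint coupling of $(s_t,a_t)$ and $(s_t^{\mathrm{opc}},a_t^{\mathrm{opc}})$,
\[
  |\return{} - \opcreturn| \;\leq\; L_r \sum_{t=0}^{T} \gamma^t \sqrt{\delta_t + \E\|a_t - a_t^{\mathrm{opc}}\|^2},
\]
with $\delta_t := \E\|s_t - s_t^{\mathrm{opc}}\|^2$, so that everything reduces to controlling the state- and action-gap at each step.

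The key step is the choice of coupling. Observe that in \cref{eq:generalized_opc_model} the reference trajectory $(\bar{s}_t,\bar{a}_t)$ is by construction drawn from the on-policy law of the true environment, hence has the same distribution as $(s_t,a_t)$. I would therefore identify them directly: set $\bar{s}_t \equiv s_t$, $\bar{a}_t \equiv a_t$, $\bar{s}_{t+1} \equiv s_{t+1}$, and couple initial states $s_0 = s_0^{\mathrm{opc}}$. Under this identification the OPC update telescopes to the clean form
\[
  s_{t+1}^{\mathrm{opc}} - s_{t+1} \;=\; \tilde{f}(s_t^{\mathrm{opc}}, a_t^{\mathrm{opc}}) - \tilde{f}(s_t, a_t),
\]
while the fresh OPC action noises $a_t^{\mathrm{opc}} \sim \pi(\cdot\mid s_t^{\mathrm{opc}})$ remain independent of the true trajectory conditional on $s_t^{\mathrm{opc}}$.

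Then I would derive the central recursion. $L_f$-Lipschitzness of $\tilde{f}$ on the joint $(s,a)$-space yields $\delta_{t+1} \leq L_f^2 (\delta_t + \E\|a_t - a_t^{\mathrm{opc}}\|^2)$. To bound the action gap, I would use the conditional independence of the two action samples to decompose
\[
  \E\|a_t - a_t^{\mathrm{opc}}\|^2 \;=\; \E\trace\Sigma_\pi(s_t) + \E\trace\Sigma_\pi(s_t^{\mathrm{opc}}) + \E\|\mu_\pi(s_t) - \mu_\pi(s_t^{\mathrm{opc}})\|^2 \;\leq\; 2\bar{\sigma}_\pi^2 + L_\pi^2 \delta_t,
\]
where $\mu_\pi(s) := \E[\pi(\cdot\mid s)]$ and the mean-map is $L_\pi$-Lipschitz because $\|\mu_\pi(s)-\mu_\pi(s')\| \leq W_2(\pi(\cdot\mid s),\pi(\cdot\mid s')) \leq L_\pi\|s-s'\|$. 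This produces an affine recursion $\delta_{t+1} \leq \alpha \delta_t + 2L_f^2\bar{\sigma}_\pi^2$ whose geometric ratio $\alpha$ agrees with $C_2^2$ up to the chosen norm convention; iterating with $\delta_0 = 0$ gives $\delta_t = O(L_f^2\bar{\sigma}_\pi^2 \cdot t \cdot C_2^{2t})$.

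Finally I would plug this back into the first display, control the sum by taking the worst-case $t = T$ inside the square root (producing the $\sqrt{T}\,C_2^T$ factor) and using $\sum_t \gamma^t \leq 1/(1-\gamma)$; the Lipschitz and variance constants then consolidate into $C_1 = \sqrt{2(1+L_\pi^2)}\,L_f L_r$. The $\nact^{1/4}$ prefactor should emerge from a H\"older/moment-conversion step when passing from the $L^2$-bound on the action gap to the $L^1$-quantity $\E\|a_t - a_t^{\mathrm{opc}}\|$ over an $\nact$-dimensional action vector under only a trace bound on $\Sigma_\pi$. \emph{The main obstacle} is verifying that the trajectory-level identification in step two preserves the marginal law of $s_t^{\mathrm{opc}}$ produced by iterating $\opcintmodel$, which holds because the dependence of $s_t^{\mathrm{opc}}$ on the reference history and on the (independent) OPC action noises is unchanged. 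A secondary subtlety is keeping the fresh-sampling variance $2\bar{\sigma}_\pi^2$ as an additive forcing term in the recursion rather than absorbing it into the multiplicative rate, which is essential for $\bar{\sigma}_\pi$ --- and not a trajectory-accumulated quantity --- to appear in the leading coefficient of the final bound.
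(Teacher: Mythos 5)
Your proposal follows the same overall architecture as the paper's proof: reduce the return gap to a discounted sum of per-time-step transport costs via the Lipschitz reward (the paper's \cref{lem:return_bound_by_stateaction_wasserstein}, which goes through Kantorovich--Rubinstein and $\W_1 \leq \W_2$), couple the OPC rollout to the true rollout by identifying the reference trajectory with the true on-policy trajectory so that the state error telescopes to $\tilde{f}(\pstate_\ti, \paction_\ti) - \tilde{f}(\refstate_\ti, \refaction_\ti)$, and close an affine recursion for the mean-squared state gap whose forcing term $2 L_f^2 \bar{\sigma}_\pi^2$ comes from the freshly sampled actions (the paper's \cref{lem:multi_step_wasserstein_bound}). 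Where you genuinely diverge is the action-gap estimate: you exploit the conditional independence of the two action samples to write $\E \| \refaction_\ti - \paction_\ti \|^2$ exactly as the squared mean difference plus the two covariance traces, and then control the mean difference using the fact that the mean map is $1$-Lipschitz under $\W_2$. The paper instead detours through the Gelbrich bound (\cref{thm:gelbrich_bound}, \cref{cor:avg_sq_euclidean_distance_bound}) and bounds the resulting cross-covariance trace by $\sqrt{\nact} \, \bar{\sigma}_\pi^2$ in \cref{lem:odd_covariance_term}; that trace estimate is the sole source of the $\nact^{1/4}$ in the theorem. Your decomposition makes that step unnecessary and yields $2 \bar{\sigma}_\pi^2$ in place of $2 \sqrt{\nact} \, \bar{\sigma}_\pi^2$, i.e.\ a strictly tighter bound from which the stated one follows since $\nact^{1/4} \geq 1$ --- so your uncertainty about where $\nact^{1/4}$ ``should emerge'' is well-founded: on your route it does not appear, and no extra H\"older step is needed or appropriate. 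One constant-level caveat: your honest recursion ratio is $L_f^2 (1 + L_\pi^2)$ rather than $C_2^2 = L_f^2 + L_\pi^2$, and the two coincide only when $L_f = 1$ (and your ratio implies the stated one only when $L_f \leq 1$); however, the paper's own inductive step writes $(L_f^2 + L_\pi^2)$ after multiplying the action-gap bound by $L_f^2$, which carries the same discrepancy, so this is inherited from the source rather than introduced by you. Likewise, your $\sqrt{1 + L_\pi^2}$ in $C_1$ arises from folding the action gap into the per-step cost, whereas the paper obtains it separately via \cref{lem:wasserstein_stateaction_dist}; both are sound.
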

We provide a proof in \cref{app:proofs_opc}.
From \cref{thm:on_policy_error_bound_full}, we can observe the key property of \ourmethod: 
for deterministic policies, the on-policy model error from \cref{eq:improvement_bound} is zero and independent of the learned models' predictive distribution \learnedmodel, so that $\return{} = \opcreturn$.
For policies with non-zero variance, the bound scales exponentially with $\Ti$, highlighting the problem of compounding errors. In this case, as in the off-policy case, the model quality determines how well we can generalize to different actions. We show in \cref{ap:opc_model_error_analysis} that, for one-step predictions, \ourmethod's prediction error scales as the minimum of policy variance and model error. To further alleviate the issue of compounding errors, one could extend \cref{thm:on_policy_error_bound_full} with a branched rollout scheme similarly to the results by \citet{janner2019trust}, such that the rollouts are only of length $\horizon \ll \Ti$.

In practice, \opcintmodel cannot be realized as it requires the true (unknown) state transition model~$p$.
However, as we use more on-policy reference trajectories for \opcmodel in \cref{eq:opc_model}, it also converges to zero on-policy error in probability for deterministic policies.
%
\begin{lemma}
    \label{lem:on_policy_model_error}
    Let $M$ be a \gls{mdp} with dynamics $p(\state_{\ti + 1} \mid \state_\ti, \action_\ti)$ and reward $r < r_{\mathrm{max}}$. Let $\tilde{M}$ be another \gls{mdp} with dynamics $\learnedmodel \neq p$.
    Assume a deterministic policy $\pi: \statespace \mapsto \actionspace$ and a set of trajectories  $\data = \bigcup_{b=1}^{B} \{ ( \refstate^\batch_{\ti}, \refaction^\batch_\ti, \refstate^\batch_{\ti+1} ) \}_{\ti=0}^{\Ti - 1}$ collected from $M$ under $\pi$.
    If we use \ourmethod \cref{eq:opc_model} with data $\data$, then
    \begin{align}
        \lim_{\batchsize \rightarrow \infty} \operatorname{Pr}\left( | \return{} - \modelreturn{}^{\mathrm{opc}} | > \varepsilon \right) = 0 \quad \forall \varepsilon > 0 \quad \text{with convergence rate}  \quad \bigO(1 / \batchsize).
    \end{align}
\end{lemma}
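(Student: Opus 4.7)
The idea is that, for a deterministic policy and on-policy data, the OPC rollout reduces to a verbatim replay of the recorded trajectories, so that $\modelreturn{}^{\mathrm{opc}}$ becomes a plain Monte Carlo average of $B$ i.i.d.\ environment returns, to which elementary concentration applies.

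First I would unfold the OPC transition from \eqref{eq:opc_model}: convolving the two Dirac deltas gives the deterministic update
\[
    s_{\ti+1} \;=\; \refstate_{\ti+1}^{\batch} \;+\; \tilde{f}(s_\ti, a_\ti) \;-\; \tilde{f}(\refstate_\ti^{\batch}, \refaction_\ti^{\batch}).
\]
Conditioning on index $\batch$ and starting from $s_0 = \refstate_0^{\batch}$ (from the initial-state factor in \eqref{eq:replay_buffer_trajectory}), an induction on $\ti$ shows that $s_\ti = \refstate_\ti^{\batch}$ for all $\ti$: since $\pi$ is deterministic and the data were generated under $\pi$, we have $\refaction_\ti^{\batch} = \pi(\refstate_\ti^{\batch})$, so at each step $a_\ti = \pi(s_\ti) = \pi(\refstate_\ti^{\batch}) = \refaction_\ti^{\batch}$, and the OPC correction term cancels identically. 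Consequently, the return generated by the OPC rollout with index $\batch$ equals the recorded Monte Carlo return $\hat{\eta}^{\batch} := \sum_{\ti=0}^{\Ti-1} \gamma^\ti r(\refstate_\ti^{\batch}, \refaction_\ti^{\batch})$, regardless of the learned model $\tilde{p}$.

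Second, because the trajectory index is drawn uniformly, $\modelreturn{}^{\mathrm{opc}} = \tfrac{1}{B} \sum_{\batch=1}^{B} \hat{\eta}^{\batch}$. The $B$ trajectories being i.i.d.\ samples from $M$ under $\pi$, the $\hat{\eta}^{\batch}$ are i.i.d.\ with mean exactly $\return{}$. The reward bound $r < r_{\max}$ gives $|\hat{\eta}^{\batch}| \leq r_{\max}/(1-\gamma)$, hence a finite variance $\sigma^2$, and Chebyshev's inequality yields
\[
    \operatorname{Pr}\!\left( |\return{} - \modelreturn{}^{\mathrm{opc}}| > \varepsilon \right) \;\leq\; \frac{\sigma^2}{B\,\varepsilon^2} \;=\; \bigO(1/B),
\]
which gives both the claimed convergence in probability and the rate. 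There is no real obstacle: the whole argument is essentially algebraic, and the only point worth double-checking is that $\refaction_\ti^{\batch} = \pi(\refstate_\ti^{\batch})$ pointwise, which is immediate from the deterministic on-policy data-collection procedure in \cref{alg:mbrl:rollout}. Once this equality is in place, the cancellation of the correction term is automatic and the rest is textbook Monte Carlo concentration.
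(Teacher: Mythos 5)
Your proof is correct and follows essentially the same route as the paper: an induction showing that, under a deterministic on-policy rollout, the OPC correction cancels so the model reproduces the replay buffer exactly (the paper's Lemma~\ref{lem:data_opc_equivalence}), followed by a concentration argument for the resulting Monte Carlo average (the paper's Lemma~\ref{lem:infinite_data_return}). If anything, your explicit use of Chebyshev's inequality justifies the claimed $\bigO(1/\batchsize)$ rate more directly than the paper's appeal to the weak law of large numbers.
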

We provide a proof in \cref{app:proofs_opc}. \Cref{lem:on_policy_model_error} states that given sufficient reference on-policy data, the performance gap due to model errors becomes arbitrarily small for \emph{any} model \learnedmodel when using \ourmethod.
While the assumption of infinite on-policy data in \cref{lem:on_policy_model_error} is unrealistic for practical applications, we found empirically that \ourmethod drastically reduces the on-policy model error even when the assumptions are violated.
In our implementation, we use stochastic policies as well as trajectories from previous policies, i.e., off-policy data, for the corrections in \ourmethod (see also \cref{sec:exp_continuous_control}).

\subsection{Discussion}
\label{sec:discussion}
\label{sec:limitations}

\paragraph{Epistemic uncertainty}
So far, we have only considered aleatoric uncertainty (noise) in our transition models. In practice, modern methods additionally distinguish epistemic uncertainty that arises from having seen limited data \citep{Deisenroth2011PILCO,Chua2018pets}. This leads to a distribution (or an ensemble) of models, where each sample could explain the data. In this setting, we apply $\ourmethod$ by correcting \emph{each sample individually}. This allows us to retain epistemic uncertainty estimates after applying \ourmethod, while the epistemic uncertainty is zero on-policy.

\paragraph{Limitations} Since \ourmethod uses on-policy data, it is inherently limited to local policy optimization where the policy changes slowly over time. As a consequence, it is not suitable for global exploration schemes like the one proposed by \citet{Curi2020hucrl}. Similarly, since \ourmethod uses the observed data and corrects it only with the expected learned model, $\opcmodel$ always uses the on-policy transition noise (aleatoric uncertainty) from the data, even if the model has learned to represent it. While not having to learn a representation for aleatoric uncertainty can be a strength, it limits our approach to environments where the aleatoric uncertainty does not vary significantly with states/actions. It is possible to extend the method to the heteroscedastic noise setting under additional assumptions that enable distinguishing model error from transition noise \citep{schoellig2009optimization}. Lastly, our method applies directly only to \glspl{mdp}, since we rely on state-observations. Extending the ideas to partially observed environments is an interesting direction for future research.

\section{Experimental Results}
\label{sec:experiments}
We begin the experimental section with a motivating example on a toy problem to highlight the impact of \ourmethod on the policy gradient estimates in the presence of model errors. 
The remainder of the section focuses on comparative evaluations and ablation studies on complex continuous control tasks.

\subsection{Illustrative Example} 
\label{sec:motivating_example}
In \cref{sec:theoretical_analysis}, we investigate the influence of the model error directly on the expected return from a theoretical perspective.
From a practical standpoint, another relevant question is how the policy optimization and the respective policy gradients are influenced by model errors.
For general environments and reward signals, this question is difficult to answer, due to the typically high-dimensional state/action spaces and large number of parameters governing the dynamics model as well as the policy.
To shed light on this open question, we resort to a simple low-dimensional example and investigate how \ourmethod improves the gradient estimates under a misspecified dynamics model.

In particular, we assume a one-dimensional deterministic environment with linear transitions and a linear policy.
The benefits of this example are that we can 1) compute the true policy gradient based on a single rollout (determinism), 2) determine the environment's closed-loop stability under the policy (linearity), and 3) visualize the gradient error as a function of all relevant parameters (low dimensionality).
The dynamics and initial state distribution are specified by $ p(\state_{\ti + 1} \mid \state_\ti, \action_\ti) = \delta(\A \state_\ti + \B \action_\ti \mid \state_\ti, \action_\ti)$ with $ \rho(\state_0) = \delta(\state_0)$
where $\A,\B \in \R$ and $\delta(\cdot)$ denotes the Dirac-delta distribution.
We define a deterministic linear policy $\pi_\theta(\action_\ti \mid \state_\ti) = \delta(\theta \state_\ti \mid \state_\ti)$ that is parameterized by the scalar $\theta \in \R$. The objective is to drive the state to zero, which we encode with an exponential reward $r(\state_\ti, \action_\ti) = \exp\left\{ -(\state_\ti / \sigma_r)^2 \right\}$.
Further, we assume an approximate dynamics model 
\begin{align}
\label{eq:ex1d_approx_system}
    \tilde{p}(\state_{\ti + 1} \mid \state_\ti, \action_\ti) = \delta((\A + \Delta \A) \state_\ti + (\B + \Delta \B) \action_\ti \mid \state_\ti, \action_\ti),
\end{align}
where $\Delta \A, \Delta \B$ quantify the mismatch between the approximate model and the true environment.
In practice, the mismatch can arise due to noise-corrupted observations of the true state or, in the case of stochastic environments, due to a finite amount of training data.

\begin{figure*}[t]
    \centering
    \includegraphics[width=0.95\linewidth, trim=0 0 0 0, clip]{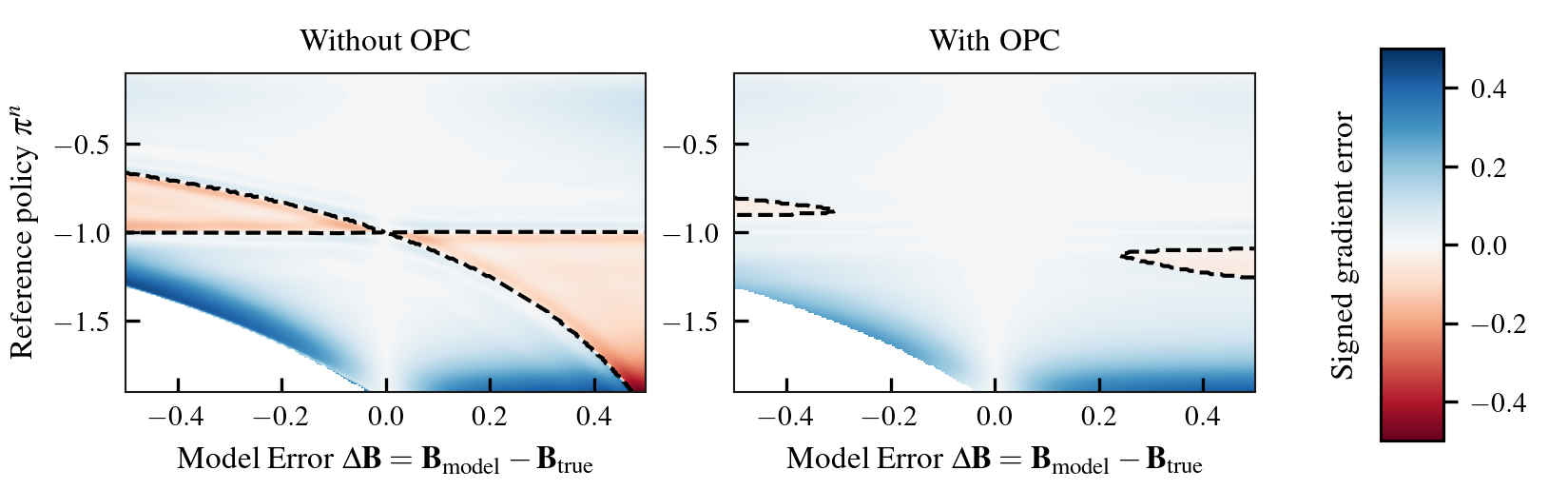}
    \caption{Signed gradient error when using inaccurate models \cref{eq:ex1d_approx_system} to estimate the policy gradient without (left) and with (right) \ourmethod.
        The background's opacity depicts the error's magnitude, whereas color denotes if the sign of estimated and true gradient differ (red) or coincide (blue).
        \ourmethod improves the gradient estimate in the presence of model errors.
        Note that the optimal policy is $\theta^* = -1.0$.
    }
    \label{fig:ex1d_gradient_errors}
\end{figure*}

With the setting outlined above, we investigate how model errors influence the estimation of policy gradients.
To this end, we roll out different policies under models with varying degrees of error $\Delta \B$.
For each policy/model combination, we compute the model-based policy gradient and compare it to the true gradient.
The results are summarized in \cref{fig:ex1d_gradient_errors}, where the background's opacity depicts the gradient error's magnitude and its color indicates whether the respective signs of the gradients are the same (blue, $\geq0$) or differ (red, $<0$).
For policy optimization, the sign of the gradient estimate is paramount.
However, we see in the left-hand image that even small model errors can lead to the wrong sign of the gradient.
\ourmethod significantly reduces the magnitude of the gradient error and increases the robustness towards model errors.
See also \cref{app:motivating_example} for a more in-depth analysis. 

\subsection{Evaluation on Continuous Control Tasks}
\label{sec:exp_continuous_control}
In the following section, we investigate the impact of \ourmethod on a range of continuous control tasks.
To this end, we build upon the current state-of-the-art model-based \gls{rl} algorithm \mbpo \citep{janner2019trust}.
Further, we aim to answer the important question about how data diversity affects \gls{mbrl}, and \ourmethod in particular.
While having a model that can generate (theoretically) an infinite amount of data is intriguing, the benefit of having more data is limited by the model quality in terms of being representative of the true environment.
For \ourmethod, the following questions arise from this consideration:
Do longer rollouts help to generate better data? And is there a limit to the value of simulated transition data, i.e., is more always better?

For the dynamics model $\learnedmodel$, we follow \citet{Chua2018pets,janner2019trust} and use a probabilistic ensemble of neural networks, where each head predicts a Gaussian distribution over the next state and reward. 
For policy optimization, we employ the soft actor critic (\sac) algorithm by \citet{Haarnoja2018SoftActorCritic}.
All learning curves are presented in terms of the median (lines) and interquartile range (shaded region) across ten independent experiments, where we smooth the evaluation return with a moving average filter to accentuate the results of particularly noisy environments.
Apart from small variations in the hyperparameters, the only difference between \ourmethod and \mbpo is that our method uses $\opcmodel$, while \mbpo uses $\learnedmodel$. We provide pseudo-code for the model rollouts of \mbpo and \ourmethod in \cref{alg:opc_rollouts} in \cref{app:opc_rollouts}.
Generally, we found that \ourmethod was more robust to the choice of hyperparameters.
The rollout horizon to generate training data is set to $\horizon=10$ for all experiments.
Note that when using $\datamodel$ to generate data, we retain the standard (model-free) \sac algorithm.

\begin{figure}[t]
\centering
\includegraphics[width=0.95\linewidth]{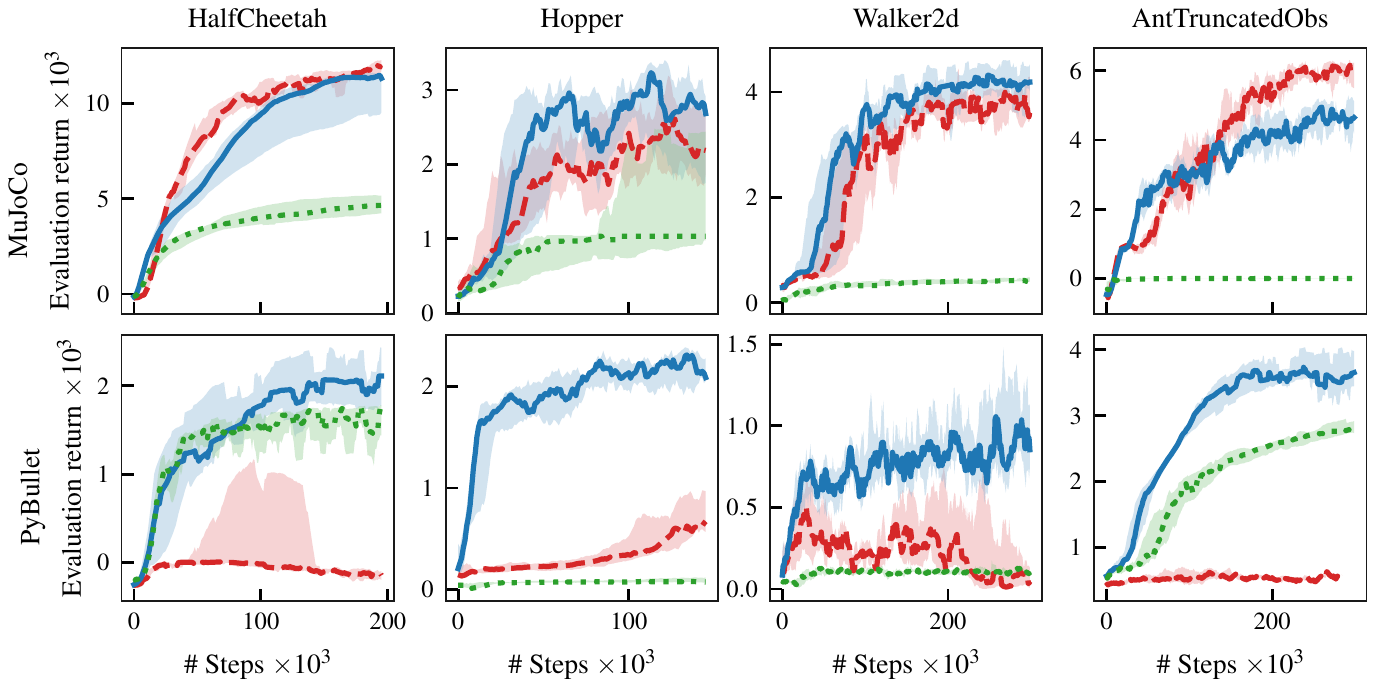}

\caption{Comparison of 
\ourmethod (\protect\opclegendentry), \mbpoepisodic (\protect\mbpolegendentry) and \sac (\protect\saclegendentry)
on four environments from the MuJoCo control suite (top row) and their respective PyBullet implementations (bottom row). 
}
\label{fig:gym_results}
\end{figure}

Our implementation is based upon the code from \citet{janner2019trust}.
We made the following changes to the original implementation: 
1) The policy is only updated at the end of an epoch, not during rollouts on the true environment.
2) The replay buffer retains data for a fixed number of episodes, to clearly distinguish on- and off-policy data.
3) For policy optimization, \mbpo uses a small number of environment transitions in addition to those from the model. We found that this design choice did not consistently improve performance and added another level of complexity. Therefore, we refrain from mixing environment and model transitions and only use simulated data for policy optimization.
While we stay true to the key ideas of \mbpo under these changes, we denote our variant as \mbpoepisodic to avoid ambiguity.
See \cref{app:mbpo_changes,app:evaluation_original_mbpo} for a comparison to the original \mbpo algorithm.

\begin{figure}[t]
\centering
\includegraphics[width=0.95\linewidth]{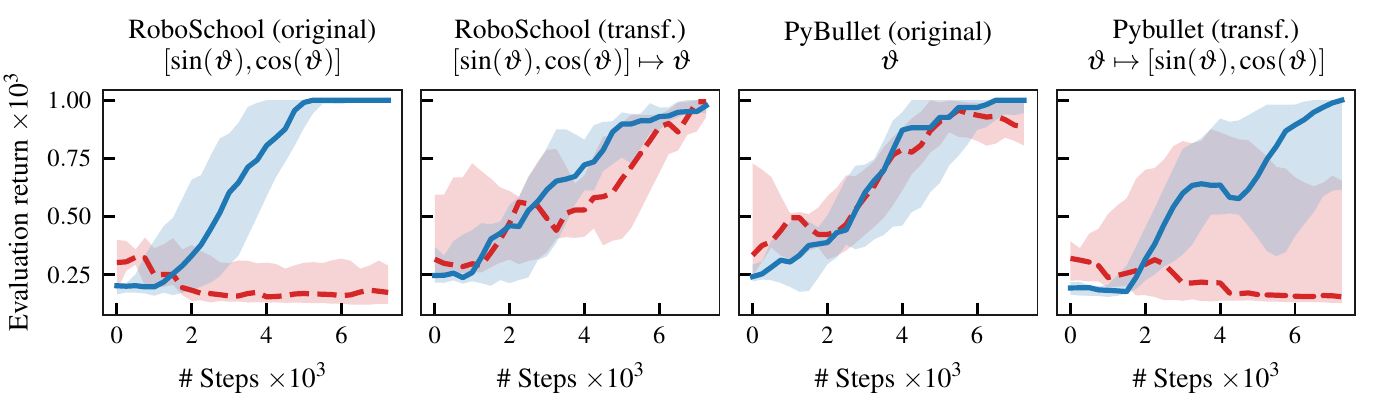}
\caption{Comparison of \ourmethod (\protect\opclegendentry) and \mbpoepisodic (\protect\mbpolegendentry) on different variants of the CartPole environment. When the pole's angle $\vartheta$ is observed directly (center plots), both algorithms successfully learn a policy. With the sine/cosine transformations (outer plots), \mbpoepisodic fails to solve the task.}
\label{fig:pendulum_parameterization}
\end{figure}
\paragraph{Comparative Evaluation}
We begin our analysis with a comparison of our method to \mbpoepisodic and \sac on four continuous control benchmark tasks from the MuJoCo control suite \citep{todorov2012mujoco} and their respective PyBullet variants \citep{benelot2018pybulletgym}.
The results are presented in \cref{fig:gym_results}.
We see that the difference in performance between both methods is only marginal when evaluated on the MuJoCo environments (\cref{fig:gym_results}, top row).
Notably, the situation changes drastically for the PyBullet environments (\cref{fig:gym_results}, bottom row).
Here, \mbpoepisodic exhibits little to no learning progress, whereas \ourmethod succeeds at learning a good policy with few interactions in the environment.
One of the main differences between the environments (apart from the physics engine itself) is that the PyBullet variants have initial state distributions with significantly larger variance.

\paragraph{Influence of State Representation}
In general, the success of \gls{rl} algorithms should be agnostic to the way an environment represents its state.
In robotics, joint angles $\vartheta$ are often re-parameterized by a sine/cosine transformation, $\vartheta \mapsto [\sin(\vartheta), \cos(\vartheta)]$.
We show that even for the simple CartPole environment, the parameterization of the pole's angle has a large influence on the performance of \gls{mbrl}.
In particular, we compare \ourmethod and \mbpoepisodic on the RoboSchool and PyBullet variants of the CartPole environment, which represent the pole's angle with and without the sine/cosine transformation.
The results are shown in \cref{fig:pendulum_parameterization}.
To rule out other effects than the angle's representation, we repeat the experiment for each implementation but transform the state to the other representation, respectively.
Notably, \ourmethod successfully learns a policy irrespective of the state's representation, whereas \mbpoepisodic fails if the angle of the pole is represented by the sine/cosine transformation.

\paragraph{Influence of Data Diversity}
Here, we investigate whether multi-step predictions are in fact beneficial compared to single-step predictions during the data generation process.
To this end, we keep the total number of simulated transitions $N$ for training constant, but choose different horizon lengths $H = \{1, 5, 10\}$.
The corresponding numbers of simulated rollouts are then given by $n_{\text{rollout}} = N / H$.
The results for $N = \{20 , 40 , 100 , 200\} \times 10^3$ on the HalfCheetah environment are shown in \cref{fig:data_ablation}.
First, note that more data leads to a higher asymptotic return, but after a certain point more data only leads to diminishing returns.
Further, the results indicate that one-step predictions are not enough to generate sufficiently diverse data.
Note that this result contradicts the findings by \citet{janner2019trust} that one-step predictions are often sufficient for \mbpo.
\begin{figure}[t]
\centering
\includegraphics[width=0.95\linewidth]{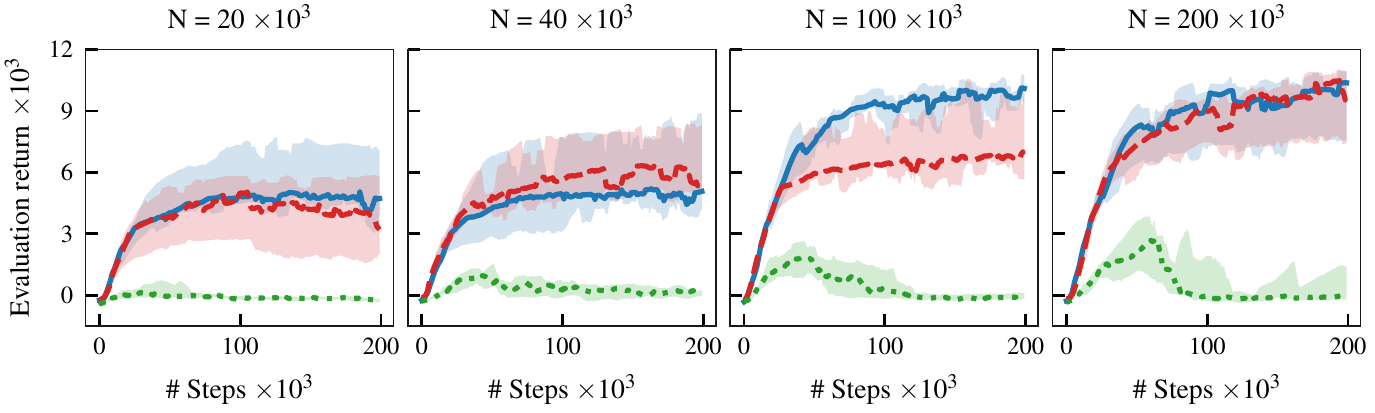}
\caption{Ablation study for \ourmethod on the HalfCheetah environment. In each plot, we fix the number of simulated transitions $N$ and vary the rollout lengths $\horizon = \{1 (\protect\saclegendentry), 5 (\protect\mbpolegendentry), 10 (\protect\opclegendentry)\}$.
}
\label{fig:data_ablation}
\end{figure}

\section{Conclusion}\label{sec:conclusion}

In this paper, we have introduced \emph{on-policy corrections} (\ourmethod), a novel method that combines observed transition data with model-based predictions to mitigate model-bias in \gls{mbrl}.
In particular, we extend a replay buffer with a learned model to account for state-action pairs that have not been observed on the real environment.
This approach enables the generation of more realistic transition data to more closely match the true state distribution, which was further motivated theoretically by a tightened improvement bound on the expected return.
Empirically, we demonstrated superior performance on high-dimensional continuous control tasks as well as robustness towards state representations.


\bibliography{iclr2022_conference}
\bibliographystyle{iclr2022_conference}

\clearpage
\appendix

\part{\LARGE\sc Supplementary Material} 

In the appendix we provide additional details on our method, ablation studies, and the detailed hyperparameter configurations used in the paper. An overview is shown below.
\\
\noptcrule
\parttoc 

\clearpage

\section{Implementation Details and Computational Resources}

\subsection{Detailed Algorithm for Rollouts with \ourmethod}\label{app:opc_rollouts}

In \cref{sec:combining_models,fig:state_propagation}, we have introduced the \ourmethod transition model and how to roll out trajectories with the model.
Here, we will give more details on the algorithmic implementation for the generation of simulated data.
\cref{alg:opc_rollouts} follows the branched rollout scheme from \mbpo \citep{janner2019trust}.
Differences to \mbpo are highlighted in \textcolor{blue}{blue}.

Generally, \ourmethod only requires a deterministic transition function $\tilde{f}$ to compute the corrective term in \cref{alg:opc_rollouts:one_step} in \cref{alg:opc_rollouts}.
For models that include aleatoric uncertainty, we choose $\tilde{f}(\state_\ti, \action_\ti) = \E_{\state_\tis}[\tilde{p}(\state_\tis \mid \state_\ti, \action_\ti)]$.
If, in addition, the model includes epistemic uncertainty, we refer to the comment in \cref{sec:discussion} in the main part of the paper.

In practice, rollouts on the true environment are terminated early if, for instance, a particular state exceeds a user-defined boundary.
Consequently, not all trajectories in the replay buffer are necessarily of length $\Ti$.
Since the prediction in \cref{alg:opc_rollouts:one_step} requires valid transition tuples for the correction term, we additionally check in \cref{alg:opc_rollouts:termination_criterion} whether the next reference state is a terminal state.
Thus, in contrast to \mbpo, for \ourmethod we terminate the inner loop in \cref{alg:opc_rollouts} early if either a simulated or reference state is a terminal state.

\begin{algorithm}[h!]
  \caption{Branched rollout scheme with \ourmethod model (differences to \mbpo highlighted in \textcolor{blue}{blue})}
  \label{alg:opc_rollouts}
  \begin{algorithmic}[1] 
    \Require Required parameters:
    \begin{itemize}
        \item Set of trajectories $\data_\iteration^\batch = \{ ( \refstate^{\iteration,\batch}_{\ti}, \refaction^{\iteration,\batch}_\ti, \refstate^{\iteration,\batch}_{\ti+1} ) \}_{\ti=0}^{\Ti - 1}$ for $\batch \in \{1, \dots, \batchsize \}$
        \item Environment model $\tilde{p}(\state_\tis \mid \state_\ti, \action_\ti)$. \textcolor{blue}{Define $\tilde{f}(\state, \action) = \E_{\state_\tis}[\tilde{p}(\state_\tis \mid \state, \action)]$}.
        \item Policy~$\pi_\theta$
        \item Prediction horizon~$\horizon$
        \item Number of simulated transitions~$N$
    \end{itemize}
    \State $\data^{\mathrm{sim}} \gets \emptyset$: Initialize empty buffer for simulated transitions 
    \While{$| \data^{\mathrm{sim}} | < N$}
    \State $\batch \sim \mathcal{U}\{1, \batchsize\}$: Sample random reference trajectory
    \State $\ti \sim \mathcal{U}\{0, \Ti - \horizon\}$: Sample random starting state
    \State $h \gets 0, \state_\ti \gets \refstate^\batch_\ti$: Initialize starting state
    \While {$h < \horizon$}
    \State $\action_{\ti + h} \sim \pi_\theta(\action \mid \state_{\ti + h})$: Sample action from policy 
    \State \textcolor{blue}{$\state_{\ti + h + 1} \gets \refstate_{\ti + h + 1} + \tilde{f}(\state_{\ti + h}, \action_{\ti + h}) - \tilde{f}(\refstate^\batch_{\ti + h}, \refaction^\batch_{\ti + h})$: Do one-step prediction with \ourmethod}  \label{alg:opc_rollouts:one_step}
    \State $\data^{\mathrm{sim}} \gets \data^{\mathrm{sim}} \cup (\state_{\ti + h}, \action_{\ti + h}, \state_{\ti + h + 1})$: Store new transition tuple
    \If{($\state_{\ti + h + 1}$ is \texttt{terminal}) \textcolor{blue}{ or ($\refstate^\batch_{\ti + h + 1}$ is \texttt{terminal})}} \label{alg:opc_rollouts:termination_criterion}
    \State \texttt{break}
    \EndIf
    \State $h \gets h + 1$: Increase counter
    \EndWhile
    \EndWhile
    \Return $\data^{\mathrm{sim}}$
  \end{algorithmic}
\end{algorithm}

\clearpage
\subsection{Hyperparameter Settings}\label{app:hyperparameters}

Below, we list the most important hyperparameter settings that were used to generate the results in the main paper. 

\begin{table}[htbp]
\renewcommand{\arraystretch}{1.5} 
\caption{Hyperparameter settings for \ourmethod (\textcolor{blue}{blue}) and \mbpoepisodic (\textcolor{red}{red}) for results shown in \cref{fig:gym_results}. Note that the respective hyperparameters for each environment are shared across the different implementations, i.e., MuJoCo and PyBullet.}
\label{tab:hyperparameters}
\begin{tabularx}{\columnwidth}{Y|YYYY}
\toprule
                                                                              & HalfCheetah       & Hopper       & Walker2D & AntTruncatedObs     \\
\midrule
epochs                                                                        & 200                     & 150          & 300 & 300            \\
env steps per epoch                                                           & \multicolumn{4}{c}{1000}                                \\
retain epochs                                                                        & \textcolor{blue}{50} / \textcolor{red}{5}                     & 50          & 50 & 5            \\
\begin{tabular}[c]{@{}c@{}}policy updates\\[-.5em] per epoch\end{tabular} & 40 & 20 & 20 & 20 \\
model horizon                                                                 & \multicolumn{4}{c}{10}                                  \\
\begin{tabular}[c]{@{}c@{}}model rollouts\\[-.5em] per epoch\end{tabular} & \multicolumn{4}{c}{100'000 }                                 \\
mix-in ratio                                                                 & \multicolumn{4}{c}{0.0}         \\
model network                                                                 & \multicolumn{4}{c}{ensemble of 7 with 5 elites}         \\
policy network                                                                & \multicolumn{4}{c}{MLP with 2 hidden layers of size 64} \\
\bottomrule
\end{tabularx}
\end{table}

\FloatBarrier
\subsection{Implementation and Computational Resources}
\label{ap:resources}
Our implementation is based on the code from \mbpo \citep{janner2019trust}, which is open-sourced under the MIT license.
All experiments were run on an HPC cluster, where each individual experiment used one Nvidia V100 GPU and four Intel Xeon CPUs. All experiments (including early debugging and evaluations) amounted to a total of 84`713 hours, which corresponds to roughly 9.7 years if the jobs ran sequentially. Most of this compute was required to ensure reproducibility (ten random seeds per job and ablation studies over the effects of parameters). 
The Bosch Group is carbon-neutral. Administration, manufacturing and research activities do no longer leave a carbon footprint. This also includes GPU clusters on which the experiments have been performed.
\clearpage
\section{Theoretical Analysis of On-Policy Corrections}\label{app:proofs}

In this section, we analyze \ourmethod from a theoretical perspective and how it affects policy improvement.

\paragraph{Notation} In the following, we drop the $\iteration$ superscript for states and actions for ease of exposition. That is, $\refstate^{\iteration,\batch} = \refstate^\batch$ and $\refaction^{\iteration,\batch} = \refaction^\batch$.

\subsection{General Policy Improvement Bound}
\label{ap:policy_improvement_bound}

We begin by deriving inequality \cref{eq:improvement_bound}, which serves as motivation for \ourmethod and is the foundation for the theoretical analysis.
Our goal is to bound the difference in expected return for the policies before and after the policy optimization step, i.e., $\return{\iteration+1} - \return{\iteration}$.
Since we are considering the \gls{mbrl} setting, it is natural to express the improvement bound in terms of the expected return under the model $\modelreturn{}$ and thus obtain the following
\begin{align*}
    \return{\iteration + 1} - \return{\iteration}
     & =
    \return{\iteration + 1} - \return{\iteration} +
    {\color{red}\modelreturn{\iteration + 1}} - {\color{red}\modelreturn{\iteration + 1}} +
    {\color{blue}\modelreturn{\iteration}} - {\color{blue}\modelreturn{\iteration}}
    \\
     & =
    {\color{red}\modelreturn{\iteration + 1}} - {\color{blue}\modelreturn{\iteration}} +
    \return{\iteration + 1} - {\color{red}\modelreturn{\iteration + 1}} +
    {\color{blue}\modelreturn{\iteration}} - \return{\iteration}
    \\
    \understack{\text{True policy}  \\ \text{improvement}}{ \return{\iteration + 1} - \return{\iteration} } & \geq
    \understack{\text{Model policy} \\ \text{improvement}}{ \modelreturn{\iteration + 1} - \modelreturn{\iteration} }
    - \understack{\text{Off-policy} \\ \text{model error}}{ | \return{\iteration + 1} - \modelreturn{\iteration + 1} |}
    - \understack{\text{On-policy}  \\ \text{model error}}{ | \modelreturn{\iteration} - \return{\iteration} |}
    .
\end{align*}
According to this bound, the improvement of the policy under the true environment is governed by the three terms on the LHS:

\begin{itemize}
    \item \emph{Model policy improvement}:
          This term is what we are directly optimizing in \gls{mbrl} offset by the return of the previous iteration $\modelreturn{\iteration}$, which is constant given the current policy $\pi_\iteration$.
          Assuming that we are not at an optimum, standard policy optimization algorithms guarantee that this term is non-negative.
    \item \emph{Off-policy model error}:
          The last term is the difference in return for the true environment and model under the improved policy $\pi_{\iteration + 1}$.
          This depends largely on the generalization properties of our model, since it is not trained on data under $\pi_{\iteration+1}$.
    \item \emph{On-policy model error}:
          This term compares the on-policy return under $\pi_\iteration$ between the true environment and the model and it is zero for any model $\tilde{p} = p$.
          Since we have access to transitions from the true environment under the $\pi_\iteration$, the replay buffer \cref{eq:replay_buffer_trajectory} fulfills this condition under certain circumstances and the on-policy model error vanishes, see \cref{lem:infinite_data_return}.
\end{itemize}

Note that the learned model \cref{eq:model_learning} is able to generalize to unseen state-action pairs better than the replay buffer \cref{eq:replay_buffer_trajectory} and accordingly will achieve a lower off-policy model error.
The motivation behind \ourmethod is therefore to combine the best of the learned model and the replay buffer to reduce both on- and off-policy model errors.

\subsection{Properties of the Replay Buffer}
\label{ap:prop_data_model}
The benefit of the replay buffer \cref{eq:replay_buffer_trajectory} is that it can never introduce any model-bias such that any trajectory sampled from this model is guaranteed to come from the true state distribution.
Accordingly, if we have collected sufficient data under the same policy, the on-policy model error vanishes.

\begin{lemma}\label{lem:infinite_data_return}
    Let $M$ be the true \gls{mdp} with (stochastic) dynamics $p$, bounded reward $r < r_\mathrm{max}$ and let $\datamodel$ be the transition model for the replay buffer \cref{eq:replay_buffer_trajectory}.
    Further, consider a set of trajectories  $\data = \bigcup_{b=1}^{B} \{ ( \refstate^\batch_{\ti}, \refaction^\batch_\ti, \refstate^\batch_{\ti+1} ) \}_{\ti=0}^{\Ti - 1}$ collected from $M$ under policy $\pi$.
    If we collect more and more on-policy training data under the same policy, then
    \begin{align*}
        \lim_{\batchsize \rightarrow \infty} \operatorname{Pr}\left( | \return{} - \modelreturn{}^{\mathrm{replay}} | > \varepsilon \right) = 0 \quad \forall \varepsilon > 0
    \end{align*}
    where $\modelreturn{}^{\mathrm{replay}}$ is the expected return under the replay buffer using the collected trajectories $\data$.
\end{lemma}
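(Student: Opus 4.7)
The plan is to recognize that $\modelreturn{}^{\mathrm{replay}}$ is the empirical mean over $\batchsize$ i.i.d.\ trajectory returns drawn from the true environment, so the claim reduces to a standard law of large numbers argument.

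First, I would unpack the replay-buffer model $\datamodel$ defined in \cref{eq:replay_buffer_trajectory}. Sampling a trajectory from $\datamodel$ amounts to drawing an index $\batch \sim \mathcal{U}\{1,\dots,\batchsize\}$ uniformly and then deterministically replaying the stored reference states $\refstate_\ti^\batch$ and actions $\refaction_\ti^\batch$. Consequently,
\begin{equation*}
\modelreturn{}^{\mathrm{replay}} \;=\; \frac{1}{\batchsize}\sum_{\batch=1}^{\batchsize} R^\batch, \qquad R^\batch \;:=\; \sum_{\ti=0}^{\Ti} \gamma^\ti \, r\bigl(\refstate_\ti^\batch, \refaction_\ti^\batch\bigr).
\end{equation*}
Since each reference trajectory was generated by rolling out the fixed policy $\pi$ on $M$ independently, the trajectory-level returns $\{R^\batch\}_{\batch=1}^\batchsize$ are i.i.d.\ samples from the distribution whose expectation is $\return{}$ (this is precisely the definition in \cref{eq:rl_problem}).

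Second, since $r < r_{\max}$ and the horizon $\Ti$ is finite, each $R^\batch$ is almost-surely bounded by a constant $C$ depending only on $r_{\max}$, $\gamma$ and $\Ti$. This boundedness lets me invoke the weak law of large numbers to conclude $\modelreturn{}^{\mathrm{replay}} \to \return{}$ in probability, which is exactly the statement of the lemma. For a quantitative rate one can instead apply Hoeffding's inequality to obtain
\begin{equation*}
\Pr\bigl( | \return{} - \modelreturn{}^{\mathrm{replay}} | > \varepsilon \bigr) \;\le\; 2\exp\!\bigl( -c\, \batchsize\, \varepsilon^2 \bigr) \;\xrightarrow[\batchsize\to\infty]{}\; 0,
\end{equation*}
for a constant $c > 0$ depending on $C$.

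The only subtlety worth checking is that the ``return under the replay buffer'' really is the empirical average written above --- i.e., that $\datamodel$ replays both states and actions rather than resampling actions from $\pi_\theta$. This is explicit in the text following \cref{eq:replay_buffer_trajectory}, and is consistent with the on-policy setting, since the stored $\refaction_\ti^\batch$ are precisely samples of $\pi(\cdot \mid \refstate_\ti^\batch)$. Beyond this bookkeeping there is no real obstacle: the lemma is essentially a restatement of the weak law of large numbers for i.i.d.\ bounded trajectory returns, so the main work is just identifying the right random variable whose average we are taking.
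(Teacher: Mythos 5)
Your proposal is correct and follows essentially the same route as the paper: both identify $\modelreturn{}^{\mathrm{replay}}$ as the empirical average of i.i.d.\ bounded trajectory returns and invoke the weak law of large numbers. Your version is in fact slightly more careful (you keep the discount factors, make the boundedness explicit, and add a Hoeffding rate), but the underlying argument is identical.
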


\begin{proof}
    First, note that the corresponding expected return for the replay-buffer model is given by
    \begin{align*}
        \modelreturn{}^{\mathrm{replay}} =  \frac{1}{\batchsize}\sum_{\batch=1}^{\batchsize} \sum_{\ti=0}^{\Ti-1} r(\refstate^\batch_\ti, \refaction^\batch_\ti),
    \end{align*}
    which is a sample-based approximation of the true reward $\return{}$.
    By the weak law of large numbers (see e.g., \citet[Theorem 10.2.2]{blitzstein2019introduction}), the Lemma then holds.
\end{proof}

\subsection{Properties of the Learned Model}
\label{ap:prop_learned_model}

Following \citet[Lemma~B.3]{janner2019trust}, a general bound on the performance gap between two \glspl{mdp} with different dynamics can be given by
\begin{align}
    \label{eq:janner_lemmab3}
    |\return{1} - \return{2}|
     & \leq 2 r_{\mathrm{max}} \epsilon_m \sum_{t=1}^{H} t \gamma^t.
\end{align}
where $\epsilon_m \geq \max_\ti \E_{\state \sim p_1^\ti(\state)} \operatorname{KL}(p_1(\state', \action) || p_2(\state', \action)) $ bounds the mismatch between the respective transition models.
Now, the final form of \cref{eq:janner_lemmab3} depends on the horizon length $H$.
For $H \rightarrow \infty$, we obtain the original result form \citet{janner2019trust} with $\sum_{t \geq 1} t \gamma^t = \gamma / (1 - \gamma)^2$.
For the finite horizon case one can obtain tigther bounds when $H$ is smaller than the effective horizon, $H < \gamma / (1 - \gamma)$, encoded in the discount factor:
\begin{align}
    \sum_{t=1}^{H} t \gamma^t \leq \min \Big\{ \frac{H (H+1)}{2}, \frac{H}{1-\gamma}, \frac{\gamma}{(1 - \gamma)^2} \Big\},
\end{align}
which can be verified by upper bounding $\ti \leq \horizon$ to obtain $\horizon / (1 - \gamma)$ or by bounding $\gamma \leq 1$ to obtain $\mathcal{O}(\horizon^2)$.

Note that this bound is vacuous for deterministic policies, since the KL divergence between two distributions with non-overlapping support is infinite. In the following we focus on the Wasserstein metric under the Euclidean distance.

\subsection{Properties of \ourmethod}
\label{app:proofs_opc}

In this section, we analyze the properties of \ourmethod relative to the true, unknown environment's transition distribution $p$ and a learned representation $\tilde{p}$. In general, the \ourmethod-model mixes observed transitions from the environment with the learned model. The resulting transitions are then a combination of the mean transitions from the learned model, the aleatoric noise from the data (the environment), and the mean-error between our learned model and the environment.

\subsubsection{Proof of Lemma 1}

In this section, we prove \cref{lem:on_policy_model_error} by showing that the \ourmethod-model coincides with the replay-buffer \cref{eq:replay_buffer_trajectory} in the case of a deterministic policy and thus lead to the same expected return $\modelreturn{}$.

\begin{lemma}\label{lem:data_opc_equivalence}
    Let $M$ be the true \gls{mdp} with (stochastic) dynamics $p$ and let $\tilde{M}$ be a \gls{mdp} with the same reward function $r$ and initial state distribution $\rho_0$, but different dynamics $\tilde{p}^{\mathrm{model}}$, respectively.
    Further, assume a deterministic policy $\pi: \statespace \mapsto \actionspace$ and a set of trajectories  $\data = \bigcup_{b=1}^{B} \{ ( \refstate^\batch_{\ti}, \refaction^\batch_\ti, \refstate^\batch_{\ti+1} ) \}_{\ti=0}^{\Ti - 1}$ collected from $M$ under $\pi$.
    If we extend the approximate dynamics $\learnedmodel$ by \ourmethod with data $\data$, then
    \begin{align*}
        \modelreturn{}^{\mathrm{replay}} = \return{}^{\mathrm{opc}},
    \end{align*}
    where $\modelreturn{}^{\mathrm{replay}}$ and $\modelreturn{}^{\mathrm{opc}}$ are the model-based returns following models \cref{eq:replay_buffer_trajectory,eq:opc_model}, respectively.
\end{lemma}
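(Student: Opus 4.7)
The plan is to show that, under a deterministic policy, an OPC-rollout seeded with reference trajectory $b$ reproduces that very trajectory exactly, so that $\opcmodel$ and $\datamodel$ induce identical per-trajectory reward sequences and hence identical expected returns. The key observation is that the $b$-indexed correction term in \cref{eq:opc_model} is self-cancelling whenever the rolled-out actions coincide with the recorded reference actions, and determinism of $\pi$ guarantees this coincidence inductively.

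Fix a sampled index $b \sim \mathcal{U}(\{1,\dots,B\})$ and consider a rollout of $\opcmodel$ starting from $\state_0 = \refstate_0^{b}$ (which is how both \cref{eq:replay_buffer_trajectory} and \cref{eq:opc_model} initialise the trajectory). I proceed by induction on $t$. The base case $\state_0 = \refstate_0^{b}$ holds by construction. For the inductive step, suppose $\state_t = \refstate_t^{b}$. Since $\pi$ is deterministic and $\data$ was also collected under $\pi$, we have $\action_t = \pi(\state_t) = \pi(\refstate_t^{b}) = \refaction_t^{b}$. Substituting into the deterministic transition induced by \cref{eq:opc_model},
\begin{align*}
\state_{t+1} \;=\; \refstate_{t+1}^{b} \;+\; \bigl[\tilde{f}(\state_t, \action_t) - \tilde{f}(\refstate_t^{b}, \refaction_t^{b})\bigr] \;=\; \refstate_{t+1}^{b},
\end{align*}
because the bracketed difference vanishes. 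Thus $(\state_t,\action_t) = (\refstate_t^{b}, \refaction_t^{b})$ for every $t \in \{0,\dots,T-1\}$, and in particular the sequence of per-step rewards $r(\state_t,\action_t)$ matches $r(\refstate_t^{b}, \refaction_t^{b})$ identically.

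Taking the expectation over $b$ and the (trivial) transition randomness, the OPC return becomes
\begin{align*}
    \modelreturn{}^{\mathrm{opc}} \;=\; \frac{1}{B}\sum_{b=1}^{B}\sum_{t=0}^{T-1} \gamma^{t}\, r(\refstate_t^{b}, \refaction_t^{b}),
\end{align*}
which is exactly the expression for $\modelreturn{}^{\mathrm{replay}}$ obtained from \cref{eq:replay_buffer_trajectory} (cf.\ the proof of \cref{lem:infinite_data_return}). This yields the claimed equality. There is no real obstacle in the argument; the only subtlety is to notice that determinism of $\pi$ is used twice, once to equate $\pi(\refstate_t^{b})$ with the recorded action $\refaction_t^{b}$ (this would fail for a stochastic $\pi$, since the sampled $\action_t$ would generally differ from $\refaction_t^{b}$ even when $\state_t = \refstate_t^{b}$), and once to ensure the OPC rollout itself is action-deterministic so that the induction carries through pathwise. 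The full statement of \cref{lem:on_policy_model_error} then follows by combining this equality with \cref{lem:infinite_data_return}.
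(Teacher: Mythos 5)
Your proof is correct and follows essentially the same route as the paper's: an induction on $t$ showing that, under the deterministic policy, the rolled-out state coincides with the reference state so that the correction term $\tilde{f}(\state_t,\action_t)-\tilde{f}(\refstate_t^b,\refaction_t^b)$ cancels and the OPC rollout reproduces the replay-buffer trajectory pathwise. Your explicit remark that determinism is needed both to equate $\pi(\refstate_t^b)$ with the recorded action and to keep the rollout action-deterministic is exactly the content of the paper's "by the induction hypothesis and due to the deterministic policy" step.
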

\begin{proof}
    For the proof, it suffices to show that the resulting state distributions of the two transition models $\datamodel$ and $\opcmodel$ under the deterministic policy $\pi$ are the same for all $\batch$ with $1 \leq \batch \leq \batchsize$. We show this by induction:

    For $\ti=0$ the states are sampled from the initial state distribution, thus we have $\state_0 = \refstate_0^\batch$ by definition. Assume that $\state_\ti = \refstate_\ti^\batch$ as an induction hypothesis. Then
    \begin{align*}
        \opcmodel(\state_{\ti + 1} \mid \state_\ti, \pi(\state_\ti), \batch)
         & = \delta \Big( \state_{\ti + 1} - \refstate^\batch_{\ti+1} \Big) * \delta \Big( \state_{\ti + 1} - \Big[ f(\state_\ti, \pi(\state_\ti)) - f(\refstate^\batch_\ti, \refaction^\batch_\ti) \Big]  \Big) \\
         & = \delta \Big( \state_{\ti + 1} - \Big[ \refstate^\batch_{\ti+1} + f(\state_\ti, \pi(\state_\ti)) - f(\refstate^\batch_\ti, \refaction^\batch_\ti) \Big] \Big)                                        \\
         & = \delta \Big( \state_{\ti + 1} - \Big[ \refstate^\batch_{\ti+1} + f(\refstate_\ti^\batch, \refaction_\ti^\batch) - f(\refstate^\batch_\ti, \refaction^\batch_\ti) \Big] \Big)                        \\
         & = \delta \Big( \state_{\ti + 1} - \refstate^\batch_{\ti+1} \Big)                                                                                                                                      \\
         & = \datamodel(\state_{\ti + 1} \mid \ti + 1, \batch)
    \end{align*}
    where the second step follows by the induction hypothesis and due to the deterministic policy. Thus, for any index $\batch$ we have $\tau_\batch^\mathrm{opc} = \tau_\batch $ and the result follows.
\end{proof}

Now, combining \cref{lem:infinite_data_return,lem:data_opc_equivalence} proofs the result in \cref{lem:on_policy_model_error}.

\subsubsection{Proof of Theorem 1}

In this section we prove our main result. An overview of the lemma dependencies is shown in \cref{fig:theorem_lemma_overview}.
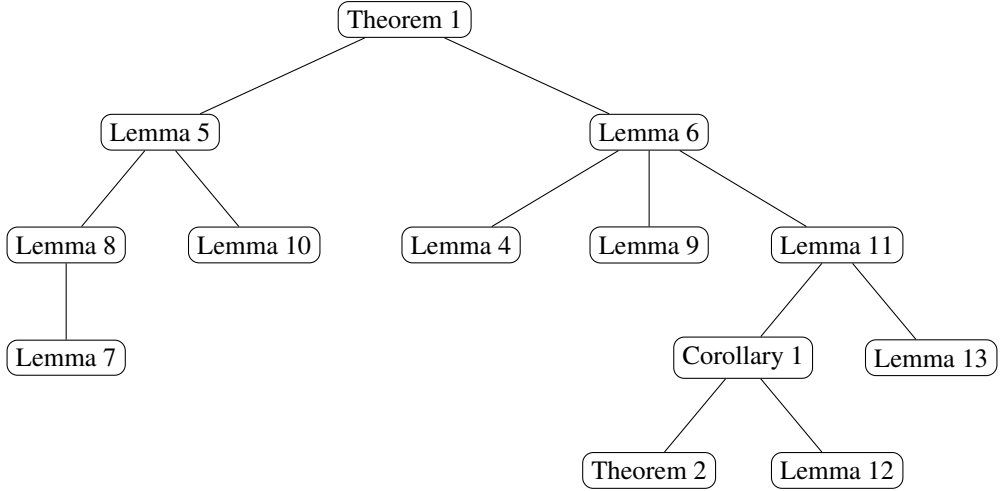
\begin{figure}
\centering
\begin{tikzpicture}[level 1/.style={sibling distance=65mm},level 2/.style={sibling distance=25mm},
  every node/.style = {shape=rectangle, rounded corners, draw, align=center}]]
  \node {\cref{thm:on_policy_error_bound_full}}
    child { node {\cref{lem:return_bound_by_stateaction_wasserstein}}
        child { node {\cref{lem:wasserstein_bound_expected_difference}}
            child { node {\cref{lem:kantorovich-rubinstein}}}
        }
        child { node {\cref{lem:wasserstein_stateaction_dist}}}
    }
    child { node {\cref{lem:multi_step_wasserstein_bound}}
        child { node {\cref{lem:opc_distribution_general}}}
        child { node {\cref{lem:wasserstein_delta_integral}}}
        child { node {\cref{lem:avg_sq_euclidean_dist_actions}}
            child { node {\cref{cor:avg_sq_euclidean_distance_bound}}
                    child { node {\cref{thm:gelbrich_bound}}}
                    child { node {\cref{lem:avg_sq_euclidean_distance}}}
                }
            child { node {\cref{lem:odd_covariance_term}}}
        }
    };
\end{tikzpicture}

\caption{Overview about the supporting Lemmas for the proof of \cref{thm:on_policy_error_bound_full}.}
\label{fig:theorem_lemma_overview}
\end{figure}

\paragraph{Remark on Notation}
In the main paper, we unify the notation for state sequence probabilities of the different models \cref{eq:replay_buffer_trajectory,eq:model_learning,eq:opc_model} as $\tilde{p}^{\mathrm{x}}(\tau_{\ti:\ti+\horizon} \mid \ti, \batch)$ with $\mathrm{x} \in \{ \text{replay}, \text{model}, \text{opc} \}$.
This allows for a consistent description of the respective rollouts independent of the model being a learned representation, a replay buffer or the \ourmethod-model.
For that notation, the index $\batch$ denotes the sampled trajectory from the collected data on the real environment.
Implicitly, we therefore condition the state sequence probability on the observed transition tuples $[(\refstate^\batch_{\ti+1}, \refstate^\batch_{\ti}, \refaction^\batch_{\ti}), \dots, (\refstate^\batch_{\ti+\horizon}, \refstate^\batch_{\ti+\horizon-1}, \refaction^\batch_{\ti+\horizon-1})]$, i.e.,
\begin{align}\label{eq:cond_notation_traj}
    \tilde{p}^{\mathrm{x}}(\tau_{\ti:\ti+\horizon} \mid  \ti, \batch) 
    =
    \tilde{p}^{\mathrm{x}}(\tau_{\ti:\ti+\horizon} \mid (\refstate^\batch_{\ti+1}, \refstate^\batch_{\ti}, \refaction^\batch_{\ti}), \dots, (\refstate^\batch_{\ti+\horizon}, \refstate^\batch_{\ti+\horizon-1}, \refaction^\batch_{\ti+\horizon-1})),
\end{align}
where we omit the explicit conditioning for the sake of brevity in the main paper.
Similarly, we can write the one-step model \cref{eq:opc_model} for \ourmethod in an explicit form as
\begin{align}\label{eq:cond_notation}
    \tilde{p}^{\mathrm{opc}}(\state_{\ti+1} | \state_\ti, \action_\ti, \batch) = \tilde{p}^{\mathrm{opc}}(\state_{\ti+1} | \state_\ti, \action_\ti, \refstate^\batch_{\ti+1}, \refstate^\batch_{\ti}, \refaction^\batch_{\ti}).
\end{align}
Note that with the explicit notation, the relation between the \ourmethod-model \cref{eq:opc_model} and \emph{generalized}~\ourmethod-model \cref{eq:generalized_opc_model} becomes clear:
\begin{align}\label{eq:opc_models_marginalization}
    \opcintmodel(\state_{\ti+1} \mid \state_\ti, \action_\ti, \batch) 
    =
    \opcmodel(\state_{\ti+1} \mid \state_\ti, \action_\ti, \refstate^\batch_\ti, \refaction^\batch_\ti)
    =
    \int \opcmodel(\state_{\ti+1} \mid \state_\ti, \action_\ti, \refstate^\batch_{\ti+1} \refstate^\batch_\ti, \refaction^\batch_\ti) \d \refstate_{\ti+1}^\batch.
\end{align}
For the following proofs, we stay with the explicit notation for sake of clarity and instead omit the conditioning on $\batch$.

\paragraph{Generalized \ourmethod-Model}
In this section, we have a closer look at the \emph{generalized} \ourmethod-model \cref{eq:generalized_opc_model}.
The main difference between \cref{eq:opc_model,eq:generalized_opc_model} is that the former is in fact transitioning deterministically (the stochasticity arises from the environment's aleatoric uncertainty which manifests itself in the observed transitions).
The two models can be related via marginalization of $\refstate_{\ti+1}^\batch$, see \cref{eq:opc_models_marginalization}.
The resulting generalized \ourmethod-model can then be related to the true transition distribution according to the following Lemma.
\begin{lemma}
    \label{lem:opc_distribution_general}
    For all $\refstate_\ti^\batch, \refaction_\ti^\batch \in \statespace \times \actionspace$ with $\refstate_{\ti + 1}^\batch \sim p(\refstate_{\ti + 1} \mid \refstate_\ti^\batch, \refaction_\ti^\batch)$ it holds that
    \begin{equation}
        \opcmodel(\state_{\ti+1} \mid \state_\ti, \action_\ti, \refstate^\batch_\ti, \refaction^\batch_\ti) =
        p \Big(
        \state_{\ti + 1} -
        \undertext{Mean correction}{
        \big[ \tilde{f}(\state_\ti, \action_\ti) - \tilde{f}(\refstate^\batch_\ti, \refaction^\batch_\ti ) \big]
        }
          \mid \state_\ti, \action_\ti, \refstate^\batch_\ti, \refaction^\batch_\ti
        \Big) ,
    \end{equation}
    where $\tilde{f}(\state_\ti, \action_\ti) = \E[\learnedmodel(\state_{\ti+1} \mid \state_\ti, \action_\ti)]$ is the mean transition of the learned model \cref{eq:model_learning} and $\opcmodel(\state_{\ti+1} \mid \state_\ti, \action_\ti, \refstate^\batch_\ti, \refaction^\batch_\ti)$ denotes the \ourmethod-model if we marginalize over the distribution for $\refstate_{\ti + 1}^\batch$ instead of using its observed value.
\end{lemma}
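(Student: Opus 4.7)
The proof is essentially a direct computation exploiting the convolution-of-deltas structure of $\opcmodel$ from equation~\eqref{eq:opc_model}, together with marginalization of $\refstate_{\ti+1}^\batch$ against its true on-policy distribution. The plan is as follows.

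First, I would collapse the convolution in \eqref{eq:opc_model} by using the standard identity $\delta(x-a) * \delta(x-b) = \delta(x - a - b)$, yielding the conditional (on $\refstate_{\ti+1}^\batch$) form
\begin{equation*}
  \opcmodel(\state_{\ti+1} \mid \state_\ti, \action_\ti, \refstate_{\ti+1}^\batch, \refstate_\ti^\batch, \refaction_\ti^\batch)
  = \delta\Big(\state_{\ti+1} - \refstate_{\ti+1}^\batch - \big[\tilde{f}(\state_\ti,\action_\ti) - \tilde{f}(\refstate_\ti^\batch,\refaction_\ti^\batch)\big]\Big).
\end{equation*}
Here $\refstate_{\ti+1}^\batch$ is treated as an argument on which the model is conditioned; this is precisely the explicit form indicated in \eqref{eq:cond_notation}.

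Second, to obtain the ``generalized'' model $\opcmodel(\state_{\ti+1}\mid \state_\ti,\action_\ti,\refstate_\ti^\batch,\refaction_\ti^\batch)$, I marginalize out $\refstate_{\ti+1}^\batch$ against the true on-policy transition $p(\refstate_{\ti+1}^\batch \mid \refstate_\ti^\batch,\refaction_\ti^\batch)$, as stated in the hypothesis of the lemma and in \eqref{eq:opc_models_marginalization}:
\begin{equation*}
  \opcmodel(\state_{\ti+1} \mid \state_\ti,\action_\ti,\refstate_\ti^\batch,\refaction_\ti^\batch)
  = \int \delta\Big(\state_{\ti+1} - \refstate_{\ti+1}^\batch - \big[\tilde{f}(\state_\ti,\action_\ti) - \tilde{f}(\refstate_\ti^\batch,\refaction_\ti^\batch)\big]\Big)\, p(\refstate_{\ti+1}^\batch \mid \refstate_\ti^\batch,\refaction_\ti^\batch)\, d\refstate_{\ti+1}^\batch.
\end{equation*}

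Third, I apply the sifting property of the Dirac delta, which selects $\refstate_{\ti+1}^\batch = \state_{\ti+1} - [\tilde{f}(\state_\ti,\action_\ti) - \tilde{f}(\refstate_\ti^\batch,\refaction_\ti^\batch)]$, producing
\begin{equation*}
  \opcmodel(\state_{\ti+1} \mid \state_\ti,\action_\ti,\refstate_\ti^\batch,\refaction_\ti^\batch)
  = p\Big(\state_{\ti+1} - \big[\tilde{f}(\state_\ti,\action_\ti) - \tilde{f}(\refstate_\ti^\batch,\refaction_\ti^\batch)\big] \,\Big|\, \refstate_\ti^\batch,\refaction_\ti^\batch\Big),
\end{equation*}
which coincides with the claimed expression. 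Since the true dynamics $p$ only depend on the preceding reference pair $(\refstate_\ti^\batch,\refaction_\ti^\batch)$ by the Markov property, adding the conditioning on $(\state_\ti,\action_\ti)$ in the lemma statement is redundant but valid, so the equality of the two forms is immediate.

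No real obstacle is anticipated: the argument is essentially a change-of-variables formula hidden inside a delta-function integral. The only thing to be careful about is notational — keeping the explicit conditioning on $(\refstate_{\ti+1}^\batch,\refstate_\ti^\batch,\refaction_\ti^\batch)$ separate from the marginalized version, as already formalized in equations~\eqref{eq:cond_notation_traj}--\eqref{eq:opc_models_marginalization}, to avoid conflating the (deterministic, conditional) practical model with the (stochastic, marginalized) generalized model used for the theoretical analysis in Theorem~\ref{thm:on_policy_error_bound_full}.
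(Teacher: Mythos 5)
Your proposal is correct and follows essentially the same route as the paper's proof: collapse the convolution of the two Dirac deltas into a single delta, marginalize $\refstate_{\ti+1}^\batch$ against $p(\refstate_{\ti+1}\mid\refstate_\ti^\batch,\refaction_\ti^\batch)$, and apply the sifting property to obtain the shifted density. No gaps to report.
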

\begin{proof}
    Using the explicit notation \cref{eq:cond_notation}, the \ourmethod-model from \cref{eq:opc_model} is defined as
    \begin{align}
        \opcmodel(\state_{\ti+1} \mid \state_\ti, \action_\ti, \batch) & = \opcmodel(\state_{\ti+1} \mid \state_\ti, \action_\ti, \refstate^\batch_\ti, \refaction^\batch_\ti, \refstate^\batch_{\ti+1})                                                                                \\
                                                                      & = \delta\Big(\state_{\ti+1} - \refstate^\batch_{\ti+1}\Big) * \delta\Big( \state_{\ti+1} - \left[ \tilde{f}(\state_\ti, \action_\ti)  - \tilde{f}(\refstate^\batch_\ti, \refaction^\batch_\ti ) \right] \Big), \\
                                                                      & = \delta\Big( \state_{\ti+1} - \left[ \refstate^\batch_{\ti+1} + \tilde{f}(\state_\ti, \action_\ti)  - \tilde{f}(\refstate^\batch_\ti, \refaction^\batch_\ti ) \right] \Big).
    \end{align}
    With $\refstate^\batch_{\ti + 1} \sim p(\refstate_{\ti + 1} \mid \refstate^\batch_\ti, \refaction^\batch_\ti)$, marginalizing $\refstate_{\ti + 1}$ yields (note that we use $\refstate_{\ti+1}$ instead of $\state_{\ti+1}$ to denote the random variable for the next state in order to distinguish it from the random variable for $\opcmodel(\state_{\ti+1} \mid \state_\ti, \action_\ti, \batch)$ under the integral)
    \begin{align*}
        \opcmodel(\state_{\ti+1} \mid \state_\ti, \action_\ti, \refstate^\batch_\ti, \refaction^\batch_\ti)
        ={} & \int
        \delta\Big( \state_{\ti+1} - \left[ \refstate_{\ti+1} + \tilde{f}(\state_\ti, \action_\ti)  - \tilde{f}(\refstate^\batch_\ti, \refaction^\batch_\ti ) \right] \Big)
        p(\refstate_{\ti + 1} \mid \refstate^\batch_\ti, \refaction^\batch_\ti)
        \, \d \refstate_{\ti + 1} , \\
        ={} & p\Big(
        \state_{\ti + 1} - \big[ \tilde{f}(\state_\ti, \action_\ti) - \tilde{f}(\refstate^\batch_\ti, \refaction^\batch_\ti )\big]   \mid \refstate^\batch_\ti, \refaction^\batch_\ti
        \Big).
    \end{align*}
\end{proof}

\begin{remark}
    An alternative way of writing the general \ourmethod-model is the following,
    \begin{align}\label{eq:general_opc_alternative}
        \opcmodel(\state_{\ti+1} \mid \state_\ti, \action_\ti, \refstate^\batch_\ti, \refaction^\batch_\ti) = \undertext{On-policy transition}{\vphantom{\Big(}
            p( \refstate_{\ti+1} \mid \refstate_\ti^\batch, \refaction_\ti^\batch)
        }
        * \undertext{Mean correction term}{
            \delta \Big( \state_{\ti + 1} - \Big[ \tilde{f}(\state_\ti, \action_\ti) - \tilde{f}(\refstate_\ti^\batch, \refaction_\ti^\batch ) \Big] \Big)
        },
    \end{align}
    which highlights that the \opcmodel transitions according to the true on-policy dynamics conditioned on data from the replay buffer, combined with a correction term. We can further explicitly see why an implementation of this model wouldn't be possible due to its dependency on the true transition probabilities. Thus, in practice, we're limited to the sample-based approximation shown in the paper.
\end{remark}

The fundamental idea for the proof of \cref{thm:on_policy_error_bound_full} lies in the following \cref{lem:return_bound_by_stateaction_wasserstein}, which is the foundation for bounding the on-policy error.
The Wasserstein distance naturally arises in bounding this type of error model as it depends on the expected return under two different distributions.
The final result is then summarized in \cref{thm:on_policy_error_bound_full}.
\begin{lemma}
    \label{lem:return_bound_by_stateaction_wasserstein}
    Let $\opcmodel$ be the generalized \ourmethod-model (cf. \cref{lem:opc_distribution_general}) and $\modelreturn{}^{\mathrm{opc}}$ be its corresponding expected return.
    Assume that the return $r(\state_\ti, \action_\ti)$ is $L_r$-Lipschitz and the policy $\pi(\action_\ti \mid \state_\ti)$ is $L_{\pi}$-Lipschitz with respect to $\state_\ti$ under the Wasserstein distance, then
    \begin{equation}
        | \return{} - \modelreturn{}^{\mathrm{opc}} | \leq  L_r \sqrt{ 1 + L_\pi^2} \sum_{\ti \geq 0} \gamma^\ti \W_2(p(\state_\ti), \opcmodel(\state_\ti) )
    \end{equation}
\end{lemma}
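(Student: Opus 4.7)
The plan is to telescope the return difference across time steps, convert each one-step expected-reward gap into a Wasserstein distance on the joint state--action distribution via Kantorovich--Rubinstein duality, and then use the Lipschitzness of the policy to reduce the joint Wasserstein distance to the marginal state Wasserstein distance appearing in the bound.

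First, I would write $\return{} = \sum_{\ti \geq 0} \gamma^\ti \E_{(\state_\ti, \action_\ti) \sim p_\ti}[r(\state_\ti, \action_\ti)]$, where $p_\ti(\state, \action) = p(\state_\ti = \state)\,\pi(\action \mid \state)$ is the true joint state--action distribution at time $\ti$, and analogously $\opcreturn$ with $p$ replaced by $\opcmodel$. The triangle inequality then yields
\begin{equation*}
|\return{} - \opcreturn| \leq \sum_{\ti \geq 0} \gamma^\ti \big| \E_{p_\ti}[r] - \E_{\opcmodel_\ti}[r] \big|.
\end{equation*}
Since $r$ is $L_r$-Lipschitz, Kantorovich--Rubinstein duality bounds each summand by $L_r \W_1(p_\ti, \opcmodel_\ti)$, and combining with the standard ordering $\W_1 \leq \W_2$ gives the per-step bound $L_r \W_2(p_\ti, \opcmodel_\ti)$ on the joint distribution.

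It remains to replace the joint Wasserstein distance by the state-marginal distance. Both joints share the same conditional $\pi(\action \mid \state)$, so the gluing lemma lets me build a coupling of the joints as follows: first sample $(\state, \state')$ from the optimal $\W_2$-coupling of the state marginals $p(\state_\ti)$ and $\opcmodel(\state_\ti)$, then conditionally draw $(\action, \action')$ from the optimal $\W_2$-coupling of $\pi(\cdot \mid \state)$ and $\pi(\cdot \mid \state')$. Under this coupling, additivity of the squared Euclidean metric across state and action coordinates together with the tower property gives
\begin{equation*}
\W_2^2(p_\ti, \opcmodel_\ti) \leq \E\big[\|\state - \state'\|^2\big] + \E\big[\W_2^2(\pi(\cdot \mid \state), \pi(\cdot \mid \state'))\big] \leq (1 + L_\pi^2)\, \W_2^2(p(\state_\ti), \opcmodel(\state_\ti)),
\end{equation*}
where the last step uses the assumed $L_\pi$-Wasserstein-Lipschitzness of $\pi$. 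Taking square roots, substituting into the per-step bound, and summing over $\ti$ yields the claim.

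The routine parts are the Kantorovich--Rubinstein step and the telescoping decomposition; the only delicate ingredient is the coupling construction, which needs a measurable selection of the conditional action-coupling and is the content of the auxiliary \cref{lem:wasserstein_stateaction_dist}. Beyond that I expect no obstacle, since everything else follows from standard metric-space properties of $\W_2$ and the Lipschitz assumptions on $r$ and $\pi$.
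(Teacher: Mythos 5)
Your proposal is correct and follows essentially the same route as the paper: a telescoping/triangle-inequality decomposition over time, Kantorovich--Rubinstein plus $\W_1 \le \W_2$ to bound each one-step reward gap by $L_r\,\W_2$ of the joint state--action laws, and a restricted (glued) coupling argument that reduces the joint distance to $(1+L_\pi^2)$ times the marginal state distance --- which is exactly the content of the paper's \cref{lem:wasserstein_bound_expected_difference,lem:wasserstein_stateaction_dist}. The measurable-selection subtlety you flag is the same one the paper handles by interchanging the infimum and the integral via \citet[Theorem~3A]{Rockafellar1976integral}.
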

\begin{proof}
    \begin{align}
        | \return{} - \modelreturn{}^{\mathrm{opc}} |
         & = \Big| \E_{\tau \sim p} \Big[ \sum_{\ti \geq 0} \gamma^\ti r(\refstate_\ti, \refaction_\ti) \Big] - \E_{\tau \sim \opcmodel} \Big[ \sum_{\ti \geq 0} \gamma^\ti r(\pstate_\ti, \paction_\ti) \Big] \Big|
        \\
         & = \Big| \sum_{\ti \geq 0} \gamma^\ti \Big(
        \E_{\refstate_\ti, \refaction_\ti \sim p(\refstate_\ti, \refaction_\ti)} \big[ r(\refstate_\ti, \refaction_\ti) \big]
        -
        \E_{\pstate_\ti, \paction_\ti \sim \opcmodel(\pstate_\ti, \paction_\ti)} \big[ r(\pstate_\ti, \paction_\ti) \big]
        \Big) \Big|
        \\
         & \leq \sum_{\ti \geq 0} \gamma^\ti \Big|
        \E_{\refstate_\ti, \refaction_\ti \sim p(\refstate_\ti, \refaction_\ti)} \big[ r(\refstate_\ti, \refaction_\ti) \big]
        -
        \E_{\pstate_\ti, \paction_\ti \sim \opcmodel(\pstate_\ti, \paction_\ti)} \big[ r(\pstate_\ti, \paction_\ti) \big]
        \Big|
        \\
        \intertext{Applying \cref{lem:wasserstein_bound_expected_difference}:}
         & \leq \sum_{\ti \geq 0} \gamma^\ti L_r \W_2(p(\refstate_\ti, \refaction_\ti), \opcmodel(\pstate_\ti, \paction_\ti))                                                                                        \\
        \intertext{Writing the joint distributions for state/action in terms of their conditional (i.e., policy) and marginal distributions $p(\state_\ti, \action_\ti) = \pi(\action_\ti \mid \state_\ti ) p(\state_\ti)$:}
         & = \sum_{\ti \geq 0} \gamma^\ti L_r \W_2(\pi(\refaction_\ti \mid \refstate_\ti ) p(\refstate_\ti), \pi(\paction_\ti \mid \pstate_\ti ) \opcmodel(\pstate_\ti))                                             \\
        \intertext{Under the assumption that the policies $\pi$ are $L_\pi$-Lipschitz under the Wasserstein distance, application of \cref{lem:wasserstein_stateaction_dist} concludes the proof:}
         & \leq \sum_{\ti \geq 0} \gamma^\ti L_r \sqrt{ 1 + L_\pi^2} \W_2(p(\refstate_\ti), \opcmodel(\pstate_\ti) ).
    \end{align}
\end{proof}

\begin{lemma}[Wasserstein Distance between Marginal State Distributions]
\label{lem:multi_step_wasserstein_bound}
Let $\opcmodel(\pstate_{\ti})$ and $p(\refstate_{\ti})$ be the marginal state distributions at time $\ti$ when rolling out from the same initial state $\refstate_0$ under the same policy with the \ourmethod-model and the true environment, respectively.
Assume that the underlying learned dynamics model is $L_f$-Lipschitz continuous with respect to both its arguments and the policy $\pi(\action \mid \state)$ is $L_{\pi}$-Lipschitz with respect to $\state$ under the Wasserstein distance.
If it further holds that the policy's (co-)variance $\var[\pi(\action \mid \state)] = \Sigma_\pi(\state) \in \mathbb{S}_+^{\nact}$ is finite over the complete state space, i.e., $\max_{\state \in \statespace} \trace \{ \Sigma_\pi(\state) \} \leq \bar{\sigma}^2_\pi$, then the discrepancy between the marginal state distributions of the two models is bounded

\begin{align}
\label{eq:multi_step_wasserstein_bound}
    \W_2^2(\opcmodel(\pstate_{\ti}), p(\refstate_{\ti})) \leq 2 \sqrt{\nact} \bar{\sigma}_\pi^2 L_f^2 \sum_{\ti'=0}^{\ti-1} (L_f^2 + L_\pi^2)^{\ti'}
\end{align}
\end{lemma}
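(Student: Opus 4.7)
The plan is to proceed by induction on the prediction step $\ti$, exhibiting an explicit trajectory-level coupling between the true-environment rollout and the generalized \ourmethod-model rollout and then propagating the expected squared distance between coupled sample paths.

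For the base case $\ti = 0$, both marginals are concentrated on the shared initial state $\refstate_0$, so the Wasserstein distance vanishes and matches the empty sum on the right-hand side.

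For the inductive step, the key observation is that, by \cref{lem:opc_distribution_general}, the one-step generalized \ourmethod transition $\opcmodel(\state_{\ti+1} \mid \pstate_\ti, \paction_\ti, \refstate_\ti, \refaction_\ti)$ equals the true transition $p(\cdot \mid \refstate_\ti, \refaction_\ti)$ translated by the deterministic mean-correction $\tilde{f}(\pstate_\ti, \paction_\ti) - \tilde{f}(\refstate_\ti, \refaction_\ti)$. Identifying the reference trajectory of the generalized \ourmethod-model with the true-environment sample path itself, and coupling the shared one-step aleatoric noise (formalized via \cref{lem:wasserstein_delta_integral}), one obtains the deterministic coupling
\begin{equation*}
\pstate_{\ti+1} - \refstate_{\ti+1} \;=\; \tilde{f}(\pstate_\ti, \paction_\ti) - \tilde{f}(\refstate_\ti, \refaction_\ti).
\end{equation*}
Squaring, taking expectations, and invoking $L_f$-Lipschitz continuity of $\tilde{f}$ reduces the problem to controlling $\E\|\pstate_\ti - \refstate_\ti\|^2$ and $\E\|\paction_\ti - \refaction_\ti\|^2$. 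The action term is then controlled via \cref{lem:avg_sq_euclidean_dist_actions}, which combines the $L_\pi$-Wasserstein-Lipschitz property of $\pi$, the trace-bounded covariance $\bar{\sigma}_\pi^2$, the Gelbrich-type estimate of \cref{thm:gelbrich_bound}, and the cross-covariance identity of \cref{lem:odd_covariance_term}. Assembling these ingredients gives an affine recursion of the form
\begin{equation*}
a_{\ti+1} \;\leq\; (L_f^2 + L_\pi^2)\, a_\ti \;+\; 2\sqrt{\nact}\, \bar{\sigma}_\pi^2\, L_f^2, \qquad a_\ti \;:=\; \E\|\pstate_\ti - \refstate_\ti\|^2, \qquad a_0 = 0,
\end{equation*}
whose closed-form solution is exactly the geometric sum appearing in \cref{eq:multi_step_wasserstein_bound}. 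Since the 2-Wasserstein squared distance is the infimum of $\E\|\cdot\|^2$ over all admissible couplings and ours is admissible, $\W_2^2(\opcmodel(\pstate_\ti), p(\refstate_\ti)) \leq a_\ti$, completing the induction.

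The main obstacle lies in the action-mismatch estimate furnished by \cref{lem:avg_sq_euclidean_dist_actions}: one must keep the additive constant proportional to $\sqrt{\nact}\,\bar{\sigma}_\pi^2$ rather than a naive $\nact\,\bar{\sigma}_\pi^2$, and arrange the multiplicative factor in the recursion to take the tight form $(L_f^2 + L_\pi^2)$ rather than the looser $L_f^2(1+L_\pi^2)$ obtained from a direct triangle-inequality argument. This requires carefully exploiting the independent-sample structure of the stochastic policy together with the Gelbrich inequality on the Bures distance between $\Sigma_\pi(\pstate_\ti)$ and $\Sigma_\pi(\refstate_\ti)$, and treating the cross-covariance term that arises when two samples from state-conditional policies are compared at different states. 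Once that action bound is in place, the remainder is a routine induction and geometric-sum computation.
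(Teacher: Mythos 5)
Your proposal matches the paper's proof essentially step for step: induction over the horizon, the deterministic coupling $\pstate_{\ti+1}-\refstate_{\ti+1} = \tilde{f}(\pstate_\ti,\paction_\ti)-\tilde{f}(\refstate_\ti,\refaction_\ti)$ obtained by sharing the one-step aleatoric noise between the two rollouts (via \cref{lem:opc_distribution_general} and \cref{lem:wasserstein_delta_integral}), the action-mismatch bound of \cref{lem:avg_sq_euclidean_dist_actions}, and the resulting affine recursion solved as a geometric sum. The one wrinkle you flag as the "main obstacle" --- that a direct application of the action lemma gives the recursion factor $L_f^2(1+L_\pi^2)$ rather than $L_f^2+L_\pi^2$ --- is present in the paper's own derivation as well, so your sketch is faithful to it.
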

\begin{proof}
We proof the Lemma by induction. 

\paragraph{Base Case: $t=1$}
For the base case, we need to show that starting from the same initial state $\refstate_0$ the following condition holds:
\begin{align*}
    \W_2^2(\opcmodel(\pstate_{1}), p(\refstate_{1})) \leq 2 \sqrt{\nact} \bar{\sigma}_\pi^2 L_f^2
\end{align*}
For ease of readability, we define $\stateaction = (\state, \action)$ and use the notation $\d p(\x, \y) = p(\x, \y) \d \x \d \y$ whenever no explicit assumptions are made about the distributions.
\begin{align}
           & \W_2(\opcmodel(\pstate_{1}), p(\refstate_{1} ))                                                                       \label{eq:base_case_start}\\
    ={}    & \W_2\left(
    \iint \opcmodel(\pstate_{1} \mid \refstateaction_0, \pstateaction_0) \d p(\refstateaction_0, \pstateaction_0),
    \int p(\refstate_{1} \mid \refstateaction_0) \d p(\refstateaction_0)
    \right)                                                                                                                                \\
    \intertext{
        Recall that we can write both $\opcmodel$ and $p$ as convolution between $p_\epsilon$ and a Dirac delta (see \cref{lem:opc_distribution_general}).
        Together with the identity \cref{eq:pull_out_noise}, the Wasserstein distance for sums of random variables \cref{eq:wasserstein_sum_of_rvs} and noting $\W_p(p_\epsilon(\epsilon), p_\epsilon(\epsilon)) = 0$:
    }
    \leq{} &
    \W_2 \left(
    \iint \delta(\pstate_{1} - [ f(\refstateaction_0) + \tilde{f}(\pstateaction_0) - \tilde{f}(\refstateaction_0) ]) \d p(\refstateaction_0, \pstateaction_0),
    \int \delta(\refstate_{1} - f(\refstateaction_0) ) \d p(\refstateaction_0)
    \right)                                                                                                                                \\
    \intertext{Squaring and using \cref{lem:wasserstein_delta_integral}:}
           & \W_2^2 \left(
    \iint \delta(\pstate_1 - [ f(\refstateaction_0) + \tilde{f}(\pstateaction_0) - \tilde{f}(\refstateaction_0) ]) \d p(\refstateaction_0) p(\pstateaction_0),
    \int \delta(\refstate_1 - f(\refstateaction_0) ) \d p(\refstateaction_0)
    \right) \notag                                                                                                                         \\
    \leq{} & \iint \| f(\refstateaction_0) + \tilde{f}(\pstateaction_0) - \tilde{f}(\refstateaction_0) - f(\refstateaction_0) \|^2
    \d p(\refstateaction_0, \pstateaction_0) \\
    ={}    & \iint \| \tilde{f}(\pstateaction_0) - \tilde{f}(\refstateaction_0) \|^2
    \d p(\refstateaction_0, \pstateaction_0) \label{eq:base_case_intermediate}\\
    \leq{}& L_f^2 \int \|  \pstate_0 - \refstate_0 \|^2 +  \| \paction_0 - \refaction_0 \|^2 \d p(\refstate_0, \refaction_0, \pstate_0, \paction_0) \\
    \intertext{We are assuming that the initial states of the trajectory rollouts coincide. The joint state/action distribution can then be written as $p(\refstate_0, \refaction_0, \pstate_0, \paction_0) = p(\refstate_0) \pi(\refaction_0 \mid \refstate_0) \delta(\pstate_0 - \refstate_0) \pi(\paction_0 \mid \pstate_0)$. Integrating with respect to $\pstate_0$ leads to:}
    ={}& L_f^2 \int \| \paction_0 - \refaction_0 \|^2 p(\refstate_0) \pi(\refaction_0 \mid \refstate_0) \pi(\paction_0 \mid \refstate_0) \d \refstate_0 \d \refaction_0 \d \paction_0 \\
    \intertext{This term describes the mean squared distance between two random actions.
            Since we condition $\pi$ on the same state $\refstate_0$, the policy distributions coincide.
            Define $\Delta \action = \action_0 - \refaction_0$,
        }
    ={}    & L_f^2 \E_{\refstate_0} \big[ \E_{\Delta \action} [ \| \Delta \action \|^2 ] \big]
    \\
    ={}    & L_f^2 \E_{\refstate_0} \big[ \trace \{ \mathrm{Var}[ \Delta \action ] \big] \}
    \\
    \intertext{Now $\var [ \Delta \action ] = \var [ \pi(\refaction_0 \mid \refstate_0) ] + \var [ \pi(\paction_0 \mid \refstate_0) ] = 2 \var [ \pi(\action_0 \mid \refstate_0) ]$
    }
    ={}    & 2 L_f^2 \E_{\refstate_0} \big[ \trace \{ \var [ \pi(\action_0 \mid \refstate_0)] \} \big] \\
    \intertext{This term is in fact less than \cref{eq:multi_step_wasserstein_bound} for $\ti=0$, thus proofing the base case.}
    \leq{} & 2 \sqrt{\nact} \bar{\sigma}_\pi^2 L_f^2
\end{align}

\paragraph{Inductive Step}
We will show that if the hypothesis holds for $\ti$ then it holds for $\ti+1$ as well.
We explicitly write the following intermediate bound such that its application in the proof is more apparent, i.e.,
\begin{align}
    \W_2^2(p(\refstate_{\ti}), \opcmodel(\pstate_{\ti})) 
    \leq{}& \int \| \tilde{f}(\pstateaction_{\ti-1}) - \tilde{f}(\refstateaction_{\ti-1}) \|^2 \d p(\refstateaction_{\ti-1}, \pstateaction_{\ti-1}) \label{eq:induction_hypothesis_intermediate}\\
    \leq{}& 2 \sqrt{\nact} \bar{\sigma}_\pi^2 L_f^2 \sum_{\ti'=0}^{\ti-1} (L_f^2 + L_\pi^2)^{\ti'},
\end{align}
where the first inequality immediately follows from the same reasoning as in the base case \cref{eq:base_case_start}--\cref{eq:base_case_intermediate}.
\begin{align}
    & \W_2^2(p(\refstate_{\ti+1}, p(\pstate_{\ti+1})) \\
    \leq{}& \int \|  \tilde{f}(\pstateaction_{\ti}) - \tilde{f}(\refstateaction_{\ti}) \|^2 \d p(\refstateaction_\ti, \pstateaction_\ti) \\
    \leq{}& L_f^2 \int \|  \pstate_{\ti} - \refstate_{\ti} \|^2 \d p(\refstateaction_\ti, \pstateaction_\ti) + L_f^2 \int \| \paction_{\ti} - \refaction_{\ti} \|^2 \d p(\refstateaction_\ti, \pstateaction_\ti) \\
    \intertext{Applying \cref{lem:avg_sq_euclidean_dist_actions} to the second integral}
    \leq{}& (L_f^2 + L_\pi^2) \int \|  \pstate_{\ti} - \refstate_{\ti} \|^2 \d p(\refstateaction_\ti, \pstateaction_\ti) + 2 \sqrt{\nact} L_f^2 \bar{\sigma}_\pi^2 \\
    \intertext{We predict along a consistent trajectory, i.e., $\pstate_{\ti} = \refstate_{\ti} + \tilde{f}(\pstate_{\ti - 1}, \paction_{\ti - 1}) - \tilde{f}(\refstate_{\ti - 1}, \refaction_{\ti - 1})$}
    \leq{}& (L_f^2 + L_\pi^2) \int \| \tilde{f}(\pstateaction_{\ti-1}) - \tilde{f}(\refstateaction_{\ti-1}) \|^2 \d p(\refstateaction_{\ti-1}, \pstateaction_{\ti-1}) + 2 \sqrt{\nact} L_f^2 \bar{\sigma}_\pi^2 \\
    \intertext{Assume that the hypothesis \cref{eq:induction_hypothesis_intermediate} holds for $\ti$}
    \leq{}& (L_f^2 + L_\pi^2) \times 2 \sqrt{\nact} \bar{\sigma}_\pi^2 L_f^2
    \sum_{\ti'=0}^{\ti-1} (L_f^2 + L_\pi^2)^{\ti'} + 2 \sqrt{\nact} L_f^2 \bar{\sigma}_\pi^2  \\
    ={}& 2 \sqrt{\nact} \bar{\sigma}_\pi^2 L_f^2 \big[ 1 + \sum_{\ti'=0}^{\ti-1} (L_f^2 + L_\pi^2)^{\ti'} \big]\\
    ={}& 2 \sqrt{\nact} \bar{\sigma}_\pi^2 L_f^2 \sum_{\ti'=0}^{\ti} (L_f^2 + L_\pi^2)^{\ti'}
\end{align}
\end{proof}

\onpolicyerrorbound*

\begin{proof}
From combining \cref{lem:return_bound_by_stateaction_wasserstein,lem:multi_step_wasserstein_bound} it follows that
\begin{align}
    | \return{} - \opcreturn | \leq{} & \sqrt{2 \sqrt{\nact} ( 1 + L_\pi^2)} \bar{\sigma}_\pi L_f L_r \sum_{\ti \geq 0} \gamma^t \sqrt{\sum_{\ti' = 0}^{\ti} (L_f^2 + L_\pi^2)^{\ti'}} \\
    \intertext{with the shorthand notations $C_1 = \sqrt{2 (1 + L_\pi^2)} L_f L_r$ and $C_2 = L_f^2 + L_\pi^2$}
    ={}& C_1 | \actionspace |^{\frac{1}{4}} \bar{\sigma}_\pi \sum_{\ti=0}^{\Ti} \gamma^\ti \sqrt{ \sum_{\tis = 0}^{\ti} C_2^\tis} \\
    \intertext{Since $\ti \leq \Ti$, we have that $\sum_{\tis =0}^{\ti} C_2^\tis \leq \Ti C_2^\Ti$}
    \leq{}& C_1 | \actionspace |^{\frac{1}{4}} \bar{\sigma}_\pi C_2^{\frac{\Ti}{2}} \sqrt{\Ti} \sum_{\ti=0}^{\Ti} \gamma^\ti \\
    \intertext{Since $\Ti \leq \infty$, we have with the geometric series $\sum_{\ti=0}^{\Ti} \gamma^\ti \leq 1 / (1 - \gamma)$}
    \leq{}& \frac{C_1}{1-\gamma} | \actionspace |^{\frac{1}{4}}  C_2^{\frac{\Ti}{2}} \sqrt{\Ti} \bar{\sigma}_\pi
\end{align}

\end{proof}

\subsubsection{Definitions, Helpful Identities and Supporting Lemmas}
Here we briefly summarize some basic definitions and properties that will be used throughout the following.
\begin{itemize}
\item The Wasserstein distance fulfills the properties of a metric: $\W_p(p_1, p_3) \leq W_p(p_1, p_2) + W_p(p_2, p_3)$.
\item Wasserstein distance of sums of random variables (see, e.g., \citet[Corollary~1]{mariucci2018wasserstein} for a proof):
          \begin{align}
              \label{eq:wasserstein_sum_of_rvs}
              \W_p(p_1 * \dots * p_n, q_1 * \dots * q_n) \leq \sum_{i=1}^n \W_p(p_i, q_i)
          \end{align}
    \item For any function $g(\stateaction, \refstateaction)$ we have
          \begin{align}
              \label{eq:pull_out_noise}
              \iint p(\state_{\ti + 1} - g(\pstateaction, \refstateaction)) \nu(\pstateaction, \refstateaction)  \d \pstateaction \d \refstateaction = p * \iint \delta(\state_{\ti+1} - g(\pstateaction, \refstateaction)) \nu(\pstateaction, \refstateaction)  \d \pstateaction \d \refstateaction
          \end{align}
    \item For any multivariate random variable $\z_1$ and $\z_2$ with probability distributions $p(\z_1) =  p_1(\x)q(\y)$ and $p(\z_2) =  p_2(\x)q(\y)$, respectively, we have that \citep{panaretos2019statistical}
          \begin{align}
              \label{eq:wasserstein_multivariate_independent}
              \W_2^2(p_1(\x)q(\y), p_2(\x)q(\y)) = \W_2^2(p_1(\x), p_2(\x)).
          \end{align}
\end{itemize}
Further, the following Lemmas are helpful for the proof of \cref{thm:on_policy_error_bound_full}.
\begin{lemma}[Kantorovich-Rubinstein (cf. \citet{mariucci2018wasserstein}~Proposition~1.3)]
    \label{lem:kantorovich-rubinstein}
    Let $X$ and $Y$ be integrable real random variables.
    Denote by $\mu$ and $\mu$ their laws [\dots].
    Then the following characterization of the Wasserstein distance of order 1 holds:
    \begin{equation}
        \W_1(\nu, \mu) = \sup_{ \| \phi \|_\mathrm{Lip} \leq 1 } \E_{x \sim \nu(\cdot)} [\phi(x)] - \E_{y \sim \mu(\cdot)} [\phi(y)],
    \end{equation}
    where the supremum is being taken over all $\phi$ satisfying the Lipschitz condition $|\phi(x) - \phi(y)| \leq | x - y |$, for all $x, y \in \R$.
\end{lemma}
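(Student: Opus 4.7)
The plan is to prove this via the Kantorovich duality of optimal transport. Start from the primal definition of the Wasserstein-1 distance, namely $\W_1(\nu,\mu) = \inf_{\gamma \in \Pi(\nu,\mu)} \int |x-y|\, \mathrm{d}\gamma(x,y)$, where $\Pi(\nu,\mu)$ denotes the set of all couplings of $\nu$ and $\mu$ on $\R\times\R$. The goal is then to show that this primal equals the dual supremum stated in the lemma.

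The easy inequality $\W_1(\nu,\mu) \geq \sup_{\|\phi\|_{\mathrm{Lip}}\leq 1}\big(\E_\nu[\phi] - \E_\mu[\phi]\big)$ is direct. For any 1-Lipschitz $\phi$ and any coupling $\gamma \in \Pi(\nu,\mu)$, write
\begin{equation*}
\E_\nu[\phi] - \E_\mu[\phi] = \int \big(\phi(x) - \phi(y)\big)\, \mathrm{d}\gamma(x,y) \leq \int |x-y|\, \mathrm{d}\gamma(x,y),
\end{equation*}
using the Lipschitz property in the inequality. Taking the supremum over $\phi$ on the left and the infimum over $\gamma$ on the right yields the bound.

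For the reverse direction I would invoke Kantorovich duality, which asserts that
\begin{equation*}
\inf_{\gamma \in \Pi(\nu,\mu)} \int c(x,y)\, \mathrm{d}\gamma(x,y) = \sup_{(\phi,\psi)}\Big\{\E_\nu[\phi] + \E_\mu[\psi] : \phi(x) + \psi(y) \leq c(x,y)\Big\}
\end{equation*}
for lower semicontinuous cost $c$; here $c(x,y) = |x-y|$. The remaining task is to show the optimal dual pair can be chosen with $\psi = -\phi$ and $\phi$ itself 1-Lipschitz. Starting from any admissible pair $(\phi,\psi)$, pass to the $c$-transform $\phi^c(y) = \inf_x\{|x-y| - \phi(x)\}$, which dominates $\psi$ and still satisfies the constraint, so replacing $\psi$ by $\phi^c$ can only improve the objective. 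For the metric cost $|x-y|$, any $c$-concave function is automatically 1-Lipschitz, and moreover $\phi^c = -\phi$ whenever $\phi$ itself is 1-Lipschitz. Restricting the supremum to this class then gives $\sup_{\|\phi\|_{\mathrm{Lip}}\leq 1}(\E_\nu[\phi] - \E_\mu[\phi])$, matching the primal and completing the proof.

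The main obstacle is establishing Kantorovich duality itself, which requires either a Hahn–Banach separation argument in the space of continuous functions or an approximation-by-discrete-measures plus linear programming duality. Since this is a classical result and the excerpt cites Mariucci (Proposition~1.3), I would not reprove it in detail but rather invoke it and concentrate the exposition on the $c$-transform reduction that turns the two-function dual into a one-function Lipschitz dual.
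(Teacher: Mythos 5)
The paper does not prove this lemma at all: it is stated as a quoted classical result, with the proof delegated entirely to the cited reference (Mariucci, Proposition~1.3). Your sketch is the standard argument and is correct in its essentials. The easy inequality via couplings and the Lipschitz property is right, and the reduction of the two-function Kantorovich dual to the one-function Lipschitz form via the $c$-transform is the correct mechanism for the metric cost $c(x,y)=|x-y|$ (where $c$-concave functions are exactly the 1-Lipschitz ones and $\phi^c=-\phi$ on that class). Since you, like the paper, ultimately invoke the general duality theorem rather than reproving it, the two treatments are on the same footing; your added value is making explicit which part is elementary and which part carries the real weight. One small slip worth noting: the lemma's statement itself (copied from the source) writes ``Denote by $\mu$ and $\mu$ their laws,'' where the first $\mu$ should be $\nu$; your proof correctly treats $\nu$ and $\mu$ as the laws of $X$ and $Y$ respectively, so nothing downstream is affected.
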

\begin{lemma}\label{lem:wasserstein_bound_expected_difference}
    Let $f$ be $L_f$-Lipschitz with respect to a metric $d$. Then
    \begin{equation}
        | \E_{x \sim \nu(\cdot)} [f(x)] - \E_{y \sim \mu(\cdot)} [f(y)] | \leq L_f \W_1(\nu, \mu) \leq L_f \W_2(\nu, \mu)
    \end{equation}
\end{lemma}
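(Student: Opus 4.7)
The plan is to prove the two inequalities separately. For the first bound, $|\E_{x\sim\nu}[f(x)] - \E_{y\sim\mu}[f(y)]| \leq L_f \W_1(\nu,\mu)$, I would reduce directly to the Kantorovich-Rubinstein characterization stated in Lemma \ref{lem:kantorovich-rubinstein}. Since $f$ is $L_f$-Lipschitz with respect to $d$, the rescaled function $\phi := f/L_f$ is $1$-Lipschitz, so it is an admissible test function in the Kantorovich-Rubinstein supremum, yielding $\E_\nu[\phi] - \E_\mu[\phi] \leq \W_1(\nu,\mu)$. Since $-\phi$ is likewise $1$-Lipschitz, the same bound applies to the reversed difference, and multiplying through by $L_f$ gives the absolute-value inequality.

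For the second bound, $\W_1(\nu,\mu) \leq \W_2(\nu,\mu)$, I would apply Jensen's inequality (equivalently, the monotonicity of $L^p$ norms on a probability space) to any coupling $\pi \in \Pi(\nu,\mu)$. Treating $\pi$ as a probability measure on the product space, the convexity of $t \mapsto t^2$ gives $\int d(x,y)\,d\pi \leq \bigl(\int d(x,y)^2\,d\pi\bigr)^{1/2}$. Specializing to a $\W_2$-optimal (or near-optimal) coupling makes the right-hand side equal to $\W_2(\nu,\mu)$, while the left-hand side is an upper bound on $\W_1(\nu,\mu)$ because the same coupling is admissible in the $\W_1$ infimum. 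The inequality follows by taking the infimum on the left.

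There is no real obstacle here; the result is a routine application of Kantorovich-Rubinstein duality followed by Jensen's inequality. The only subtle point worth being explicit about is that the Kantorovich-Rubinstein supremum is over a signed quantity, so handling the absolute value on the left-hand side requires invoking both $\phi$ and $-\phi$ as admissible test functions.
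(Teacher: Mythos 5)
Your proposal is correct and follows essentially the same route as the paper: the first inequality via the Kantorovich--Rubinstein duality of \cref{lem:kantorovich-rubinstein} (with the standard rescaling by $L_f$ and the $\pm\phi$ trick for the absolute value), and the second via the monotonicity $\W_1 \leq \W_2$, which the paper simply cites from \citet[Lemma~1.2]{mariucci2018wasserstein} while you supply its standard proof by Jensen's inequality on a coupling. Both steps are sound; you have merely written out details the paper delegates to references.
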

\begin{proof}
    The first inequality is a direct consequence of \cref{lem:kantorovich-rubinstein} and the second inequality comes from the well-known fact that if $1 \leq p \leq q$, then $\W_p(\mu, \nu) \leq \W_q(\mu, \nu)$ (cf. \citet[Lemma~1.2]{mariucci2018wasserstein}).
\end{proof}
\begin{lemma}
    \label{lem:wasserstein_delta_integral}
    For any two functions $f(\state)$ and $g(\state)$ and probability density $p(\state)$ that govern the distributions defined by
    \begin{equation}\label{eq:wasserstein_delta_integral_xi_definition}
        p_1(\x_1) = \int \delta(\x_1 - f(\state)) p(\state) \d \state
        \quad \text{and} \quad
        p_2(\x_2) = \int \delta(\x_2 - g(\state)) p(\state) \d \state,
    \end{equation}
    it holds for any $q \geq 1$ that
    \begin{equation}
        \W_q^q\left( p_1, p_2 \right) \leq \int \| f(\state) - g(\state) \|^q p(\state) \d \state.
    \end{equation}
\end{lemma}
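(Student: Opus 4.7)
The plan is to exhibit an explicit coupling of $p_1$ and $p_2$ whose transport cost equals the right-hand side, and then invoke the fact that $\W_q^q$ is the infimum of the transport cost over all couplings. The coupling is the natural one induced by the common source variable $\state$: pushing $p(\state)$ forward jointly through $(f, g)$ produces a joint law on $(\x_1, \x_2)$ whose marginals are exactly $p_1$ and $p_2$.

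Concretely, I would proceed in four steps. First, define the candidate coupling
\begin{equation}
    \gamma(\x_1, \x_2) = \int \delta(\x_1 - f(\state))\, \delta(\x_2 - g(\state))\, p(\state)\, \d\state .
\end{equation}
Second, verify the marginal conditions: integrating $\gamma(\x_1, \x_2)$ against an arbitrary test function of $\x_2$ alone and using $\int \delta(\x_2 - g(\state))\,\d\x_2 = 1$ recovers the definition of $p_1(\x_1)$ in \cref{eq:wasserstein_delta_integral_xi_definition}, and symmetrically for $p_2$. Hence $\gamma \in \Gamma(p_1, p_2)$, the set of couplings.

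Third, compute the transport cost under $\gamma$. Since any pair $(\x_1, \x_2)$ drawn from $\gamma$ is almost surely of the form $(f(\state), g(\state))$ for $\state \sim p$, we have
\begin{equation}
    \E_{(\x_1, \x_2) \sim \gamma}\bigl[\| \x_1 - \x_2 \|^q\bigr]
    = \int \| f(\state) - g(\state) \|^q\, p(\state)\, \d\state .
\end{equation}
Fourth, apply the definition $\W_q^q(p_1, p_2) = \inf_{\gamma' \in \Gamma(p_1, p_2)} \E_{\gamma'}[\|\x_1 - \x_2\|^q]$. Since $\gamma$ is one particular element of $\Gamma(p_1, p_2)$, the infimum is at most the cost computed above, which gives the claimed inequality.

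There is essentially no obstacle: this is the standard fact that pushing forward a common base measure through two maps yields a valid coupling whose cost upper-bounds the $\W_q$ distance. The only care required is in treating the Dirac integrals in \cref{eq:wasserstein_delta_integral_xi_definition} rigorously; phrasing the argument in terms of pushforward measures $p_1 = f_\# p$ and $p_2 = g_\# p$ and the deterministic coupling $(f, g)_\# p$ makes the manipulation entirely formal and avoids any heuristic handling of $\delta$.
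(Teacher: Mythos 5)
Your proposal is correct and uses exactly the same argument as the paper: both exhibit the deterministic coupling $\gamma(\x_1,\x_2)=\int \delta(\x_1-f(\state))\,\delta(\x_2-g(\state))\,p(\state)\,\d\state$ induced by the common source variable and bound the Wasserstein infimum by the cost of this particular coupling. Your additional remarks on verifying the marginals and phrasing everything via pushforward measures are a sound way to make the Dirac manipulations rigorous, but they do not change the substance of the argument.
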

\begin{proof}
    We have
    \begin{align}
        \W_q^q\left( p_1(\x_1), p_2(\x_2)\right) & = \inf_{\gamma \in \Gamma(p_1, p_2)} \iint \| \xi_1 - \xi_2 \|^q \gamma(\xi_1, \xi_2) \d \xi_1 \d \xi_2                          \\
        \intertext{
            Enforcing the following structure on $\gamma(\xi_1, \xi_2)$ reduces the space of possible distributions: $\gamma(\xi_1, \xi_2) = \int \delta(\xi_1 - f(\state)) \delta(\xi_2 - g(\state)) p(\state) \d \state$, so that $\gamma(\xi_1, \xi_2) \in \Gamma(p_1, p_2)$ and thus}
                                                 & \leq \int \| \xi_1 - \xi_2 \|^q \int \delta( \xi_1 - f(\state)) \delta(\xi_2 - g(\state)) p(\state) \d \state  \d \xi_1 \d \xi_2 \\
        \intertext{Integrating over $\xi_1$ and $\xi_2$ yields}
                                                 & = \int \| f(\state) - g(\state) \|^q p(\state) \d \state
    \end{align}
\end{proof}

\begin{lemma}
    \label{lem:wasserstein_stateaction_dist}
    If the policy $\pi(\action_\ti \mid \state_\ti)$ is $L_{\pi}$-Lipschitz with respect to $\state_\ti$ under the Wasserstein distance, then with $p(\refstate, \refaction) = \pi(\refaction \mid \refstate) p(\refstate)$ and $\tilde{p}(\pstate, \paction) = \pi(\paction \mid \pstate) \tilde{p}(\pstate)$,
    \begin{equation}
        \W_2^2(p(\refstate, \refaction), \tilde{p}(\pstate, \paction)) \leq (1 + L_\pi^2) \W_2^2( p(\refstate), \tilde{p}(\pstate) )
    \end{equation}
\end{lemma}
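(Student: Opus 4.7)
The plan is to prove the inequality by exhibiting an explicit coupling between $p(\refstate, \refaction)$ and $\tilde{p}(\pstate, \paction)$ whose cost is at most $(1 + L_\pi^2) \W_2^2(p(\refstate), \tilde{p}(\pstate))$, and then invoking the fact that the Wasserstein distance is defined as the infimum over couplings. The key observation is that the squared Euclidean norm decomposes additively over the state and action blocks, i.e., $\|(\refstate, \refaction) - (\pstate, \paction)\|^2 = \|\refstate - \pstate\|^2 + \|\refaction - \paction\|^2$, so the joint transport cost splits into a state part and an action part that can be treated separately.

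The construction I would use is a glueing of two optimal couplings. First, let $\gamma^\star(\refstate, \pstate)$ be an optimal coupling for $\W_2^2(p(\refstate), \tilde{p}(\pstate))$. Next, for each pair $(\refstate, \pstate)$, let $\kappa_{\refstate, \pstate}^\star(\refaction, \paction)$ denote an optimal coupling for $\W_2^2(\pi(\cdot \mid \refstate), \pi(\cdot \mid \pstate))$. Define the joint coupling
\begin{equation}
    \gamma(\refstate, \refaction, \pstate, \paction)
    = \kappa^\star_{\refstate, \pstate}(\refaction, \paction) \, \gamma^\star(\refstate, \pstate).
\end{equation}
By construction, the marginal of $\gamma$ in $(\refstate, \refaction)$ is $\pi(\refaction \mid \refstate) p(\refstate) = p(\refstate, \refaction)$ and the marginal in $(\pstate, \paction)$ is $\pi(\paction \mid \pstate) \tilde{p}(\pstate) = \tilde{p}(\pstate, \paction)$, so $\gamma$ is a valid transport plan between the two joint distributions.

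To bound the transport cost, I would split the integral along state and action coordinates,
\begin{align}
    \W_2^2(p(\refstate, \refaction), \tilde{p}(\pstate, \paction))
    &\leq \int \|\refstate - \pstate\|^2 \, d\gamma + \int \|\refaction - \paction\|^2 \, d\gamma \notag \\
    &= \int \|\refstate - \pstate\|^2 \, d\gamma^\star(\refstate, \pstate)
    + \int \W_2^2\bigl(\pi(\cdot \mid \refstate), \pi(\cdot \mid \pstate)\bigr) \, d\gamma^\star(\refstate, \pstate),
\end{align}
where the first term equals $\W_2^2(p(\refstate), \tilde{p}(\pstate))$ by optimality of $\gamma^\star$ and the second term rewrites the inner action integral using optimality of $\kappa^\star_{\refstate, \pstate}$. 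Finally, the Lipschitz hypothesis on the policy gives $\W_2^2(\pi(\cdot \mid \refstate), \pi(\cdot \mid \pstate)) \leq L_\pi^2 \|\refstate - \pstate\|^2$, and collecting the two contributions yields the advertised $(1 + L_\pi^2)$ factor.

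The main obstacle is purely technical: the construction requires that the family $\{\kappa^\star_{\refstate, \pstate}\}$ can be chosen jointly measurably in $(\refstate, \pstate)$, so that $\gamma$ is indeed a probability measure and Fubini applies. Under mild regularity on $\statespace$ and $\actionspace$ (e.g.\ Polish spaces, which is standard in this setting) this follows from a measurable selection argument for optimal transport plans; for the purposes of this paper it suffices to note that the construction is standard in the coupling literature and does not introduce extra constants. Everything else is an elementary rearrangement.
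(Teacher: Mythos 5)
Your proof is correct and follows essentially the same route as the paper: both build a coupling of the joint distributions by glueing an optimal state coupling with optimal couplings of the conditional action distributions, split the squared Euclidean cost over the state and action blocks, and apply the Lipschitz assumption on the policy to the action term. The only difference is cosmetic — the paper handles the measurability issue by restricting the infimum to structured couplings and interchanging infimum and integral via a theorem of Rockafellar, whereas you invoke a measurable selection of optimal couplings, which you correctly flag as the standard technical point.
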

\begin{proof}
    \begin{align}
         & \W_2^2(p(\refstate, \refaction), \tilde{p}(\pstate, \paction))                                                                                                                                                                                                                              \\
         & = \inf_{\gamma \in \Gamma(p(\refstate, \refaction), \tilde{p}(\pstate, \paction))}
        \int \left[\| \refaction - \paction \|^2 + \|\refstate - \pstate \|^2 \right] \gamma(\refaction, \paction, \refstate, \pstate) \d \refaction \d \paction \d \refstate \d \pstate                                                                                                               \\
        \intertext{
            Enforcing the following structure on $\gamma$ reduces the space of possible distributions: $\gamma(\refstate, \pstate, \refaction, \paction) = \gamma(\refstate, \pstate) \gamma (\refaction, \paction \mid \refstate, \pstate )$ with $\gamma(\refstate, \pstate) \in \Gamma (p(\refstate), \tilde{p}(\pstate))$ and $\gamma (\refaction, \paction | \refstate, \pstate ) \in \Gamma(\pi(\refaction \mid \refstate), \pi(\paction \mid \pstate))$.}
         & \leq \inf_{\substack{\gamma(\refstate, \pstate) \in \Gamma(p(\refstate), \tilde{p}(\pstate))                                                                                                                                                                                                \\
                \gamma(\refaction, \paction \mid \refstate, \pstate) \in \Gamma(\pi(\refaction \mid \refstate), \pi(\paction \mid \pstate))}}
        \int \Big\{ \int \left[\| \refaction - \paction \|^2 \gamma(\refaction, \paction \mid \refstate, \pstate) \d \refaction \d \paction \Big\} + \|\refstate - \pstate \|^2 \right] \gamma(\refstate, \pstate) \d \refstate \d \pstate                                                             \\
        \intertext{
            Interchange infimum and the integral: \citet[Theorem~3A]{Rockafellar1976integral}}
         & = \inf_{\gamma(\refstate, \pstate) }
        \int \Big\{ \inf_{\gamma(\refaction, \paction \mid \refstate, \pstate)} \int \| \refaction - \paction \|^2 \gamma(\refaction, \paction \mid \refstate, \pstate) \d \refaction \d \paction  + \|\refstate - \pstate \|^2 \Big\} \gamma(\refstate, \pstate) \d \refstate \d \pstate \\
         & = \inf_{\gamma(\refstate, \pstate) \in \Gamma(p(\refstate), \tilde{p}(\pstate))}
        \int \Big\{ \W_2^2(\pi(\refaction \mid \refstate), \pi(\paction \mid \pstate)) + \|\refstate - \pstate \|^2 \Big\} \gamma(\refstate, \pstate) \d \refstate \d \pstate                                                                                                                                       \\
        \intertext{Using the assumption that the action distribution is $L_\pi$-Lipschitz continuous under the Wasserstein metric with respect to the state $\state$:}
         & \leq \inf_{\gamma(\refstate, \pstate) \in \Gamma(p(\refstate), \tilde{p}(\pstate))}
        \int (1 + L_\pi^2) \|\refstate - \pstate \|^2] \gamma(\refstate, \pstate) \d \refstate \d \pstate                                                                                                                                                                                       \\
         & = (1 + L_\pi^2) \W_2^2( p(\refstate), \tilde{p}(\pstate) )
    \end{align}
\end{proof}

\begin{lemma}[Average Squared Euclidean Distance Between Actions]
\label{lem:avg_sq_euclidean_dist_actions}

If the policy $\pi(\action \mid \state)$ is $L_{\pi}$-Lipschitz with respect to $\state$ under the Wasserstein distance and the policy's (co-)variance $\var[\pi(\action \mid \state)] = \Sigma_\pi(\state) \in \mathbb{S}_+^{\nact}$ is finite over the complete state space, i.e., $\max_{\state \in \statespace} \trace \{ \Sigma_\pi(\state) \} \leq \bar{\sigma}^2_\pi$, then
\begin{align}
    \E_{\substack{\refaction_\ti \sim \pi(\refstate_\ti) \\ \paction_\ti \sim \pi(\pstate_\ti)}} \big[ \| \refaction_\ti - \paction_\ti \|^2 \big] \leq L_\pi^2 \| \refstate_\ti - \pstate_\ti \|^2 + 2 \sqrt{\nact} \bar{\sigma}_\pi^2
\end{align}
\end{lemma}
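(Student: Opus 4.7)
My plan is to decompose $\|\refaction_\ti - \paction_\ti\|^2$ via the conditional means of the policy. Setting $\mu_\pi(\state) := \E_{\action \sim \pi(\cdot \mid \state)}[\action]$ and writing $\refaction_\ti = \mu_\pi(\refstate_\ti) + \varepsilon_1$, $\paction_\ti = \mu_\pi(\pstate_\ti) + \varepsilon_2$, with $\varepsilon_1$ and $\varepsilon_2$ independent, zero-mean conditional on $(\refstate_\ti, \pstate_\ti)$, and covariances $\Sigma_\pi(\refstate_\ti)$ and $\Sigma_\pi(\pstate_\ti)$ respectively, I would expand the squared norm:
\begin{equation*}
\|\refaction_\ti - \paction_\ti\|^2 = \|\varepsilon_1 - \varepsilon_2\|^2 + 2 (\varepsilon_1 - \varepsilon_2)^\top (\mu_\pi(\refstate_\ti) - \mu_\pi(\pstate_\ti)) + \|\mu_\pi(\refstate_\ti) - \mu_\pi(\pstate_\ti)\|^2 .
\end{equation*}
Taking expectations, the cross term vanishes by conditional zero-mean and independence of $\varepsilon_1, \varepsilon_2$ (the role of the supporting lemma on odd covariance terms in the dependency graph), and the noise contribution reduces to $\trace\{\Sigma_\pi(\refstate_\ti)\} + \trace\{\Sigma_\pi(\pstate_\ti)\}$.

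Next I would bound the two pieces separately. For the deterministic mean gap, the Wasserstein-Lipschitz assumption yields $\W_2(\pi(\cdot \mid \refstate_\ti), \pi(\cdot \mid \pstate_\ti)) \leq L_\pi \|\refstate_\ti - \pstate_\ti\|$, and since the $2$-Wasserstein distance upper bounds the Euclidean distance between the means (taking any optimal coupling and applying Jensen plus Cauchy--Schwarz gives $\|\mu_\pi(\refstate_\ti) - \mu_\pi(\pstate_\ti)\| = \|\E[X-Y]\| \leq \sqrt{\E\|X-Y\|^2} = \W_2$), it follows that $\|\mu_\pi(\refstate_\ti) - \mu_\pi(\pstate_\ti)\|^2 \leq L_\pi^2 \|\refstate_\ti - \pstate_\ti\|^2$, matching the first term of the stated bound. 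For the sum of traces, a direct application of $\max_\state \trace\{\Sigma_\pi(\state)\} \leq \bar{\sigma}_\pi^2$ would already give the tighter constant $2\bar{\sigma}_\pi^2$; to recover the stated $2\sqrt{\nact}\,\bar{\sigma}_\pi^2$ I would route through \cref{cor:avg_sq_euclidean_distance_bound}, which uses the Gelbrich bound \cref{thm:gelbrich_bound} together with the inequality $\trace(\Sigma) \leq \sqrt{\nact}\,\|\Sigma\|_F$ valid for positive semidefinite $\Sigma$. Going through this chain keeps the constant compatible with how the lemma is later consumed in \cref{lem:multi_step_wasserstein_bound}.

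The main obstacle is less analytical than it is bookkeeping. The mean/variance split and the standard ``$\W_2$ bounds the mean difference'' inequality are routine; what requires care is (i) rigorously justifying the vanishing of the cross term by first conditioning on \emph{both} $\refstate_\ti$ and $\pstate_\ti$ before invoking the conditional independence of $\refaction_\ti$ and $\paction_\ti$ (not the individual policy evaluations alone), and (ii) threading the $\sqrt{\nact}$ factor through the Gelbrich-based auxiliary results so that the final constant matches what the inductive step of \cref{lem:multi_step_wasserstein_bound} expects. Once those two points are handled, combining the two bounds immediately yields the claimed inequality.
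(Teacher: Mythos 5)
Your argument is correct, and your primary route is genuinely more elementary than the paper's. The paper proves this lemma by combining \cref{cor:avg_sq_euclidean_distance_bound} (which itself rests on the exact identity $\E[\|\x-\y\|^2]=\|\mu_\x-\mu_\y\|^2+\trace\{\Sigma_\x+\Sigma_\y\}$ from \cref{lem:avg_sq_euclidean_distance} together with the Gelbrich lower bound of \cref{thm:gelbrich_bound}) with \cref{lem:odd_covariance_term}, which controls the interaction trace $\trace\{(\Sigma_\pi(\refstate)^{1/2}\Sigma_\pi(\pstate)\Sigma_\pi(\refstate)^{1/2})^{1/2}\}\le\sqrt{\nact}\,\bar{\sigma}_\pi^2$; that is where the $\sqrt{\nact}$ enters. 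Your decomposition starts from the same mean/variance identity, but instead of invoking Gelbrich you bound only the mean gap by $\W_2$ (via Jensen on an optimal coupling) and bound the trace sum directly by $2\bar{\sigma}_\pi^2$. This bypasses \cref{thm:gelbrich_bound} and \cref{lem:odd_covariance_term} entirely and yields the strictly tighter constant $2\bar{\sigma}_\pi^2\le 2\sqrt{\nact}\,\bar{\sigma}_\pi^2$, so the stated inequality (and everything downstream in \cref{lem:multi_step_wasserstein_bound}) follows a fortiori; your optional detour through the Gelbrich chain to reproduce the published constant is unnecessary but harmless. One small correction on your reading of the paper's structure: \cref{lem:odd_covariance_term} is \emph{not} about the vanishing of the cross term $2(\varepsilon_1-\varepsilon_2)^\top(\mu_\pi(\refstate_\ti)-\mu_\pi(\pstate_\ti))$ in your expansion (that vanishing is handled implicitly inside \cref{lem:avg_sq_euclidean_distance} via independence); it bounds the cross-covariance trace appearing in the Gelbrich bound, a term your direct route never generates.
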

\begin{proof}
    Straightforward application of \cref{cor:avg_sq_euclidean_distance_bound,lem:odd_covariance_term}
\end{proof}

\begin{lemma}[Average Squared Euclidean Distance]
    \label{lem:avg_sq_euclidean_distance}
    Consider two random variables $\x, \y$ with distributions $p_\x, p_\y$,  mean vectors $\mu_\x, \mu_\y \in \R^m$ and covariance matrices $\Sigma_\x, \Sigma_\y \in \S_+^m$, respectively.
    Then the average squared Euclidean distance between the two is
    \begin{align}
        \E_{\x, \y} \big[ \| \x - \y \|^2 \big] = \| \mu_\x - \mu_\y \|^2 + \trace \big\{ \Sigma_\x + \Sigma_\y \big\}.
    \end{align}
\end{lemma}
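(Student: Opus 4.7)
The plan is to prove this standard identity by a centering decomposition, writing $\x = \mu_\x + \tilde{\x}$ and $\y = \mu_\y + \tilde{\y}$ where $\tilde{\x}$ and $\tilde{\y}$ are zero-mean random variables with covariances $\Sigma_\x$ and $\Sigma_\y$ respectively. This separates the deterministic mean contribution from the fluctuation contribution in a clean way.

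First I would substitute this decomposition into $\|\x - \y\|^2$ and expand
\begin{equation*}
\|\x - \y\|^2 = \|\mu_\x - \mu_\y\|^2 + 2(\mu_\x - \mu_\y)^\top(\tilde{\x} - \tilde{\y}) + \|\tilde{\x} - \tilde{\y}\|^2.
\end{equation*}
Taking the expectation term by term, the first summand is deterministic and gives $\|\mu_\x - \mu_\y\|^2$, and the second vanishes by linearity since $\E[\tilde{\x}] = \E[\tilde{\y}] = \vzero$. For the third term I would further expand $\|\tilde{\x} - \tilde{\y}\|^2 = \|\tilde{\x}\|^2 - 2\tilde{\x}^\top \tilde{\y} + \|\tilde{\y}\|^2$ and apply the standard trace identity $\E[\|\tilde{\z}\|^2] = \E[\trace\{\tilde{\z}\tilde{\z}^\top\}] = \trace\{\Sigma_\z\}$ to the two outer summands, which yields $\trace\{\Sigma_\x + \Sigma_\y\}$.

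The only subtle point — and the closest thing to an obstacle — is the remaining cross term $\E[\tilde{\x}^\top \tilde{\y}] = \trace\{\Cov(\x, \y)\}$, which does not appear in the statement of the lemma. To match the claimed identity I would note that the lemma implicitly assumes $\x$ and $\y$ are uncorrelated (in particular, the independent case); this is the setting in which it is invoked in Lemma~\ref{lem:avg_sq_euclidean_dist_actions}, where $\x$ and $\y$ correspond to two policy-action samples drawn independently given the coupled states. Under this assumption the cross term vanishes and collecting the three surviving contributions gives exactly $\|\mu_\x - \mu_\y\|^2 + \trace\{\Sigma_\x + \Sigma_\y\}$, completing the proof.
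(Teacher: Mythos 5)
Your proof is correct and takes essentially the same route as the paper's, which defines $\z = \x - \y$ and computes $\E[\z_i^2] = \E[\z_i]^2 + \Var[\z_i]$ componentwise. The only difference is that you surface explicitly the uncorrelatedness assumption needed to kill the cross term $\E[\tilde{\x}^\top\tilde{\y}]$, which the paper leaves implicit in the step $\Sigma_\z = \Sigma_\x + \Sigma_\y$; your observation that the lemma is only ever invoked on conditionally independent action samples is the right justification for that step.
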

\begin{proof}
    Define $\z = \x - \y$ with mean $\mu_\z = \mu_\x - \mu_\y$ and variance $\Sigma_\z = \Sigma_\x + \Sigma_\y$.
    \begin{align*}
        \E \big[ \| \z \|^2 \big] ={} & \E \big[ \sum_i \z_i^2 \big]                                          \\
        ={}                           & \sum_i \E \big[  \z_i^2 \big]                                         \\
        ={}                           & \sum_i \E [ \z_i ]^2  + \var[\z_i]                                    \\
        ={}                           & \mu_\z^\top \mu_\z + \trace \big\{ \Sigma_\z \big\}                   \\
        ={}                           & \| \mu_\x - \mu_\y \|^2 + \trace \big\{ \Sigma_\x + \Sigma_\y \big\}.
    \end{align*}
\end{proof}

\begin{theorem}[Gelbrich Bound (from \citet{kuhn2019wasserstein})]
    \label{thm:gelbrich_bound}
    If $\| \cdot \|$ is the Euclidean norm, and the distributions $p_\x$ and $p_\y$ have mean vectors $\mu_\x, \mu_\y \in \R^m$ and covariance matrices $\Sigma_\x, \Sigma_\y \in \S_+^m$, respectively, then
    \begin{align}
        \W_2(p_\x, p_\y) \geq \sqrt{ \| \mu_\x - \mu_\y \|^2 + \trace \big\{ \Sigma_\x + \Sigma_\y - 2 \big( \Sigma_\x^{1/2} \Sigma_\y \Sigma_\x^{1/2} \big)^{1/2} \big\} }.
    \end{align}
    The bound is exact if $p_\x$ and $p_\y$ are elliptical distributions with the same density generator.
\end{theorem}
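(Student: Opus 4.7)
}
The plan is to reduce the bound to a moment computation by expanding the squared Euclidean cost and then invoking the positive-semidefiniteness of the joint covariance of any coupling. Concretely, starting from the definition
\begin{align*}
    \W_2^2(p_\x, p_\y) = \inf_{\gamma \in \Gamma(p_\x, p_\y)} \E_{(X,Y) \sim \gamma} \big[ \| X - Y \|^2 \big] ,
\end{align*}
I would expand $\| X - Y \|^2 = \| X \|^2 + \| Y \|^2 - 2 \langle X, Y \rangle$ and take expectations. Because $\gamma$ has marginals $p_\x$ and $p_\y$, the first two terms are coupling-independent and evaluate to $\| \mu_\x \|^2 + \trace\{ \Sigma_\x \}$ and $\| \mu_\y \|^2 + \trace\{ \Sigma_\y \}$, respectively, via the identity $\E[\| Z \|^2] = \| \mu_Z \|^2 + \trace\{ \Sigma_Z \}$ used in \cref{lem:avg_sq_euclidean_distance}. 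The cross term decomposes as $\E[\langle X, Y \rangle] = \langle \mu_\x, \mu_\y \rangle + \trace\{ C \}$, where $C = \E_\gamma[(X - \mu_\x)(Y - \mu_\y)^\top]$ is the cross-covariance induced by $\gamma$. Collecting terms yields
\begin{align*}
    \W_2^2(p_\x, p_\y) = \| \mu_\x - \mu_\y \|^2 + \trace\{ \Sigma_\x + \Sigma_\y \} - 2 \sup_{\gamma \in \Gamma(p_\x, p_\y)} \trace\{ C(\gamma) \} .
\end{align*}

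The remaining task is to upper bound the supremum by $\trace\{ (\Sigma_\x^{1/2} \Sigma_\y \Sigma_\x^{1/2})^{1/2} \}$. For this I would use that any coupling's joint covariance must satisfy $\begin{bmatrix} \Sigma_\x & C \\ C^\top & \Sigma_\y \end{bmatrix} \succeq 0$. A Schur-complement argument (assuming $\Sigma_\x \succ 0$, with a standard perturbation-and-limit reduction otherwise) forces $C^\top \Sigma_\x^{-1} C \preceq \Sigma_\y$, equivalently $\| \Sigma_\x^{-1/2} C \Sigma_\y^{-1/2} \|_{\mathrm{op}} \leq 1$ after symmetrisation. Writing $C = \Sigma_\x^{1/2} K \Sigma_\y^{1/2}$ with $\| K \|_{\mathrm{op}} \leq 1$ and applying von Neumann's trace inequality to $\trace\{ C \} = \trace\{ K \cdot \Sigma_\y^{1/2} \Sigma_\x^{1/2} \}$ gives
\begin{align*}
    \trace\{ C \} \leq \sum_i \sigma_i( \Sigma_\y^{1/2} \Sigma_\x^{1/2} ) = \trace\bigl\{ (\Sigma_\x^{1/2} \Sigma_\y \Sigma_\x^{1/2})^{1/2} \bigr\} ,
\end{align*}
where the last equality uses that the singular values of $\Sigma_\y^{1/2} \Sigma_\x^{1/2}$ are the eigenvalues of $(\Sigma_\x^{1/2} \Sigma_\y \Sigma_\x^{1/2})^{1/2}$. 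Combining the two displays yields the claimed lower bound on $\W_2$.

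For the equality claim, I would construct a coupling that saturates the inequality. Take $X \sim p_\x$ arbitrary and define $Y = \mu_\y + T (X - \mu_\x)$ with the optimal transport map $T = \Sigma_\x^{-1/2} (\Sigma_\x^{1/2} \Sigma_\y \Sigma_\x^{1/2})^{1/2} \Sigma_\x^{-1/2}$, which is symmetric positive semidefinite and satisfies $T \Sigma_\x T = \Sigma_\y$. Then $Y$ has mean $\mu_\y$ and covariance $\Sigma_\y$, and a direct computation shows $\trace\{ \Cov(X,Y) \} = \trace\{ \Sigma_\x T \} = \trace\{ (\Sigma_\x^{1/2} \Sigma_\y \Sigma_\x^{1/2})^{1/2} \}$, so the bound is attained on this coupling \emph{at the level of second moments}. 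For elliptical distributions sharing a density generator, the linear map $T$ additionally pushes $p_\x$ onto $p_\y$ (by the affine invariance of elliptical families), so this coupling lies in $\Gamma(p_\x, p_\y)$ and the lower bound is achieved, giving exactness.

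The main obstacle I anticipate is the matrix step: cleanly deriving $\sup_\gamma \trace\{ C(\gamma) \} = \trace\{ (\Sigma_\x^{1/2} \Sigma_\y \Sigma_\x^{1/2})^{1/2} \}$ requires careful handling of the singular case $\Sigma_\x \succeq 0$ (rather than strictly positive definite) and a clean invocation of von Neumann's trace inequality together with the constraint $\| K \|_{\mathrm{op}} \leq 1$. Everything else is essentially moment bookkeeping and the construction of an explicit coupling.
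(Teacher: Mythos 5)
The paper does not prove this statement at all: it is imported verbatim as a known result from \citet{kuhn2019wasserstein} and used as a black box in \cref{cor:avg_sq_euclidean_distance_bound}, so there is no in-paper argument to compare against. Your proposal is a correct, self-contained derivation along the classical Gelbrich/Olkin--Pukelsheim lines, and every step checks out: the second-moment expansion reduces the problem to maximizing $\trace\{C(\gamma)\}$ over couplings; the positive-semidefiniteness of the joint covariance together with the Schur complement gives $\|K\|_{\mathrm{op}} \leq 1$ for $C = \Sigma_\x^{1/2} K \Sigma_\y^{1/2}$; von Neumann's inequality bounds $\trace\{C\}$ by the nuclear norm of $\Sigma_\y^{1/2}\Sigma_\x^{1/2}$, whose singular values are indeed the eigenvalues of $(\Sigma_\x^{1/2}\Sigma_\y\Sigma_\x^{1/2})^{1/2}$; and the explicit affine map $T$ saturates the bound and, for elliptical laws with a common generator, yields an admissible coupling. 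The only points that genuinely need care are the ones you already flag: when $\Sigma_\x$ or $\Sigma_\y$ is singular you must either pass to pseudoinverses (using that the joint PSD constraint forces the column space of $C$ into that of $\Sigma_\x$ and of $C^\top$ into that of $\Sigma_\y$) or argue by perturbation $\Sigma_\x + \eps I$ and continuity of both sides in $\eps$; note that only the lower bound is needed for the paper's use of the theorem, and that direction requires no attainment of the supremum, so the singular case is harmless there. What your proof buys over the paper's citation is transparency about where the cross-covariance term $\trace\{(\Sigma_\x^{1/2}\Sigma_\y\Sigma_\x^{1/2})^{1/2}\}$ comes from, which is exactly the quantity that \cref{lem:odd_covariance_term} subsequently bounds by $\sqrt{\nact}\,\bar{\sigma}_\pi^2$.
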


\begin{corollary}
    \label{cor:avg_sq_euclidean_distance_bound}
    Consider the same setting as in \cref{lem:avg_sq_euclidean_distance}, then the average squared Euclidean distance is bounded by
    \begin{align}
        \E_{\x, \y} \big[ \| \x - \y \|^2 \big] \leq \W_2^2(p_\x, p_\y) + 2 \trace \big\{ \big( \Sigma_\x^{1/2} \Sigma_\y \Sigma_\x^{1/2} \big)^{1/2} \big\}.
    \end{align}
\end{corollary}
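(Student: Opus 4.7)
The plan is to combine the two results stated immediately before the corollary, namely the exact expression from \cref{lem:avg_sq_euclidean_distance} and the Gelbrich lower bound from \cref{thm:gelbrich_bound}, via a simple algebraic rearrangement.

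First, I would apply \cref{lem:avg_sq_euclidean_distance} to rewrite the left-hand side exactly as
\begin{equation*}
\E_{\x,\y}\big[\|\x-\y\|^2\big] = \|\mu_\x - \mu_\y\|^2 + \trace\big\{\Sigma_\x + \Sigma_\y\big\}.
\end{equation*}
Next, I would square both sides of the Gelbrich bound in \cref{thm:gelbrich_bound}, which is permitted because $\W_2 \geq 0$ and the quantity under the square root on the right-hand side is non-negative (it equals the squared Bures/Wasserstein distance for Gaussians and is the exact value of $\W_2^2$ in that elliptical case). This yields
\begin{equation*}
\W_2^2(p_\x, p_\y) \geq \|\mu_\x - \mu_\y\|^2 + \trace\big\{\Sigma_\x + \Sigma_\y\big\} - 2\trace\Big\{\big(\Sigma_\x^{1/2}\Sigma_\y\Sigma_\x^{1/2}\big)^{1/2}\Big\}.
\end{equation*}

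Rearranging this inequality to isolate $\|\mu_\x - \mu_\y\|^2 + \trace\{\Sigma_\x + \Sigma_\y\}$ on the left and substituting into the identity from the first step completes the proof:
\begin{equation*}
\E_{\x,\y}\big[\|\x-\y\|^2\big] = \|\mu_\x - \mu_\y\|^2 + \trace\{\Sigma_\x + \Sigma_\y\} \leq \W_2^2(p_\x, p_\y) + 2\trace\Big\{\big(\Sigma_\x^{1/2}\Sigma_\y\Sigma_\x^{1/2}\big)^{1/2}\Big\}.
\end{equation*}

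Since both ingredients are already established in the preceding lemma and theorem, there is essentially no obstacle: the proof is a one-line combination once the Gelbrich bound is squared and rearranged. The only subtlety worth remarking on is the squaring step, which is unambiguous here because $\W_2$ is non-negative and the radicand on the right of \cref{thm:gelbrich_bound} is non-negative (it bounds a squared metric from below); no case analysis or additional assumptions are needed beyond those already imposed in \cref{lem:avg_sq_euclidean_distance}.
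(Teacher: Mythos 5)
Your proof is correct and matches the paper's argument exactly: the paper's proof is a one-line citation of \cref{lem:avg_sq_euclidean_distance} and \cref{thm:gelbrich_bound}, and your squaring-and-rearranging of the Gelbrich bound is precisely the intended "straightforward application." No issues.
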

\begin{proof}
    Straightforward application of the results from \cref{lem:avg_sq_euclidean_distance} and \cref{thm:gelbrich_bound}.
\end{proof}

\begin{lemma}
\label{lem:odd_covariance_term}
If the policy's (co-)variance $\var[\pi(\action \mid \state)] = \Sigma_\pi(\state) \in \mathbb{S}_+^{\nact}$ is finite over the complete state space, i.e., $\max_{\state \in \statespace} \trace \{ \Sigma_\pi(\state) \} \leq \bar{\sigma}^2_\pi$, then
\begin{align}
    \trace \big\{ \big( \Sigma_\pi(\refstate)^{1/2} \Sigma_\pi(\pstate) \Sigma_\pi(\refstate)^{1/2} \big)^{1/2} \big\} \leq \sqrt{\nact} \bar{\sigma}_\pi^2
\end{align}
\end{lemma}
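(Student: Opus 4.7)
The plan is to reduce the statement to two standard inequalities: a Cauchy--Schwarz bound that converts a sum of square roots to a square root of a sum, and an operator-norm bound on the trace of a product of positive semidefinite matrices.

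First I would let $M = \Sigma_\pi(\refstate)^{1/2} \Sigma_\pi(\pstate) \Sigma_\pi(\refstate)^{1/2}$, noting that $M \in \mathbb{S}_+^{\nact}$ since it is a congruence of a PSD matrix. Denote its eigenvalues by $\lambda_1, \ldots, \lambda_{\nact} \geq 0$. Then $\trace(M^{1/2}) = \sum_{i=1}^{\nact} \sqrt{\lambda_i}$, and applying the Cauchy--Schwarz inequality to the vectors $(1,\ldots,1)$ and $(\sqrt{\lambda_1},\ldots,\sqrt{\lambda_{\nact}})$ yields
\begin{equation*}
\trace(M^{1/2}) \;\leq\; \sqrt{\nact} \, \sqrt{\sum_{i=1}^{\nact} \lambda_i} \;=\; \sqrt{\nact \cdot \trace(M)}.
\end{equation*}

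Next I would simplify $\trace(M)$ via the cyclic property of the trace, giving $\trace(M) = \trace(\Sigma_\pi(\refstate)\,\Sigma_\pi(\pstate))$. Then I invoke the standard fact that for PSD matrices $A, B$, one has $\trace(AB) \leq \|B\|_{\mathrm{op}}\,\trace(A) \leq \trace(B)\,\trace(A)$ (the first inequality follows from rewriting $\trace(AB) = \trace(A^{1/2} B A^{1/2})$ and bounding $B \preceq \|B\|_{\mathrm{op}} I$; the second holds because the operator norm of a PSD matrix is dominated by its trace). Combining with the uniform bound $\trace(\Sigma_\pi(\cdot)) \leq \bar{\sigma}_\pi^2$ from the hypothesis yields $\trace(M) \leq \bar{\sigma}_\pi^4$, and substituting back gives
\begin{equation*}
\trace(M^{1/2}) \;\leq\; \sqrt{\nact \cdot \bar{\sigma}_\pi^4} \;=\; \sqrt{\nact}\,\bar{\sigma}_\pi^2,
\end{equation*}
which is the claim.

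The proof is essentially routine; there is no single hard step. The only mildly delicate point is justifying the inequality $\trace(AB) \leq \trace(A)\trace(B)$ for PSD $A, B$, since $AB$ itself need not be symmetric, so I would explicitly route through the PSD form $A^{1/2} B A^{1/2}$ to make the argument clean. Everything else is eigenvalue bookkeeping and Cauchy--Schwarz.
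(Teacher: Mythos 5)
Your proof is correct and follows essentially the same route as the paper: both bound $\trace(M^{1/2})\leq\sqrt{\nact\,\trace(M)}$ via an eigenvalue argument (your Cauchy--Schwarz step is the paper's Jensen step), use cyclicity of the trace, and then bound $\trace(\Sigma_\pi(\refstate)\Sigma_\pi(\pstate))\leq\trace(\Sigma_\pi(\refstate))\trace(\Sigma_\pi(\pstate))$. Your justification of that last inequality via $B\preceq\|B\|_{\mathrm{op}}I$ inside the PSD form $A^{1/2}BA^{1/2}$ is slightly more explicit than the paper's appeal to Cauchy--Schwarz, but the argument is the same.
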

\begin{proof}
\begin{align}
    & \trace \big\{ \big( \Sigma_\pi(\refstate)^{1/2} \Sigma_\pi(\pstate) \Sigma_\pi(\refstate)^{1/2} \big)^{1/2} \big\} \\
    \intertext{The trace of a matrix is the same as the sum of its eigenvalues, and the square root of a matrix has eigenvalues that are square root of its eigenvalues.
    From Jensen's inequality we know that $\sum_{i=1}^{\nact} \sqrt{\lambda_i} \leq \sqrt{{\nact} \sum_{i=1}^{\nact} \lambda_i}$ and consequently it holds for a matrix $\M \in \R^{{\nact} \times {\nact}}$ that $\trace \{\M^{1/2}\} \leq \sqrt{{\nact} \trace\{\M\}}$
    , so that}
    \leq{}& \sqrt{{\nact} \trace \{ \Sigma_\pi(\refstate)^{1/2} \Sigma_\pi(\pstate) \Sigma_\pi(\refstate)^{1/2} \}} \\
    \intertext{The trace is invariant under cyclic permutation}
    ={}& \sqrt{{\nact} \trace \{ \Sigma_\pi(\refstate)\Sigma_\pi(\pstate) \}} \\
    \intertext{Since both matrices are positive semi-definite, it follows from the Cauchy-Schwartz inequality
    that}
    \leq{}& \sqrt{{\nact} \trace \{ \Sigma_\pi(\refstate) \} \trace \{\Sigma_\pi(\pstate) \}} \\
    \intertext{By assumption, the covariance matrices' traces are bounded }
    \leq{}& \sqrt{\nact} \bar{\sigma}_\pi^2
\end{align}    
\end{proof}

\subsection{Model Errors in \ourmethod}
\label{ap:opc_model_error_analysis}

While \cref{thm:on_policy_error_bound_full} highlights that OPC counteracts the on-policy error in predicted performance, for stochastic policies we use the Lipschitz continuity of the model to upper-bound errors. In this section, we look at the impact of model errors in combination with \ourmethod. Specifically we focus on the one-step prediction case from a known initial state $\refstate_0$. There, while for the model $\learnedmodel$ without \ourmethod the prediction error only depends on the quality of the model, with \ourmethod it is instead the minimum of the model error and the policy variance. This is advantageous, since typical environments tend to have more states than actions, so that the trace of the policy variance can be significantly smaller than the full-state model error.

\begin{lemma}
Under the assumptions of \cref{thm:on_policy_error_bound_full}, starting from an initial state $\refstate_0$ the following condition holds:
\begin{align*}
    &\W_2^2(\opcintmodel(\pstate_{1}), p(\refstate_{1})) \\
    \leq {}& \mathrm{min} \Bigg( 
        \bigO \Big( \trace \{ \var [ \pi(\cdot \mid \refstate_0)] \} \Big),
        \bigO \Big( \undertext{One-step model error}{\int \| f(\refstate_0, \action) - \tilde{f}(\refstate_0, \action) \|^2 \d \pi(\action \mid \refstate_0) } \Big)
    \Bigg)
\end{align*}
\end{lemma}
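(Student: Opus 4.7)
The proof splits into two independent bounds; taking their minimum gives the claim.

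\textbf{Part 1 (policy-variance bound).} This follows essentially from the base case $\ti=1$ in the proof of Lemma~\ref{lem:multi_step_wasserstein_bound}, specialized to a deterministic initial state. Starting from $\pstate_0 = \refstate_0$ known, the chain of inequalities through \cref{eq:base_case_intermediate} gives
\[
\W_2^2(\opcintmodel(\pstate_1), p(\refstate_1)) \leq \int \| \tilde{f}(\pstateaction_0) - \tilde{f}(\refstateaction_0) \|^2 \, \d p(\refstateaction_0, \pstateaction_0).
\]
Using $L_f$-Lipschitz continuity of $\tilde{f}$ and the fact that $\pstate_0 = \refstate_0$ forces the state part of the difference to vanish, the right-hand side reduces to $L_f^2 \, \E_{\refaction_0, \paction_0 \sim \pi(\cdot\mid \refstate_0)}[\|\paction_0 - \refaction_0\|^2]$. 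Since $\refaction_0, \paction_0$ are i.i.d.\ draws from $\pi(\cdot\mid \refstate_0)$, this expectation equals $2\,\trace\{\var[\pi(\cdot\mid\refstate_0)]\}$, giving the first bound.

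\textbf{Part 2 (model-error bound).} The idea is to construct a different coupling that shares randomness across the two trajectories. Note that $\opcintmodel$ and $p$ both involve the \emph{true} transition kernel; we exploit this by assuming the standard additive-noise structure $p(\cdot\mid\refstate_0,\action) = \delta(\cdot - f(\refstate_0,\action)) * p_\xi$ used in MBRL models. Then construct the coupling: draw $\refaction_0, \paction_0 \sim \pi(\cdot\mid\refstate_0)$ i.i.d., set the true-trajectory action $\action_0 = \paction_0$, and share a single noise realization $\xi \sim p_\xi$ so that $\refstate_1 = f(\refstate_0,\refaction_0)+\xi$ (OPC reference) and $\refstate_1' = f(\refstate_0,\paction_0)+\xi$ (true trajectory). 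Under this coupling,
\[
\pstate_1 - \refstate_1' \;=\; \bigl[f(\refstate_0,\refaction_0) - \tilde{f}(\refstate_0,\refaction_0)\bigr] - \bigl[f(\refstate_0,\paction_0) - \tilde{f}(\refstate_0,\paction_0)\bigr] \;=\; e(\refaction_0) - e(\paction_0),
\]
where $e(\action) := f(\refstate_0,\action) - \tilde{f}(\refstate_0,\action)$. Because $\refaction_0$ and $\paction_0$ are i.i.d., $\E\|e(\refaction_0) - e(\paction_0)\|^2 = 2\,\trace\{\var[e(\action)]\} \leq 2\,\E\|e(\action)\|^2 = 2\int \|f(\refstate_0,\action) - \tilde{f}(\refstate_0,\action)\|^2 \, \d\pi(\action\mid\refstate_0)$, which is the second bound.

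\textbf{Expected difficulty.} Part 1 is essentially a specialization of existing machinery in the paper. The main subtlety lies in Part 2: we need a coupling that is different from the ``deterministic branch off a reference trajectory'' coupling used in Lemma~\ref{lem:multi_step_wasserstein_bound}. Both bounds are valid upper bounds on $\W_2^2$ (any coupling gives an upper bound), so we may simply take whichever coupling is tighter, yielding the minimum. The one caveat is that the shared-noise coupling in Part 2 relies on an additive-noise or common-noise-generator structure on $p(\cdot\mid\refstate_0,\action)$; without this, one can still use the optimal coupling between $p(\cdot\mid\refstate_0,\refaction_0)$ and $p(\cdot\mid\refstate_0,\paction_0)$ and absorb an additive term into the $\bigO$, but the argument is cleanest under the standard MBRL noise assumption and should be stated explicitly.
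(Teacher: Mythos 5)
Your proposal is correct and follows essentially the same route as the paper: the first bound is the base case of Lemma~\ref{lem:multi_step_wasserstein_bound}, and the second bound uses exactly the paper's coupling in which the true trajectory's action is identified with the \ourmethod{} rollout's action and the (additive) transition noise is shared, yielding the difference of model errors $e(\refaction_0)-e(\paction_0)$ and the factor of two. Your explicit caveat about the additive-noise structure is apt --- the paper relies on it implicitly by writing $p(\cdot\mid\state,\action)$ as a convolution of a Dirac delta with $p_\epsilon$ --- and your passage through $2\,\trace\{\var[e(\action)]\}$ is a slightly cleaner justification of the step the paper writes as an equality.
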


\begin{proof}
The first term in the minimum follows directly from the base-case of \cref{lem:multi_step_wasserstein_bound}. For the second term, follow the same derivation, but note that under the distribution $p(\refstate_0, \refaction_0, \pstate_0, \paction_0) = p(\refstate_0) \pi(\refaction_0 \mid \refstate_0) \delta(\pstate_0 - \refstate_0) \pi(\paction_0 \mid \pstate_0)$ we have $\int p(\refstate_{1} \mid \refstateaction_0) \d p(\refstateaction_0) = \int p(\state_{1} \mid \stateaction_0) \d p(\stateaction_0)$. Inserting this into the r.h.s. of \cref{eq:base_case_start} and following the same steps we obtain

\begin{align}
    & \W_2^2(\opcintmodel(\pstate_{1}), p(\refstate_{1}))  \notag \\
    \leq {}&  \W_2^2 \left(
    \iint \delta(\pstate_1 - [ f(\refstateaction_0) + \tilde{f}(\pstateaction_0) - \tilde{f}(\refstateaction_0) ]) \d p(\refstateaction_0) p(\pstateaction_0),
    \int \delta(\state_1 - f(\stateaction_0) ) \d p(\stateaction_0)
    \right)  \\
    \leq{} & \iint \| f(\refstateaction_0) + \tilde{f}(\pstateaction_0) - \tilde{f}(\refstateaction_0) - f(\stateaction_0) \|^2
    \d p(\refstateaction_0, \pstateaction_0) \\
    ={}    & \iint 
          \| f(\refstateaction_0) -  \tilde{f}(\refstateaction_0) \|^2
        + \| f(\stateaction_0) -  \tilde{f}(\stateaction_0) \|^2
    \d p(\refstateaction_0, \pstateaction_0) \\
    ={} & 2 \int \| f(\refstateaction_0) -  \tilde{f}(\refstateaction_0) \|^2 \d p(\pstateaction_0) \\
    ={} & 2 \int \| f(\refstate_0, \action) - \tilde{f}(\refstate_0, \action) \|^2 \d p(\action \mid \refstate_0)
\end{align}
\end{proof}

Note the additional factor of two in front of the upper bound on the model error, which comes from using the model `twice': once with $\refstateaction$ and once with $\stateaction$. In practice we do not see any adverse effects of this error, presumably because either the variance of the policy is sufficiently small, or due to the upper bound being lose in practice. 
\FloatBarrier
\section{Motivating Example -- In-depth Analysis}\label{app:motivating_example}

In this section, we re-visit the motivating example presented in \cref{sec:motivating_example} of the main paper.
For completeness, we re-state all assumptions that lead to the simplified system at hand.
We continue with an analysis of the reward landscape and how \ourmethod influences its shape.
Next, we investigate how an increasing mismatch of the dynamics model impacts the gradient error.
In addition to the result presented in the main paper, we here show the influence of different model errors, i.e., $\Delta \A$ as well as $\Delta \B$.
While \ourmethod is motivated for the use case of on-policy \gls{rl} algorithms, we further show that the resulting gradients are robust with respect to differences in data-generating and evaluation policy, i.e., the off-policy setting.
Lastly, we state the the signed gradient distance that we use for evaluation of the gradient errors, state the relevant theorem for determining the closed-loop stability of linear systems, as well as all numerical values used for the motivating example.

\subsection{Setup}
Here, we assume a linear system with deterministic dynamics
\begin{align}
    p(\state_{\ti + 1} \mid \state_\ti, \action_\ti) = \delta(\A \state_\ti + \B \action_\ti \mid \state_\ti, \action_\ti), \quad \rho_0(\state_0) = \delta(\state_0)
\end{align}
with $\A,\B \in \R$ and $\delta(\cdot)$ denoting the Dirac-delta distribution.
The linear policy and bell-shaped reward are given by the following equations
\begin{align}
    \pi_\theta(\action_\ti \mid \state_\ti) = \delta(\theta \state_\ti \mid \state_\ti) \text{ with } \theta \in \R 
    \quad \text{and} \quad 
    r(\state_\ti, \action_\ti) = \exp\left\{ -\left(\frac{\state_\ti}{\sigma_r}\right)^2 \right\}.
\end{align}
Further, we assume to have access to an approximate dynamics model $\tilde{p}$ 
\begin{align}
    \tilde{p}(\state_{\ti + 1} \mid \state_\ti, \action_\ti) = \delta((\A + \Delta \A) \state_\ti + (\B + \Delta \B) \action_\ti \mid \state_\ti, \action_\ti),
\end{align}
where $\Delta \A, \Delta \B$ quantify the mismatch between the approximate model and the true system.
For completeness, the (deterministic) policy gradient is defined as
\begin{align}
    \nabla_\theta \frac{1}{\Ti}\sum_{\ti=0}^{\Ti-1} r(\pstate_\ti, \paction_\ti),
\end{align}
where the state/action pairs are obtained by simulating any of the two above models for $T$ time-steps and following policy $\pi_\theta^\iteration$ resulting in the trajectory  \mbox{$\tilde{\tau}^{\iteration} = \{ ( \pstate^{\iteration}_{\ti}, \paction^{\iteration}_\ti ) \}_{\ti=0}^{\Ti - 1}$}.
Because both the model and policy are deterministic, we can compute the analytical policy gradient from only one rollout.

\FloatBarrier
\subsection{Reward Landscapes}
\begin{figure}[t]
  \centering


  \includegraphics[width=\linewidth]{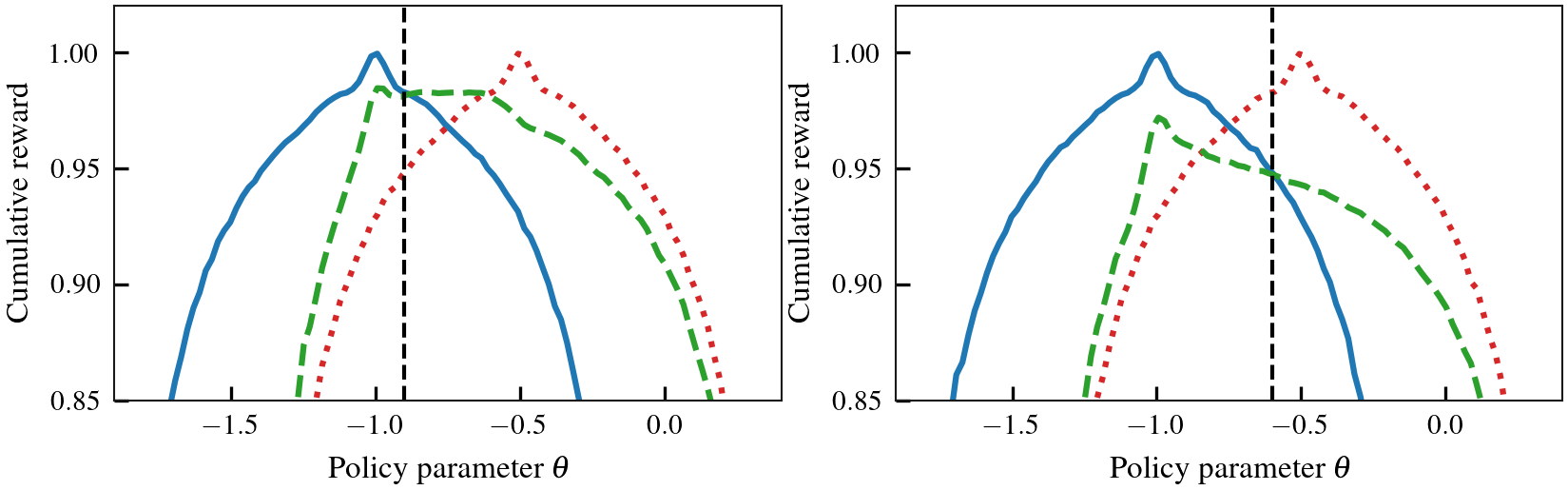}

  \begin{tikzpicture}
    \footnotesize
    \def\barhalfheight{0.15}
    \def\barwidth{0.5}
    \def\dy{0.5}
    \def\dxtext{0.9}
    \def\dxbar{0.2}
    \def\dx{3.1}

    \newcommand{\LegendList}{
      0/tableauC0/solid/True system,
      1/tableauC3/dotted/Model,
      2/tableauC2/dashed/Model + \ourmethod
    }

    \foreach \i/\markercolor/\linestyle/\entry in \LegendList {
      \node[anchor=west] at (\dxtext + \i*\dx,0.0) {\entry};
      \draw [-,\markercolor, \linestyle, line width = 1.0pt] (\dxbar + \i*\dx, 0.0) -- (\dxbar + \i*\dx + \barwidth, 0.0);
    }

    \node[anchor=west] at (11.2,0.0) {Reference policy $\pi^\iteration$};
    \draw [-,black, dashed, line width = 1.0pt] (10.5, 0.0) -- (10.5 + \barwidth, 0.0);

  \end{tikzpicture}

  \caption{Cumulative reward for different systems as a function of the policy parameter.
    The reference trajectory that is used for \ourmethod was generated by $\pi_\theta^\iteration$ (denoted by the black dashed line). The model mismatch between the true system and the approximated model is $\Delta \A = 0.5, \Delta \B = 0.0$.}
  \label{fig:appendix_ex1d_cost_visualization}
\end{figure}

In a first step, we will look at the cumulative rewards as a function of the policy parameter for the different systems at hand: 1) the true system, 2) the approximate model without \ourmethod and 3) the approximate model with \ourmethod.
Further, let's assume that the model mismatch is fixed to some arbitrary value.
The resulting reward landscapes are shown in \cref{fig:appendix_ex1d_cost_visualization}.
We would like to emphasize several key aspects in the plots:
First, as one would expect the model mismatch leads to different optimal policies as well as misleading policy gradients for large parts of the policy parameter space.
Second, the reward landscape for the model with \ourmethod depends on the respective reference policy $\pi^\iteration$ that was used to generate the data for the corrections.
Consequently, the correct reward is recovered at $\theta = \theta^\iteration$.
More importantly, the \ourmethod reshape the reward landscape such that the policy gradients point towards the correct optimum (left plot).
Lastly, even when using \ourmethod the policy gradient's sign is not guaranteed to have the correct sign (right plot).
The extent of this effect strongly depends on the model mismatch, which we will investigate in the next section.

\FloatBarrier
\subsection{Influence of Model Error}
\begin{figure}[t]
    \centering
    \begin{subfigure}[b]{\textwidth}
        \centering
        \includegraphics[width=\linewidth, trim=0 0 0 0, clip]{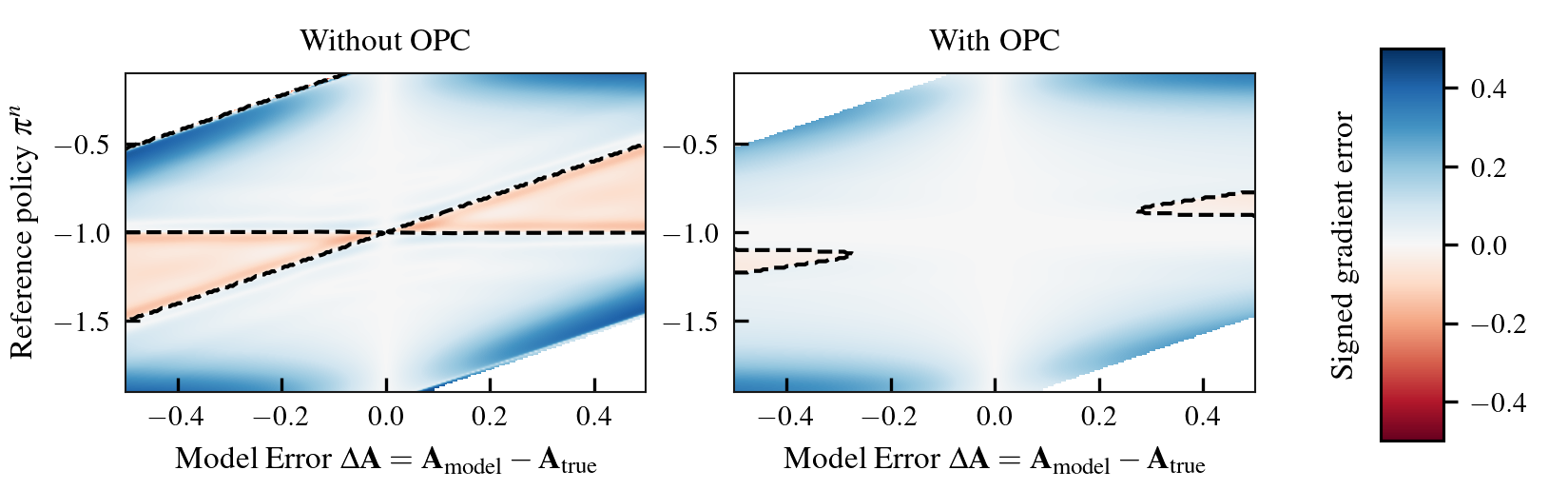}
        \caption{Gradient error when varying model error $\Delta \A$}
        \label{fig:appendix_ex1d_gradient_errors_delta_A}
    \end{subfigure}
    \begin{subfigure}[b]{\textwidth}
        \centering
        \includegraphics[width=\linewidth, trim=0 0 0 0, clip]{figures/example_1d/ex_1d_grad_error_visualization_action.png}
        \caption{Gradient error when varying model error $\Delta \B$}
        \label{fig:appendix_ex1d_gradient_errors_delta_B}
    \end{subfigure}
    \caption{Signed gradient error (see Equation~\eqref{eq:signed_gradient_error}) when using the approximate model to estimate the policy gradient without (left) and with (right) \glspl{opc}.
        Using \glspl{opc} increases the robustness of the gradient estimate with respect to the model error.}
    \label{fig:appendix_ex1d_gradient_errors}
\end{figure}

As shown in the previous section, the estimated policy gradient depends on the current policy as well as the mismatch between the true system and the approximate model.
\cref{fig:ex1d_gradient_errors} depicts the (signed) differences between the true policy gradient as well as the approximated gradient as a function of model mismatch and the reference policy.
Here, the opacity of the background denotes the magnitude of the error and the color denotes if the true and estimated gradient have the same (blue) or oppposite (red) sign.
In the context of policy learning, the sign of the gradient is more relevant than the actual magnitude due to internal re-scaling of the gradients in modern implementations of stochastic optimizers such as Adam \citep{Kingma14Adam}.
In our example, even for negligible model errors (either in $\Delta \A$ or $\Delta \B$), the model-based approach can lead to gradient estimates with the opposite sign, indicated by the large red areas for the left figures in \cref{fig:ex1d_gradient_errors}.
On the other hand, applying \ourmethod to the model, we gradient estimates are significantly more robust with respect to errors in the dynamics.x

\FloatBarrier
\subsection{Influence of Off-Policy Error}
\begin{figure}[t]
    \centering
    \begin{subfigure}[b]{\textwidth}
        \centering
        \includegraphics[width=\linewidth]{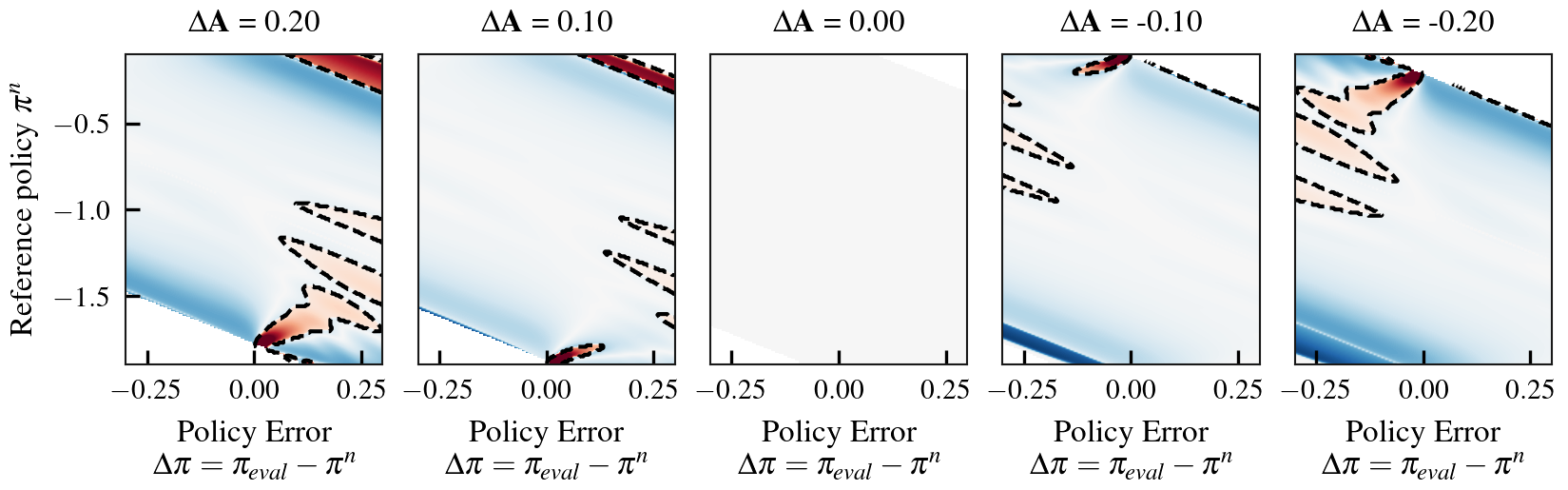}
        \caption{Model error $\Delta \A$.}
        \label{fig:appendix_ex1d_gradient_errors_off_policy_state}
    \end{subfigure}
    \begin{subfigure}[b]{\textwidth}
        \centering
        \includegraphics[width=\linewidth]{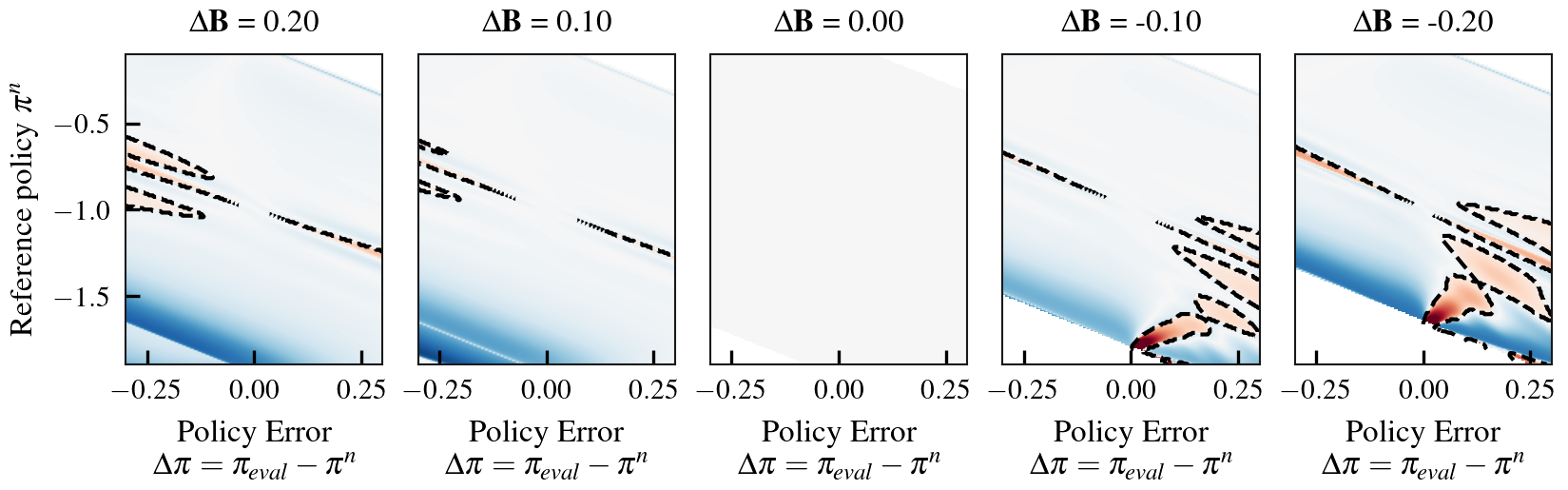}
        \caption{Model error $\Delta \B$.}
        \label{fig:appendix_ex1d_gradient_errors_off_policy_delta_B}
    \end{subfigure}
    \caption{Signed gradient error due to off-policy data when using \gls{opc}. Note that we retain the true gradient in case of no model error.}
    \label{fig:appendix_ex1d_gradient_errors_off_policy}
\end{figure}

Until now we have considered the case in which the reference trajectory used for \ourmethod is generated with the same policy as the one used for gradient estimation, i.e., the on-policy setting.
In this case, we have observed that the true return could be recovered (see \cref{fig:appendix_ex1d_cost_visualization}) when using \ourmethod and that the gradient estimates are less sensitive to model errors (see \cref{fig:appendix_ex1d_gradient_errors}).
The off-policy case corresponds to the policy gains in \cref{fig:appendix_ex1d_cost_visualization} that are different from the reference policy $\pi^\iteration$ indicated by the dashed line.
\cref{fig:appendix_ex1d_gradient_errors_off_policy} summarizes the results for the off-policy setting.
Here, we varied the policy error and the reference policy itself for varying model errors.
Note that for the correct model, we always recover the true gradient.
But also for inaccurate models, the gradient estimates retain a good quality in most cases, with the exception for some model/policy combinations that are close to unstable.

\FloatBarrier
\subsection{Additional Information}

Next, we provide some additional information about how we compute gradient distances, properties of linear systems, and exact numerical values used.

\FloatBarrier
\subsubsection{Computing the Signed Gradient Distance}

\begin{figure}[htbp]
    \centering
    
\begin{tikzpicture}[
    tangent/.style={
        decoration={
            markings,
            mark=
                at position #1
                with
                {
                    \coordinate (tangent point-\pgfkeysvalueof{/pgf/decoration/mark info/sequence number}) at (0pt,0pt);
                    \coordinate (tangent unit vector-\pgfkeysvalueof{/pgf/decoration/mark info/sequence number}) at (1,0pt);
                    \coordinate (tangent orthogonal unit vector-\pgfkeysvalueof{/pgf/decoration/mark info/sequence number}) at (0pt,1);
                }
        },
        postaction=decorate
    },
    use tangent/.style={
        shift=(tangent point-#1),
        x=(tangent unit vector-#1),
        y=(tangent orthogonal unit vector-#1)
    },
    use tangent/.default=1
]

\draw [black, thick,
    tangent=0.41,
    tangent=0.65
] (1,1)
    to [out=0,in=180] (3, 2)
    to [out=0, in=180] (5, 1);
\draw [name path=tangent 1, blue!30!white, thick, dashed, use tangent] (-1.5,0) -- (3,0);
\draw [-{Latex[length=2mm]}, blue, thick, use tangent] (0,0) -- (1.5, 0);
\node [circle, draw=blue, fill=blue, inner sep=0pt, minimum size=3pt, use tangent, label=below:{\color{blue}$g_1$}] at (0.0, 0.0) {};

\draw [name path=tangent 2, orange!30!white, thick, dashed, use tangent=2] (-2.5,0) -- (2,0);
\draw [-{Latex[length=2mm]}, orange, thick, use tangent=2] (0,0) -- (2, 0);
\node [circle, draw=orange, fill=orange, inner sep=0pt, minimum size=3pt, use tangent=2, label=below:{\color{orange}$g_2$}] at (0.0, 0.0) {};

\path [name intersections={of=tangent 1 and tangent 2,by=intersec}];

\draw (intersec) ++(23:1.5) arc (23:-35:1.5) node [midway, label=right:{$\Delta g$}] {};

\end{tikzpicture}

\caption{Sketch depicting the signed gradient distance \cref{eq:signed_gradient_error}. In this particular case, gradient $g_1$ is positive and $g_2$ is negative. }
\label{fig:signed_gradient_error_sketch}
\end{figure}

In order to compare two (1-dimensional) gradients in terms of sign and magnitude, we use the following formula
\begin{align}\label{eq:signed_gradient_error}
    d(g_1, g_2) =
    \frac{1}{ \pi }
    \begin{cases}
      \sign(g_2) \cdot \Delta g, & \text{if}\ g_1 = 0 \\
      \sign(g_1) \cdot \Delta g, & \text{if}\ g_2 = 0 \\
      \sign(g_1 \cdot g_2) \cdot \Delta g, & \text{otherwise}
    \end{cases},
    \quad \text{ with } \quad 
    \Delta g = \left| \arctan{g_1} - \arctan{g_2} \right|.
\end{align}
The magnitude of this quantity depends on the normalized difference between the tangent's angles~$\Delta g$ and is positive for gradients with the same sign and vice versa it is negative for gradients with opposing signs.
See also \cref{fig:signed_gradient_error_sketch} for a sketch.

\FloatBarrier
\subsubsection{Determining the Closed-loop Stability for Linear Systems}

For linear and deterministic systems, we can easily check if the system is (asymptotically) stable for a particular linear policy using the following standard result from linear system theory:

\begin{theorem}[Exponential stability for linear time-invariant systems \citep{callier1991LinearSystemTheory}]
The solution of $\mathbf{x}_{\ti+1} = \mathbf{F} \mathbf{x}_\ti $ is exponentially stable if and only if $\sigma(\mathbf{F}) \subset D(0, 1)$, i.e., every eigenvalue of $\mathbf{F}$ has magnitude strictly less than one.
\end{theorem}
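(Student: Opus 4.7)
The plan is to translate exponential stability into a norm bound on the matrix powers of $\mathbf{F}$. Since the closed-form solution is $\mathbf{x}_\ti = \mathbf{F}^\ti \mathbf{x}_0$, exponential stability is equivalent to the existence of constants $C \geq 1$ and $\rho \in (0,1)$ with $\|\mathbf{F}^\ti\| \leq C \rho^\ti$ for all $\ti \geq 0$: the forward direction is immediate, and the reverse follows because the operator norm is attained on the unit sphere, so one may choose $\mathbf{x}_0$ as a norm-attaining unit vector. The remainder of the proof therefore characterizes when $\|\mathbf{F}^\ti\|$ decays geometrically.

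To analyze those powers, I would pass to the Jordan canonical form $\mathbf{F} = \mathbf{P} \mathbf{J} \mathbf{P}^{-1}$, so that $\mathbf{F}^\ti = \mathbf{P} \mathbf{J}^\ti \mathbf{P}^{-1}$ and the conditioning factor $\kappa = \|\mathbf{P}\| \|\mathbf{P}^{-1}\|$ is finite and can be absorbed into $C$. For each Jordan block $\mathbf{J}_i = \lambda_i \mathbf{I} + \mathbf{N}_i$ of size $m_i$, nilpotency of $\mathbf{N}_i$ gives the binomial expansion $\mathbf{J}_i^\ti = \sum_{k=0}^{m_i - 1} \binom{\ti}{k} \lambda_i^{\ti-k} \mathbf{N}_i^k$ valid for $\ti \geq m_i$. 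Entries of $\mathbf{J}_i^\ti$ are thus polynomials of degree at most $m_i - 1$ in $\ti$ multiplied by powers of $\lambda_i$.

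For sufficiency, assume every $|\lambda_i| < 1$ and fix $\rho$ strictly between the spectral radius $\max_i |\lambda_i|$ and $1$. Each entry is bounded by a constant multiple of $\rho^\ti$, since $\binom{\ti}{k} (|\lambda_i|/\rho)^\ti \to 0$: polynomial growth is dominated by strictly faster geometric decay. Summing over the finitely many Jordan blocks and multiplying by $\kappa$ delivers the estimate $\|\mathbf{F}^\ti\| \leq C \rho^\ti$. For necessity, suppose some eigenvalue $\lambda$ satisfies $|\lambda| \geq 1$ and let $\mathbf{v} \neq 0$ be a corresponding eigenvector; then $\mathbf{F}^\ti \mathbf{v} = \lambda^\ti \mathbf{v}$, so $\|\mathbf{F}^\ti \mathbf{v}\| = |\lambda|^\ti \|\mathbf{v}\|$ does not decay to zero, and choosing $\mathbf{x}_0$ along the real or imaginary part of $\mathbf{v}$ (to stay inside the real state space if $\lambda$ is complex) rules out exponential stability.

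The main technical obstacle is the polynomial-versus-geometric comparison in the sufficiency step: the argument must yield a \emph{single} rate $\rho$ that simultaneously dominates all Jordan blocks of all sizes, which is why one chooses $\rho$ as a small cushion above the spectral radius rather than attempting $\rho = \max_i |\lambda_i|$ directly. A secondary subtlety arises in the necessity step for nonreal eigenvalues, which is handled by working on the invariant real two-dimensional subspace on which $\mathbf{F}$ acts as a scaling of magnitude $|\lambda|$ composed with a rotation.
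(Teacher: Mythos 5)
The paper does not actually prove this statement: it is quoted verbatim as a standard background result from \citet{callier1991LinearSystemTheory} and is used only to exclude unstable closed-loop systems from the plots in the motivating example, so there is no in-paper argument to compare against. Your Jordan-form proof is the standard textbook argument and is correct: reducing exponential stability of $\mathbf{x}_{\ti+1} = \mathbf{F}\mathbf{x}_\ti$ to geometric decay of $\|\mathbf{F}^\ti\|$, expanding each Jordan block as a polynomial in $\ti$ times $\lambda_i^{\ti-k}$, and inserting a rate $\rho$ strictly between the spectral radius and $1$ so that the polynomial factors are absorbed, is exactly how this equivalence is established. The one step worth tightening is the necessity direction for a nonreal eigenvalue $\lambda$ with eigenvector $\mathbf{v} = \mathbf{a} + i\mathbf{b}$: a single real initial condition $\mathbf{x}_0 = \mathbf{a}$ has $\mathbf{F}^\ti\mathbf{a} = |\lambda|^\ti\bigl(\cos(\theta \ti)\,\mathbf{a} - \sin(\theta \ti)\,\mathbf{b}\bigr)$, whose norm can dip at particular times, so the clean way to finish is to note the identity $\|\mathbf{F}^\ti\mathbf{a}\|^2 + \|\mathbf{F}^\ti\mathbf{b}\|^2 = |\lambda|^{2\ti}\left(\|\mathbf{a}\|^2 + \|\mathbf{b}\|^2\right)$, which shows that at least one of the two real trajectories started from $\mathbf{a}$ or $\mathbf{b}$ must violate any bound of the form $C\rho^\ti$ with $\rho < 1 \leq |\lambda|$. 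You flag this subtlety yourself, and with that observation the argument is complete.
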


In our setting, this means that the closed-loop systems fulfilling the following are \emph{unstable},
\begin{align}\label{eq:unstable_system}
    \left| \A + \Delta \A + (\B + \Delta \B) \theta \right| > 1,
\end{align}
i.e., the state and input grow exponentially.
We therefore refrain from including unstable system in the results to avoid numerical issues for the gradients' computation.
The respective areas in the plots are not colored, see e.g., bottom left corner in \cref{fig:appendix_ex1d_gradient_errors_delta_B}.

\FloatBarrier
\subsubsection{Numerical Values}
The numerical values for all parameters used in the motivating example are given as follows:
\begin{itemize}
    \item True system dynamics: $\A = 1.0, \B = 1.0$
    \item Initial condition: $\state_0 = 1.0$
    \item Reward width parameter: $\sigma_r = 0.05$
    \item Optimal policy gain: $\theta^* = -1.0$
    \item Rollout horizon: $\Ti = 60$
    
\end{itemize}

\clearpage
\section{Additional Experimental Results}
\label{app:additional_results}

In this section, we provide additional experimental results that did not fit into the main body of the paper.

\subsection{Comparison with Other Baseline Algorithms}\label{sec:appendix_mbpo_baselines}

\cref{fig:appendix_mbpo_baselines} shows our method compared to a range of baseline algorithms.
The results for all baselines were obtained from \cite{janner2019trust} via personal communication.
Note that all results are presented in terms of mean and standard deviation.
The comparison includes the following methods:
\begin{itemize}
  \item \mbpo \citep{janner2019trust},
  \item \textsc{pets} \citep{Chua2018pets},
  \item \sac \citep{Haarnoja2018SoftActorCritic},
  \item \textsc{ppo} \citep{schulman2017proximal},
  \item \textsc{steve} \citep{buckman2018steve},
  \item \textsc{slbo} \citep{luo2018slbo}.
\end{itemize}

\begin{figure}[ht]
\centering

\begin{minipage}[t]{0.49\linewidth}
\includegraphics[width=\linewidth]{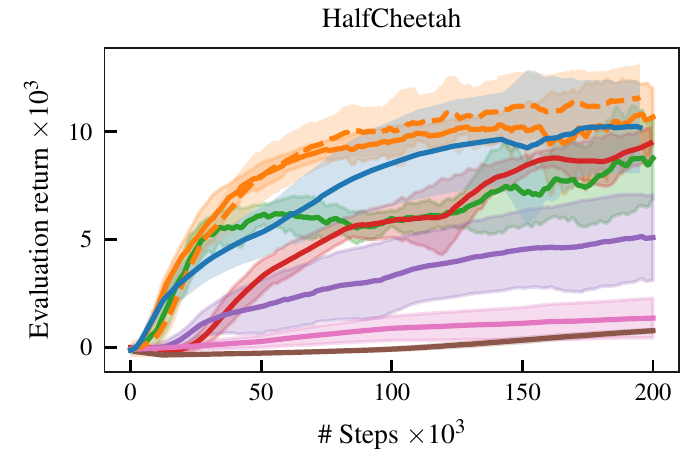}
\end{minipage}\hfill
\begin{minipage}[t]{0.49\linewidth}
\includegraphics[width=\linewidth]{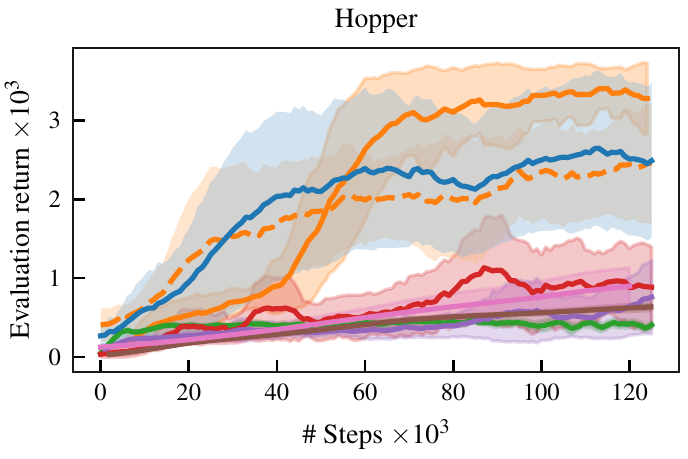}
\end{minipage}

\begin{minipage}[t]{0.49\linewidth}
\includegraphics[width=\linewidth]{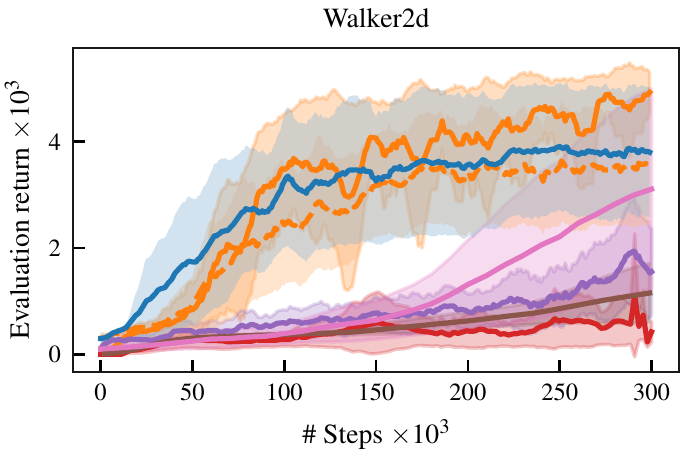}
\end{minipage}\hfill
\begin{minipage}[t]{0.49\linewidth}
\includegraphics[width=\linewidth]{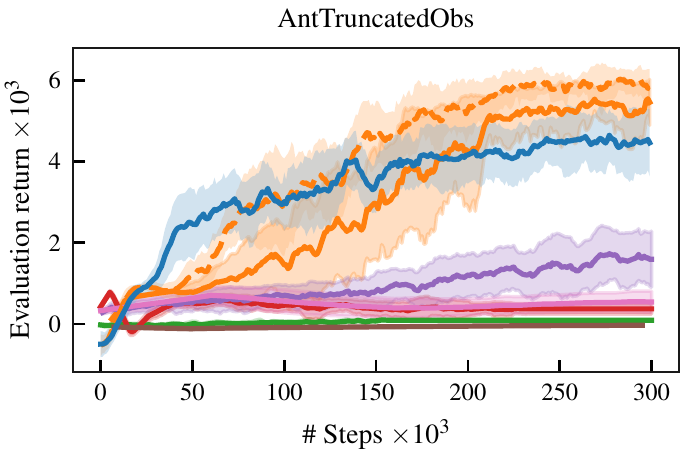}
\end{minipage}

\begin{tikzpicture}
  \footnotesize
  \def\barhalfheight{0.15}
  \def\barwidth{0.6}
  \def\dy{0.5}
  \def\dxtext{0.9}
  \def\dxbar{0.2}
  \def\dx{2.5}
  
\newcommand{\LegendList}{
1/tableauC0/solid/\ourmethod, 
2/tableauC1/dashed/\mbpoepisodic,
3/tableauC1/solid/\mbpo,
4/tableauC2/solid/\textsc{pets}
}

  \newcommand{\LegendListTwo}{
  1/tableauC3/solid/\textsc{steve},
  2/tableauC4/solid/\textsc{sac}, 
  3/tableauC5/solid/\textsc{ppo},
  4/tableauC6/solid/\textsc{slbo}
  }

\foreach \i/\markercolor/\linestyle/\entry in \LegendList {
    \node[anchor=west] at (\dxtext + \i*\dx,0.0) {\entry};
    \fill [\markercolor!30!white] (\dxbar + \i*\dx,\barhalfheight) rectangle (\dxbar + \i*\dx + \barwidth, -\barhalfheight);
    \draw [-,\markercolor, \linestyle, line width = 1.0pt] (\dxbar + \i*\dx, 0.0) -- (\dxbar + \i*\dx + \barwidth, 0.0);
}

\foreach \i/\markercolor/\linestyle/\entry in \LegendListTwo {
    \node[anchor=west] at (\dxtext + \i*\dx,0.0 - \dy) {\entry};
    \fill [\markercolor!30!white] (\dxbar + \i*\dx,\barhalfheight - \dy) rectangle (\dxbar + \i*\dx + \barwidth, -\barhalfheight - \dy);
    \draw [-,\markercolor, \linestyle, line width = 1.0pt] (\dxbar + \i*\dx, 0.0 - \dy) -- (\dxbar + \i*\dx + \barwidth, 0.0 - \dy);
}

\end{tikzpicture}

\caption{Comparison of \ourmethod against a range of baseline methods on three MuJoCo environments. We present the mean and standard deviation across 5 independent experiments (10 for \ourmethod and \mbpoepisodic). The original data for \mbpo and the other baselines were provided by \citet{janner2019trust}. Solid lines represent the mean and the shaded areas correspond to mean $\pm$ one standard deviation.}
\label{fig:appendix_mbpo_baselines}
\end{figure}

\clearpage
\subsection{Ablation - Retain Epochs}
One of the hyperparameters that we found is critical to both \ourmethod and \mbpoepisodic, is $\texttt{retain epochs}$, i.e., the number of epochs that are kept in the data buffer for the simulated data generated with the $\tilde{p}^{\mathrm{x}}$ with $\mathrm{x} = \{\ourmethod, \mathrm{model} \}$.
The results for a comparison are shown in \cref{fig:gym_results_retain_epochs_ablation}.
For \mbpoepisodic, we found that for some environments (HalfCheetah, AntTruncatedObs) smaller values for \texttt{retain epochs} are helpful, i.e., simulated data is almost only on-policy, and for other environments larger values are beneficial (Hopper, Walker2d).
For \ourmethod on the other hand, we found that $\texttt{retain epochs} = 50$ almost always leads to better results.

\begin{figure}[h]
\centering
\includegraphics[width=\linewidth]{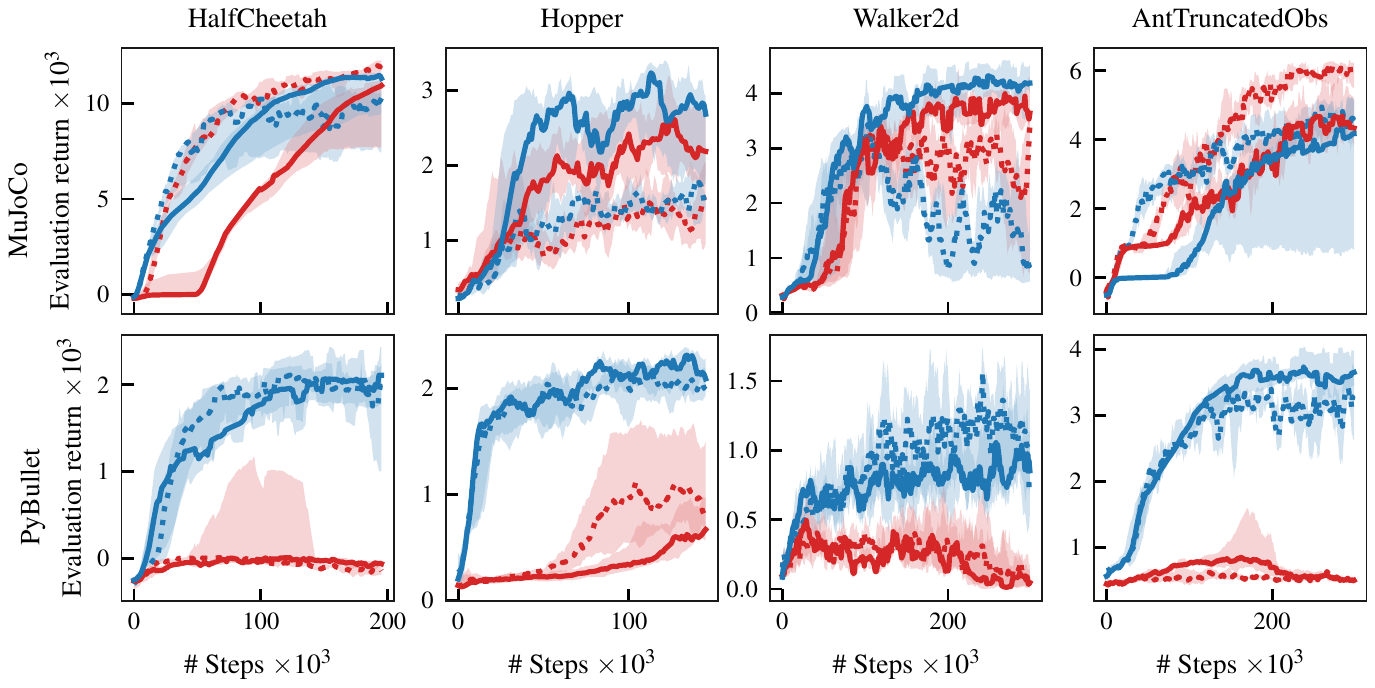}

\begin{tikzpicture}
  \footnotesize
  \def\barhalfheight{0.15}
  \def\barwidth{0.6}
  \def\dy{0.5}
  \def\dxtext{0.9}
  \def\dxbar{0.2}
  \def\dx{3.2}
  
  \newcommand{\LegendList}{
  1/tableauC0/solid/\ourmethod (50), 
  2/tableauC0/dotted/\ourmethod (5),
  3/tableauC3/solid/\mbpoepisodic (50),
  4/tableauC3/dotted/\mbpoepisodic (5)
  }
  
\foreach \i/\markercolor/\linestyle/\entry in \LegendList {
    \node[anchor=west] at (\dxtext + \i*\dx,0.0) {\entry};
    \fill [\markercolor!30!white] (\dxbar + \i*\dx,\barhalfheight) rectangle (\dxbar + \i*\dx + \barwidth, -\barhalfheight);
    \draw [-,\markercolor, \linestyle, line width = 1.0pt] (\dxbar + \i*\dx, 0.0) -- (\dxbar + \i*\dx + \barwidth, 0.0);
}
\end{tikzpicture}

\caption{Ablation study for 
\ourmethod and \mbpoepisodic
on four environments from the MuJoCo control suite (top row) and their respective PyBullet implementations (bottom row). We vary the \mbox{\texttt{retain epochs}} hyperparameter (indicated by the number in the parentheses behind the legend entries), i.e., the number of epochs that are kept in the data buffer for the simulated data.
}
\label{fig:gym_results_retain_epochs_ablation}
\end{figure}

\FloatBarrier
\subsection{Influence of State Representation: In-Depth Analysis}

In the following, we will have a closer look at the surprising result from \cref{fig:pendulum_parameterization}\,(left).
In there, the results indicate that \mbpoepisodic is not able to learn a stabilizing policy within the first 7'500 steps on the RoboSchool variant of the CartPole environment.
We hypothesize that the failure of \mbpoepisodic is due to a mismatch of the simulated data and the true state distribution.
As a result, the policy that is optimized with the simulated data cannot stabilize the pole in the true environment.

To validate our hypothesis, we perform the following experiment:
First, we train a policy $\pi^*$ that we know performs well on the true environment, leading to the maximum evaluation return of 1000.
With this policy, we roll out a reference trajectory on the true environment $\tau^{\mathrm{ref}} = \{ (\refstate_\ti, \refaction_\ti) \}_{\ti=0}^\Ti$.
To perform branched rollouts with the respective methods, \ourmethod and \mbpoepisodic, we use the learned transition models after 20 epochs (5'000 time steps) that were logged during a full learning process for each method.
We then perform 100 branched rollouts of length $H=20$ starting from randomly sampled initial states of the reference trajectories.

\cref{fig:appendix_pendulum_state_differences}  shows the difference between the true and predicted state trajectories (median and 95\textsuperscript{th} percentiles) for each state in $[x, \cos(\vartheta), \sin(\vartheta), \dot{x}, \dot{\vartheta}]$.
Since we start each branched rollout from a state on the real environment, the initial error is always zero.
Across all states, the errors are drastically reduced when using \ourmethod.
\cref{fig:appendix_pendulum_cos_angle} shows the predicted trajectories for the cosine of the pole's angle, $\cos(\vartheta)$.
For \mbpoepisodic, we observe that the trajectories often diverge and attain values that are clearly out of distribution, i.e., $\cos(\vartheta) > 1$.

\begin{figure}[t]
\begin{minipage}[t]{\linewidth}
\centering
\includegraphics{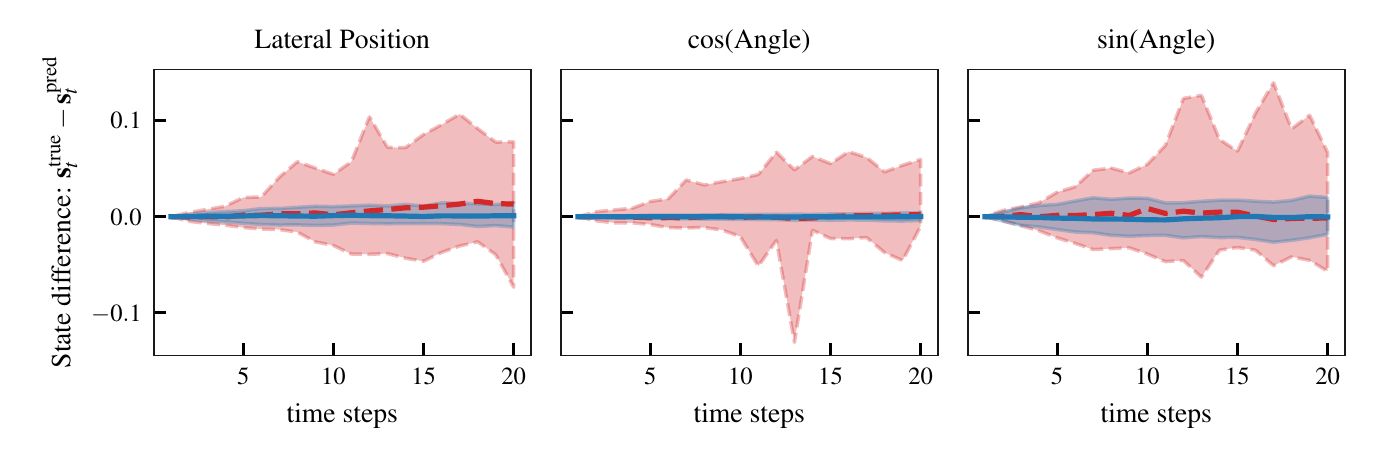}
\end{minipage}
\begin{minipage}[t]{\linewidth}
\centering
\includegraphics{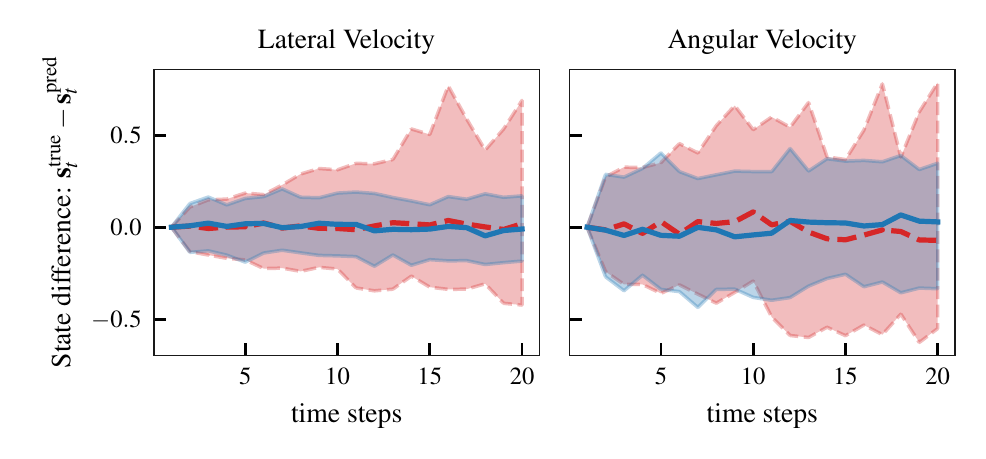}
\caption{Difference in state trajectories $\state_\ti^{\mathrm{true}} - \state_\ti^{\mathrm{pred}}$ from branched rollouts using a fixed policy of length $\horizon = 20$ on the RoboSchool environment (cf. \cref{fig:pendulum_parameterization}\,(left)) with $\state = [x, \cos(\vartheta), \sin(\vartheta), \dot{x}, \dot{\vartheta}]$. The solid lines show the median across 100 rollouts and the shaded areas represent the 95\textsuperscript{th} percentiles.
With \ourmethod (\protect\opclegendentry), the simulated rollouts follows the true state trajectories much closer, whereas with \mbpoepisodic (\protect\mbpolegendentry) the prediction errors quickly accumulate over time.}
\label{fig:appendix_pendulum_state_differences}
\end{minipage}
\end{figure}

\newcommand{\opcsamplelegendentry}{\raisebox{0pt}{\tikz{
  \draw[white,solid,fill=white,line width=0pt](0pt,0pt) rectangle (\legendboxwidth, \legendboxheight);
   \draw[-,tableauC0,solid,line width = 1pt](0pt,{\legendboxheight/2} ) -- (\legendboxwidth,{\legendboxheight/2} )}}}
\newcommand{\mbposamplelegendentry}{\raisebox{0pt}{\tikz{
  \draw[white,solid,fill=white,line width=0pt](0pt,0pt) rectangle (\legendboxwidth, \legendboxheight);
  \draw[-,tableauC3,dashed,line width = 1pt](0pt,{\legendboxheight/2} ) -- (\legendboxwidth,{\legendboxheight/2} )}}}
  
\begin{figure}[t]
     \centering
    \includegraphics{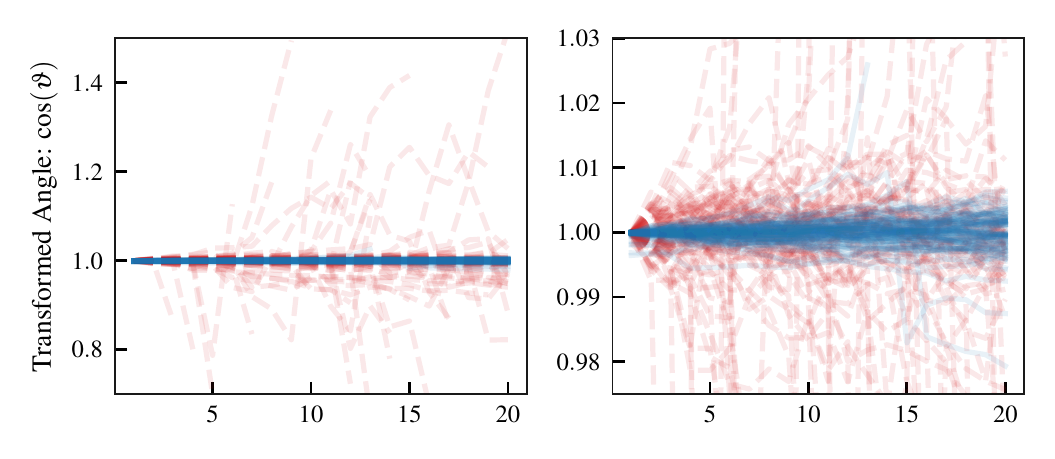}
    \caption{
    Trajectories of the second state ($\cos(\vartheta)$) from 100 branched rollouts using a fixed policy of length $\horizon = 20$ on the RoboSchool environment (cf. \cref{fig:pendulum_parameterization}\,(left)). Both plots present the same data but the differ in terms of scaling of the ordinate.
    With \ourmethod (\protect\opcsamplelegendentry), the respective trajectories remain around values close to one, which corresponds to the upright position of the pendulum.
    When using the standard predictive model from \mbpoepisodic (\protect\mbposamplelegendentry), the state trajectories often diverge and the rollouts are terminated prematurely.}
    \label{fig:appendix_pendulum_cos_angle}
\end{figure}

\clearpage
\subsection{Improvement Bound: Empirical Investigation}

In this section, we empirically investigate to what extent \ourmethod is able to tighten the policy improvement bound \cref{eq:improvement_bound} compared to pure model-based approaches.
As mentioned in the main paper, the motivation behind \ourmethod is to reduce the on-policy error and we assume that the off-policy error is not affected too badly by the corrected transitions.
Generally speaking, it is difficult to quantify a-priori how \ourmethod compares to a purely model-based approach as this depends on the generalization capabilities of the learned dynamics model, the environment itself and the reward signal.

Here, we analyze the CartPole environment (PyBullet implementation) and estimate the respective error terms that appear in the policy improvement bound \cref{eq:improvement_bound}.
To this end, we roll out a sequence of policies $\pi\sb{n}$ on the true environment (to estimate $\eta\sb{n}$ and $\eta\sb{n+1}$) and on the learned model with and without \ourmethod (to estimate $\tilde{\eta}\sb{n}$ and $\tilde{\eta}\sb{n+1}$).
The sequence of policies was obtained during a full policy optimization run (the policy was logged after each update) and we roll out each policy 100 times on the respective environment/models.
The corresponding learned transition models were similarly logged during a full policy optimization.
For \ourmethod, we estimate the off-policy return $\tilde{\eta}\sb{n+1}$ using the learned model from iteration $n$ and reference trajectories from the true environment that were collected under $\pi\sb{n}$, but then roll out the model with $\pi\sb{n+1}$.
The results are shown in \cref{fig:appendix_improvement_bound_evaluation}.

\begin{figure}[t]
     \centering
    \includegraphics{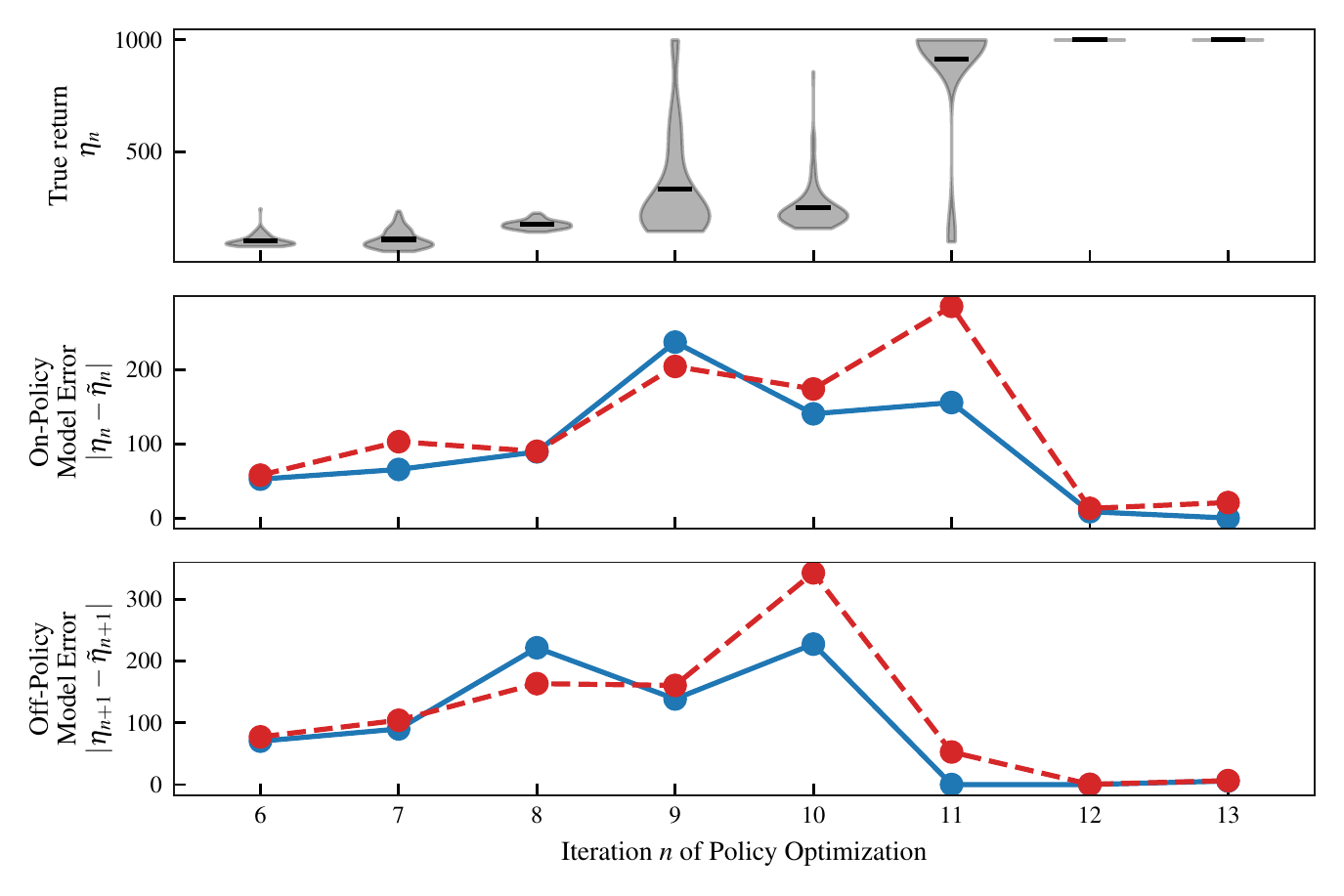}
  \caption{
    Empirical evaluation of the error terms in the policy improvement bound in \cref{eq:improvement_bound} for the CartPole environment (PyBullet).
    We evaluate the respective terms for a sequence of policies that were obtained during different iterations $\iteration$ from a full policy optimization run.
    The respective returns $\modelreturn{\iteration+1},\return{\iteration}, \modelreturn{\iteration+1},\return{\iteration+1}$ are approximated as the mean from 100 rollouts on the true environment and the respective models, $\opcmodel_\iteration$ (\protect\opcsamplelegendentry) and $\learnedmodel_\iteration$ (\protect\mbposamplelegendentry).
    For the return on the true environment $\return{\iteration}$ (top), we additionally show the sample distribution of the rollouts' returns.
    This nicely demonstrates how the policy smoothly transitions from failing consistently ($\iteration \leq 8$) to successfully stabilizing the pole ($\iteration \geq 12$).
    Additionally, note that the on-policy model error is almost always smaller for \ourmethod compared to \mbpoepisodic, which supports the theoretical motivation that our method is build upon.
    }
    \label{fig:appendix_improvement_bound_evaluation}
\end{figure}

\clearpage
\subsection{Off-Policy Analysis}

In this section, we investigate the robustness of \ourmethod towards `off-policy-ness' of the reference trajectories that are simulated under the data-generating policy $\pi_\iteration$.
To this end, we manually increase the stochasticity of the behavior policy $\pi_{\mathrm{rollout}}$ by a factor $\beta$ such that
\begin{align}
  \label{eq:policy_stochasticity_multiplier}
  \mathbb{V}[\pi_{\mathrm{rollout}}(\cdot \mid \state)] = \beta^2 \mathbb{V}[\pi_{\iteration}(\cdot \mid \state)],
\end{align}
and we keep the mean the same for both policies.
\cref{fig:appendix_off_policy_analysis} shows the distributions of the returns from 100 rollouts with varying degree of `off-policy-ness' on the CartPole environment (PyBullet).
Note that the data-generating policy consistently leads to the maximum return of 1000.
For $\beta$ close to one, both \ourmethod and \mbpoepisodic lead to almost ideal behavior and correctly predict the return in more than 95\% of the cases.
As we increase the policy's stochasticity (from left to right), the respective performance of the policy decreases until all rollouts terminate prematurely ($\beta \geq 2.3$).
Notably, the extent of this effect is almost identical between \ourmethod and \mbpoepisodic.
We conclude that \ourmethod is, at least empirically, robust towards `off-policy-ness' of the reference trajectories.
Otherwise, we should observe a more pronounced degradation of the policy's performance with increased stochasticity of the behavior policy, since this leads to observing more off-policy state/actions.

\begin{figure}[t]
  \centering
  \includegraphics{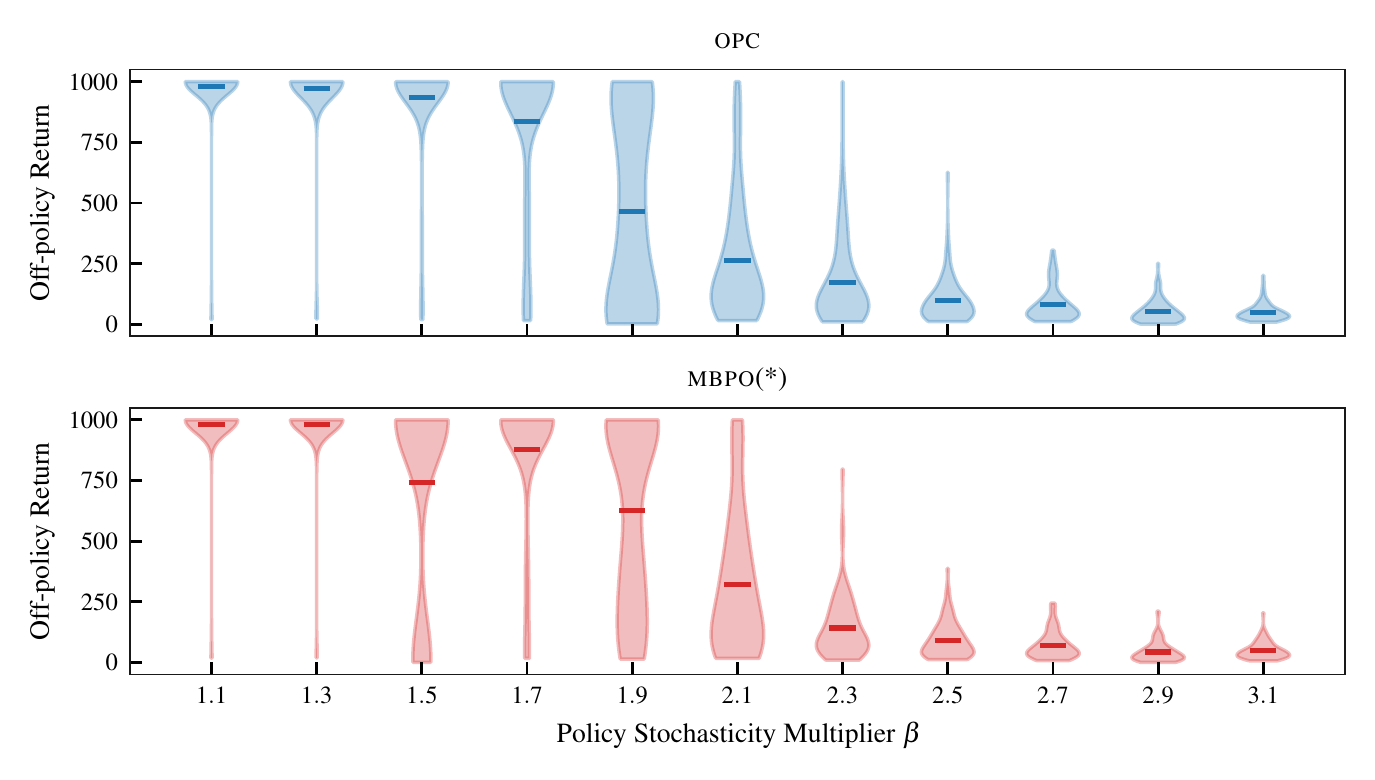}
  \caption{
    Sample distributions of the return on the CartPole environment (PyBullet) with increasing stochasticity of the behaviour policy $\pi_\mathrm{rollout}$ when rolling out with $\opcmodel_\iteration$/\ourmethod (top) and $\learnedmodel_\iteration$/\mbpoepisodic (bottom). The multiplier $\beta$ quantifies the stochasticity of $\pi_\mathrm{rollout}$ relative to the reference policy $\pi_n$ (\cref{eq:policy_stochasticity_multiplier}) such that higher values lead to more `off-policy-ness'.
  }
  \label{fig:appendix_off_policy_analysis}
\end{figure}

\clearpage
\subsection{Implementation Changes to Original \mbpo}\label{app:mbpo_changes}
Here, we provide details to the changes we made to the original implementation of \mbpo. We denote our variant as \mbpoepisodic.

\paragraph{Episodic Setting}
\mbpo updates the policy during rollouts on the real environment.
\cref{alg:mbpo_original,alg:mbpo_episodic} show the original and our version of \mbpo, respectively.
While the original version might be more sample efficient because it allows for more policy gradient updates based on more recent data, it is not realistic to update the policy during a rollout on the real environment.

\begin{algorithm}[t]
  \caption{Original \mbpo algorithm}
  \label{alg:mbpo_original}
  \begin{algorithmic}[1] 
    \State Initialize policy $\pi_\phi$, predictive model $p_\theta$, environment dataset $\data_{\mathrm{env}}$, model  dataset $\data_{\mathrm{model}}$
    \For{$N$ epochs}
    \State Train model $p_\theta$ on $\data_{\mathrm{env}}$ via maximum likelihood
    \For{$E$ steps}
    \State Take action in environment according to $\pi_\phi$; add to $\data_{\mathrm{env}}$
    \For{$M$ model rollouts}
    \State Sample $s_t$ uniformly from $\data_{\mathrm{env}}$
    \State Perform $k$-step model rollout starting from $s_t$ using policy $\pi_\phi$; add to $\data_{\mathrm{model}}$
    \EndFor
    \For{$G$ gradient updates}
    \State Update policy parameters on model data: $\phi \leftarrow \phi - \lambda_\pi \hat{\nabla}_\phi J_\pi(\phi, \data_{\mathrm{model}})$
    \EndFor
    \EndFor
    \EndFor
  \end{algorithmic}
\end{algorithm}

\begin{algorithm}[t]
  \caption{Our version of \mbpo algorithm, denoted as \mbpoepisodic}
  \label{alg:mbpo_episodic}
  \begin{algorithmic}[1] 
    \State Initialize policy $\pi_\phi$, predictive model $p_\theta$, environment dataset $\data_{\mathrm{env}}$, model  dataset $\data_{\mathrm{model}}$
    \For{$N$ epochs}
    \State Train model $p_\theta$ on $\data_{\mathrm{env}}$ via maximum likelihood
    \For{$E$ steps}
    \State Take action in environment according to $\pi_\phi$; add to $\data_{\mathrm{env}}$
    \EndFor

    \For{$M$ model rollouts}
    \State Sample $s_t$ uniformly from $\data_{\mathrm{env}}$
    \State Perform $k$-step model rollout starting from $s_t$ using policy $\pi_\phi$; add to $\data_{\mathrm{model}}$
    \EndFor
    \For{$G$ gradient updates}
    \State Update policy parameters on model data: $\phi \leftarrow \phi - \lambda_\pi \hat{\nabla}_\phi J_\pi(\phi, \data_{\mathrm{model}})$
    \EndFor
    \EndFor
  \end{algorithmic}
\end{algorithm}

\paragraph{Mix-in of Real Data}
As mentioned in the main paper, one of the key problems with \gls{mbrl} is that the generated data might exhibit so-called model-bias.
Biased data can be problematic for policy optimization as the transition tuples possibly do not come from the true state distribution of the real environment, thus misleading the \gls{rl} algorithm.
In the original implementation of \mbpo, the authors use a mix of simulated data from the model as well as observed data from the true environment for policy optimization (\url{https://github.com/jannerm/mbpo/blob/22cab517c1be7412ec33fbe5c510e018d5813ebf/mbpo/algorithms/mbpo.py#L430}) to alleviate the issue of model bias.
While this design choice might help in practice, it adds another hyperparameter and the exact influence of this parameter is difficult to interpret.
We therefore refrain from mixing in data from the true environment, but instead only use simulated transition tuples -- staying true to the spirit of \gls{mbrl}.

\paragraph{Fix Replay Buffer}
The replay buffer that stores the simulated data $\data_{\mathrm{model}}$ for policy optimization is allocated to a fixed size of $\texttt{rollout length} \times \texttt{rollout batch size} \times \texttt{retain epochs}$, meaning that per epoch $\texttt{rollout length} \times \texttt{rollout batch size}$ transition tuples are simulated and only the data from the last \texttt{retain epochs} are kept in the buffer (\url{https://github.com/jannerm/mbpo/blob/22cab517c1be7412ec33fbe5c510e018d5813ebf/mbpo/algorithms/mbpo.py#L351}).
As the buffer is implemented as a FIFO queue and rollouts might terminate early such that the actual number of simulated transitions is less than $\texttt{rollout length}$, data from older episodes as specified by \texttt{retain epochs} are retained in the buffer.
We assume that this is not the intended behavior and correct for that in our implementation.

\FloatBarrier
\subsection{Results for Original \mbpo Implementation}
\label{app:evaluation_original_mbpo}

The following results were obtained with the original \mbpo implementation without the changes described in \cref{app:mbpo_changes}.
Consequently, the results for \ourmethod in this section also do not include these changes.

\paragraph{Comparative Evaluation}

\begin{figure}[t]
\centering

\begin{subfigure}[b]{.32\linewidth}
    \includegraphics[width=\linewidth]{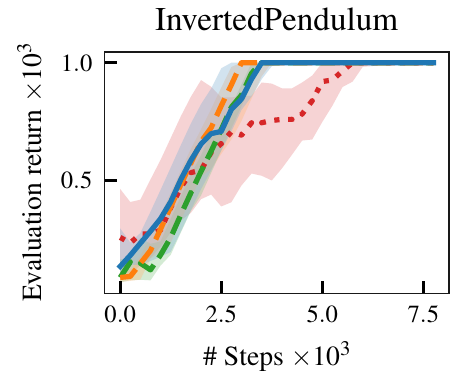}
\end{subfigure} %
\begin{subfigure}[b]{.32\linewidth}
    \includegraphics[width=\linewidth]{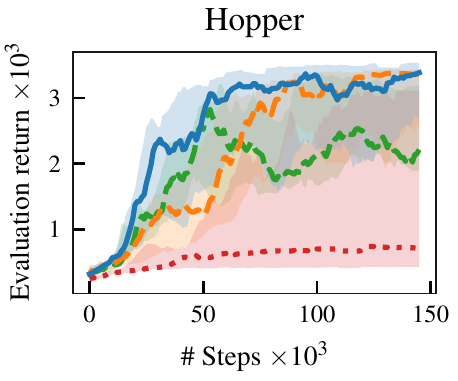}
\end{subfigure} %
\begin{subfigure}[b]{.32\linewidth}
    \includegraphics[width=\linewidth]{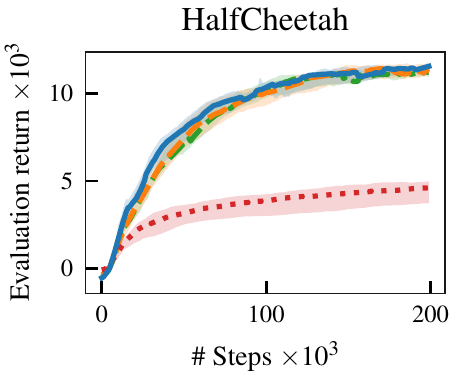}
\end{subfigure}

\begin{tikzpicture}
  \footnotesize
  \def\barhalfheight{0.15}
  \def\barwidth{0.6}
  \def\dy{0.5}
  \def\dxtext{0.9}
  \def\dxbar{0.2}
  \def\dx{3.2}
  
  \newcommand{\LegendList}{
  1/tableauC0/solid/\ourmethod (ours), 
  2/tableauC1/dashed/\mbpo $\realratio = 5\%$,
  3/tableauC2/dashed/\mbpo $\realratio = 0\%$,
  4/tableauC3/dotted/\sac
  }
  
\foreach \i/\markercolor/\linestyle/\entry in \LegendList {
    \node[anchor=west] at (\dxtext + \i*\dx,0.0) {\entry};
    \fill [\markercolor!30!white] (\dxbar + \i*\dx,\barhalfheight) rectangle (\dxbar + \i*\dx + \barwidth, -\barhalfheight);
    \draw [-,\markercolor, \linestyle, line width = 1.0pt] (\dxbar + \i*\dx, 0.0) -- (\dxbar + \i*\dx + \barwidth, 0.0);
}
\end{tikzpicture}

\caption{Results based on original implementation \cref{app:evaluation_original_mbpo}: Comparison of \ourmethod to the model-based \mbpo and model-free \sac algorithms. 
The two \mbpo variants differ in terms of the mix-in ratio $\realratio$ of real off-policy transitions in the training data -- a critical hyperparameter.
All model-based approaches outperform \sac in terms of convergence for the high-dimensional tasks. 
Moreover, on the highly stochastic Hopper environment, \ourmethod outperforms both \mbpo variants and does not require additional real off-policy data.}
\label{fig:gym_results_original_implementation}
\end{figure}

We evaluate the original implementation on three continuous control benchmark tasks from the MuJoCo control suite \citep{todorov2012mujoco}.
The results for \ourmethod, two variants of \mbpo and \sac are presented in \cref{fig:gym_results}.
Both \ourmethod and \mbpo use a rollout horizon of $\horizon=10$ to generate the training data.
The difference between the two \mbpo variants lies in the mix-in ratio $\beta$ of real off-policy transitions to the simulated training data.
Especially for highly stochastic environments such as the Hopper, this mix-in ratio is a critical hyperparameter that requires careful tuning (see also \cref{fig:appendix_large_ablation}).
\cref{fig:gym_results} indicates that \ourmethod is on par with \mbpo on both the InvertedPendulum and HalfCheetah environments that exhibit little stochasticity.
On the Hopper environment, \ourmethod outperforms both \mbpo variants.
Note that the mix-in ratio for \mbpo is critical for successful learning (the original implementation uses $\beta = 5\%$).
\ourmethod on the other hand, does not require any mixed-in real data.
\sac learns slower on the more complex Hopper and HalfCheetah environments, re-iterating that model-based approaches are significantly more data-efficient than model-free methods.

\paragraph{Large Ablation Study -- Hopper}

The full study investigates the influence of the following hyperparameters and design choices:
\begin{itemize}
  \item Rollout length $\horizon$ and total number of simulated transitions $N$.
  \item Mix-in ratio of real transitions into the training data $\realratio$.
  \item Deterministic or stochastic rollouts (for \mbpo): Current state-of-the-art methods in \gls{mbrl} rely on probabilistic dynamic models to capture both aleatoric and epistemic uncertainty. Accordingly, when rolling out the model, these two sources of uncertainty are accounted for. However, we show that in terms of evaluation return, the stochastic rollouts do not always lead to the best outcome.
  \item Re-setting the buffer of simulated data after a policy optimization step: We found that re-setting the replay buffer for simulated data after each iteration of \cref{alg:mbpo_original} can have a large influence. In particular, the replay buffer is implemented as a FIFO queue with a fixed size. Hence, if the buffer is not emptied after each iteration, it still contains (simulated) off-policy transitions.
\end{itemize}
\cref{fig:appendix_large_ablation} presents the results of the ablation study.
We want to highlight a few core insights:
\begin{enumerate}
  \item When choosing the best setting for each method, \ourmethod improves \mbpo by a large margin (bottom row, right). Generally, for long rollouts $H = 20$ (bottom row), \ourmethod improves \mbpo.
  \item Across all settings, \ourmethod performs well and is more robust with respect to the choices of hyperparameters (e.g., bottom row left and center, third row left). Only for few exceptions, re-setting the buffer can have detrimental effects (e.g., top right, third row right).
  \item Mixing in real transition data can be highly beneficial for \mbpo (second row left and center) but it can also have the opposite effect (bottom row).
  \item Using deterministic rollouts can be beneficial (third row right and left), detrimental (third row center) or have no influence (bottom row left) for \mbpo.
  \item It is not clear, if re-setting the buffer after each iteration should overall be recommended or not. This remains an open question left for future research.
\end{enumerate}

\begin{figure}
\centering

\begin{minipage}[t]{0.33\linewidth}
\includegraphics[width=\linewidth]{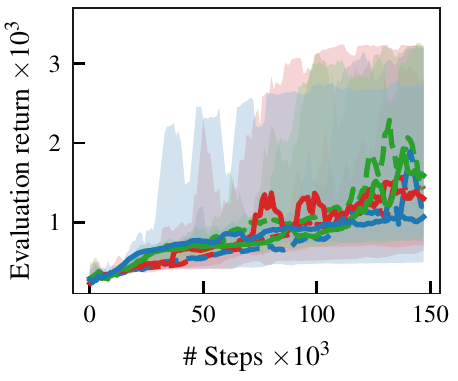}
\end{minipage}\hfill
\begin{minipage}[t]{0.33\linewidth}
\includegraphics[width=\linewidth]{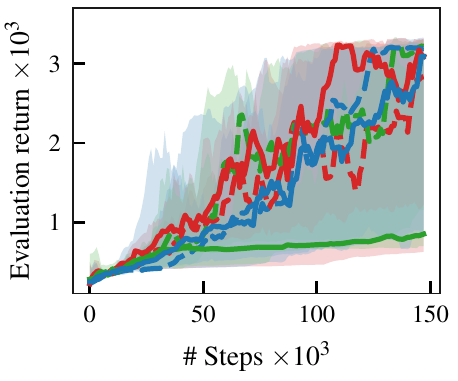}
\end{minipage}\hfill
\begin{minipage}[t]{0.33\linewidth}
\includegraphics[width=\linewidth]{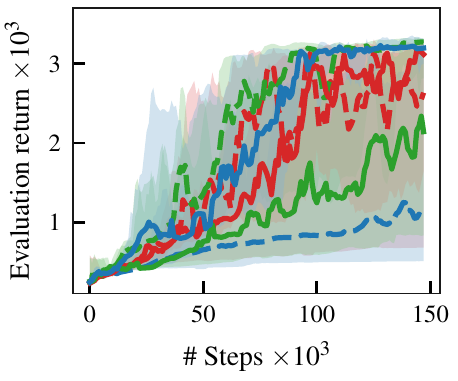}
\end{minipage}

\begin{minipage}[t]{0.33\linewidth}
\includegraphics[width=\linewidth]{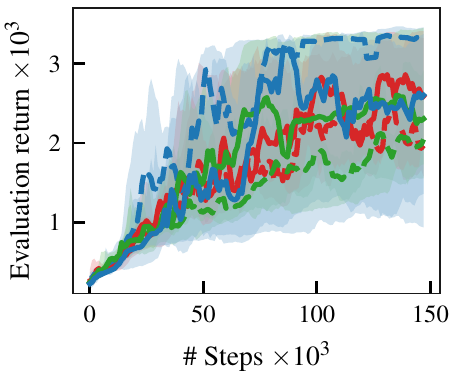}
\end{minipage}\hfill
\begin{minipage}[t]{0.33\linewidth}
\includegraphics[width=\linewidth]{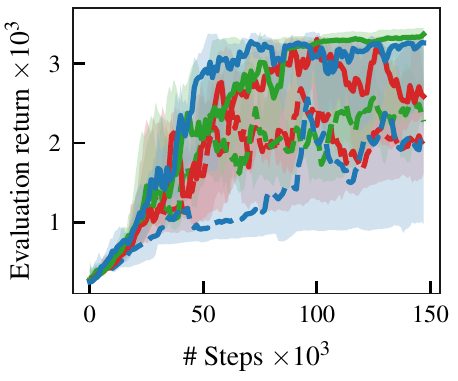}
\end{minipage}\hfill
\begin{minipage}[t]{0.33\linewidth}
\includegraphics[width=\linewidth]{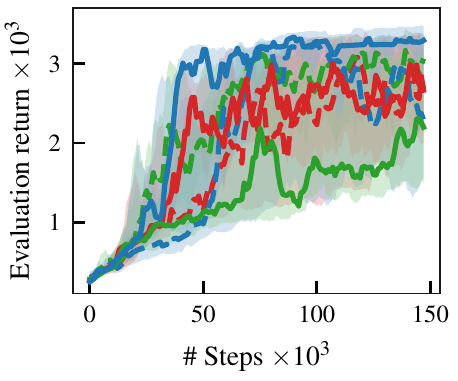}
\end{minipage}

\begin{minipage}[t]{0.33\linewidth}
\includegraphics[width=\linewidth]{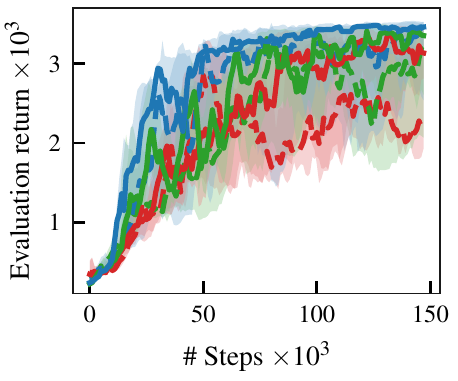}
\end{minipage}\hfill
\begin{minipage}[t]{0.33\linewidth}
\includegraphics[width=\linewidth]{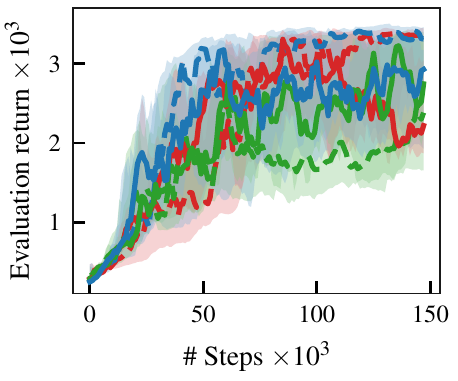}
\end{minipage}\hfill
\begin{minipage}[t]{0.33\linewidth}
\includegraphics[width=\linewidth]{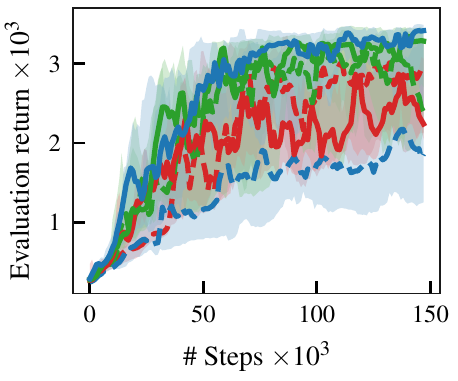}
\end{minipage}

\begin{minipage}[t]{0.33\linewidth}
\includegraphics[width=\linewidth]{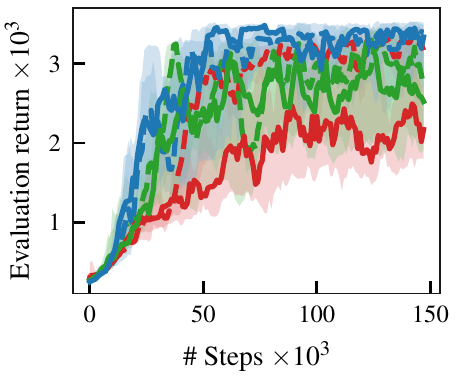}
\end{minipage}\hfill
\begin{minipage}[t]{0.33\linewidth}
\includegraphics[width=\linewidth]{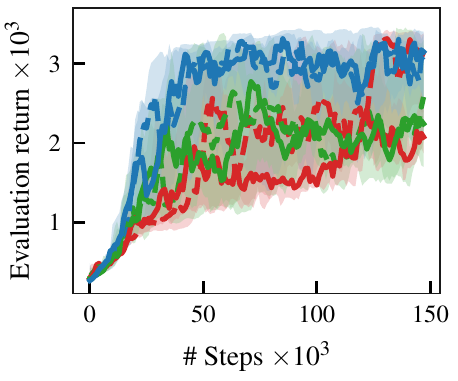}
\end{minipage}\hfill
\begin{minipage}[t]{0.33\linewidth}
\includegraphics[width=\linewidth]{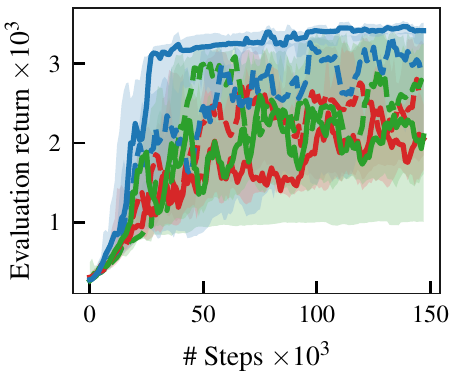}
\end{minipage}

\begin{tikzpicture}
  \footnotesize
  \def\barhalfheight{0.15}
  \def\barwidth{0.6}
  \def\dy{0.5}
  \def\dxtext{0.9}
  \def\dxbar{0.2}
  \def\dx{6.0}
  

  \newcommand{\LegendList}{
  0/tableauC0/solid/\ourmethod + reset buffer, 
  1/tableauC0/dashed/\ourmethod + not reset buffer,
  2/tableauC2/solid/\mbpo + reset buffer + deterministic,
  3/tableauC2/dashed/\mbpo + not reset buffer + deterministic,
  4/tableauC3/solid/\mbpo + reset buffer + stochastic,
  5/tableauC3/dashed/\mbpo + not reset buffer + stochastic
  }
  
\foreach \i/\markercolor/\linestyle/\entry in \LegendList {
    \pgfmathsetmacro\I{int(mod(\i,2))}
    \pgfmathsetmacro\J{int(\i / 2))}
    
    \node[anchor=west] at (\dxtext + \I*\dx, -\J * \dy) {\entry };
    \fill [\markercolor!30!white] (\dxbar + \I*\dx,\barhalfheight-\J * \dy) rectangle (\dxbar + \I*\dx + \barwidth, -\barhalfheight-\J * \dy);
    \draw [-,\markercolor, \linestyle, line width = 1.0pt] (\dxbar + \I*\dx,-\J * \dy) -- (\dxbar + \I*\dx + \barwidth, -\J * \dy);
}

\end{tikzpicture}

\caption{
Results based on original implementation \cref{app:evaluation_original_mbpo}:
Large ablation study on the Hopper environment, investigating the influence various hyperparameters and design choices.
Mix-in ratio of real data~$\realratio$ (columns): ${0\%, 5\%, 10\%}$ from left to right. 
Rollout length $\horizon$ (rows): ${1, 5, 10, 20}$ from top to bottom.
}
\label{fig:appendix_large_ablation}
\end{figure}

\FloatBarrier
\section{Connection between MBRL and ILC}
\label{app:connection_rl_ilc}

In this section we compare optimization-based or so-called \emph{norm-optimal} \gls{ilc} (NO-ILC) with \gls{mbrl}.
In particular, we show that under certain assumptions we can reduce the \gls{mbrl} setting to NO-ILC.
This comparison is structured as follows: 
First, we review the basic assumptions and notations for NO-ILC.
While there are many variations on NO-ILC, we will only consider the very basic setting, i.e., linear dynamics, fully observed state and deterministic state evolution.
Then, based on the lifted state representation of the problem, we derive the solution to the optimization problem that leads to the input sequence of the next iteration / rollout.
Next, we will state the general \gls{mbrl} problem and pose the simplifications that we need to make in order to be equivalent to the NO-ILC problem.
Last, we show that the solution to the reduced \gls{mbrl} problem is equivalent to the one of NO-ILC.

\subsection{Norm-optimal ILC}
The goal of NO-ILC is to find a sequence of inputs $\action = [\action_0^\top, \dots, \action_{\Ti-1}^\top]^\top$  with length $\Ti+1$ such that the outputs $\observation = [\observation_0^\top, \dots, \observation_\Ti^\top]^\top $ follow a desired output trajectory $\refobs = [\refobs_0^\top, \dots, \refobs_\Ti^\top]^\top$.
In the simplest setting we assume that the system evolves according to the following linear dynamics
\begin{align}
    \state_{\ti+1} & = \A \state_\ti + \B \action_\ti + \disturbance_\ti \\
    \observation_\ti & = \C \state_\ti,
\end{align}
with state $\state_\ti \in \R^\nstate$, action $\action_\ti \in \R^\nact$, output $\observation_\ti \in \R^\nobs$ and disturbance $\disturbance_\ti \in \R^\nstate$.
One of the major assumption in \gls{ilc} is that the disturbances $\disturbance_\ti$ are \emph{repetitive}, meaning that the sequence $\disturbance = [\disturbance_0^\top, \dots, \disturbance_\Ti^\top]^\top$ does not change (or only varies slightly) across multiple rollouts.
While these disturbances can be considered to come from some exogenous error source, one can also interpret these as unmodeled effects of the dynamics, e.g., nonlinearities stemming from friction, aerodynamic effects, etc.

For the derivation, let's assume $\C = \mathbf{I}$ so that we're operating on an \gls{mdp}. Consequently, the goal of \gls{ilc} is to track a sequence of desired states $\refstate$ instead of outputs $\refobs$.
In order to find an optimal input sequence, we minimize the squared 2-norm of a state's deviation from the reference for each time-step $\ti$ of the sequence, i.e., $\error_\ti = \refstate_\ti - \state_\ti$.
As is common in the ILC literature, we make use of the so-called \emph{lifted system} formulation such that we can conveniently write the state evolution for one rollout using a single matrix/vector multiplication.
Assuming zero initial state (which can always be done in the linear setting by just shifting the state by a constant offset), we obtain the following formulation
\begin{align}
    \state = \F \action + \disturbance, \quad \text{with} \quad \F =
    \begin{bmatrix}
    0 & \cdots &  &  \\
    \B & 0 &  &  \\
    \A\B & \B & 0 &  \\
    \A^2\B & \A\B & \B & \ddots \\
    \vdots &  & \ddots & \ddots  
    \end{bmatrix} \in \R^{\nstate(\Ti+1)\times \nact\Ti}.
\end{align}
Thus, we can write the state-error at the $j$-th iteration of ILC in the lifted representation as (recall that the disturbance $\disturbance$ does not change across iterations)
\begin{align}
    \error^{(\episode)} & = \refstate - \state^{(\episode)} = \refstate - \F \action^{(\episode)} - \disturbance, \\
    \error^{(\episode + 1)} &  = \error^{(\episode)} - \F (\action^{(\episode + 1)} - \action^{(\episode)}).
\end{align}

Now, the resulting optimization problem then becomes
\begin{align*}
    \action_*^{(\episode + 1)} = \argmin_{\action^{(\episode + 1)}} J\left(\action^{(\episode + 1)}\right) \text{ with } J\left(\action^{(\episode + 1)}\right) = \frac{1}{2} \norm{\error^{(\episode + 1)}}^2.
\end{align*}
In order to be less sensitive to noise and the inherent stochasticity of real-world problems, one typically also adds a regularizing term that penalizes the changes in the input sequence.
While this additional penalization term can slow down the learning process it makes it more robust by avoiding overcompensation to the disturbances.
The full objective then becomes
\begin{align}\label{eq:optimization_problem_ilc}
J\left(\action^{(\episode + 1)}\right) =  \frac{1}{2} \norm{\error^{(\episode + 1)}}_\M^2 + \frac{1}{2} \norm{\action^{(\episode + 1)} - \action^{(\episode)}}_\W^2,
\end{align}
where we additionally added positive semi-definite cost matrices $\M, \W$ for the respective norms in order to facilitate tuning of the corresponding terms in the objective.
Given the regularized cost function we obtain the optimal sequence of inputs at the next iteration as
\begin{align}\label{eq:noilc_solution}
    \action_*^{(\episode + 1)} = \action_*^{(\episode)} + \left( \F^\top \M \F + \W \right)^{-1} \F^\top \M \error^{(\episode)}.
\end{align}

\subsection{Model-based RL to Norm-optimal ILC}

\paragraph{Assumptions} 
In order to show the equivalence of MBRL and NO-ILC we need to make some assumptions to the above stated optimization problem.
\begin{itemize}
    \item No aleatoric uncertainty in the model, i.e., no transition noise $\noise_\ti = 0 \quad \forall \ti$.
    \item Typically in RL we assume the policy to be stationary, however, in this setting we will allow for non-stationary policies that are indexed by time $\ti$ (the result will not necessarily depend on the state such that we essentially just obtain a feedforward control sequence. To include feedback one can always just combine the feedforward signal with a local controller that tracks the desired state/action trajectory).
    \item The reward is given as the negative quadratic error of the state w.r.t. a desired state trajectory, $r(\pstate_\ti, \paction_\ti) = -\frac{1}{2}\norm{\refstate_\ti - \pstate_\ti}^2$
    \item Typically in RL we have constraints on the policy such that it does not change too much after every iteration, see e.g. TRPO, REPS, etc. While in the mentioned approaches, the policy are often constrained in terms of their parameterization vectors, we constrain it as $\norm{\pi_\ti^{(\episode + 1)} - \pi_\ti^{(\episode)}}^2 \leq \epsilon_\pi \quad \forall \ti$. 
    \item Assume that the model is given by $\tilde{f}_\ti(\pstate, \paction) = \A \pstate + \B \paction + \mathbf{d}_\ti$, where $\A$ and $\B$ are fixed system matrices and $\mathbf{d}_\ti$ is a time-dependent offset that we learn.
\end{itemize}

\paragraph{The learned dynamics model}
Given state/input pairs of the $\episode$-th trajectory $\tau^{(\episode)} = \{ (\state_\ti^{(\episode)}, \action_\ti^{(\episode)}) \}_{\ti=0}^{\Ti}$  from the true system, we can now improve our dynamics model. In particular, if we minimize the prediction error over $\tau^{(\episode)}$, we obtain 
\begin{equation}
    \mathbf{d}_\ti^{(\episode)} = \state_{\ti+1}^{(\episode)} - \left( \A \state_\ti^{(\episode)} + \B \action_\ti^{(\episode)} \right)
\end{equation}
The resulting dynamics for the optimal control problem in \cref{eq:rl_problem} become
\begin{align}\label{eq:dynamics_error_accounted}
    \tilde{f}_\ti^{(\episode)}(\pstate, \paction) = \state_{\ti + 1}^{(\episode)} + \A (\pstate - \state_\ti^{(\episode)}) + \B (\paction - \action_\ti^{(\episode)})
\end{align}
which are the error dynamics around the trajectory. 
Now in the noisy case, taking the last trajectory is not necessarily the best thing one can do. E.g., \citep{schoellig2009optimization} instead integrate the information of all past trajectories via Kalman-filtering. In the fully observed case, one way to think of this is as low-pass filtering $\mathbf{d}_\ti$ in order to account for the transition noise $\noise$.

\paragraph{The resulting MBRL problem}
Now, let's plug in all assumptions into the MBRL problem and have a look at how to solve it.
\begin{align}\label{eq:model_based_optimization_problem_with_ilc_assumptions}
\begin{split}
    \pi_*^{(\episode+1)} = \argmin_{\pi = \{\pi_0, \dots, \pi_{\Ti}\}} \, & \sum_{\ti=0}^{\Ti}\frac{1}{2} \norm{\refstate_\ti - \pstate_\ti}^2 \\
    \pstate_{\ti + 1} &= \state_{\ti + 1}^{(\episode)} + \A (\pstate - \state_\ti^{(\episode)}) + \B (\paction - \action_\ti^{(\episode)}) \\
    \paction_\ti &= \pi_\ti\\
    \norm{\pi_\ti^{(\episode)} - \pi_\ti}^2 & \leq \epsilon_\pi \quad \forall \ti
\end{split}
\end{align}
where we have flipped the reward's sign to transform it into a minimization problem.
In the presented form, this optimization problem has a well-defined unique solution, however, it is a-priori not clear if the trust-region constraint is active for some time-steps.
To facilitate an analytical solution to \eqref{eq:model_based_optimization_problem_with_ilc_assumptions}, we incorporate the constrain on the policy's stepsize by a soft-constraint such that
\begin{align}\label{eq:model_based_optimization_problem_with_ilc_assumptions_soft_constraints}
\begin{split}
    \pi_*^{(\episode+1)} = \argmin_{\pi = \{\pi_0, \dots, \pi_{\Ti}\}} \, & \sum_{\ti=0}^{\Ti} \frac{1}{2}\norm{\refstate_\ti - \pstate_\ti}^2 + \frac{C^\pi_t}{2} \norm{\pi_\ti^{(\episode)} - \pi_\ti}^2\\
    \pstate_{\ti + 1} &= \state_{\ti + 1}^{(\episode)} + \A (\pstate - \state_\ti^{(\episode)}) + \B (\paction - \action_\ti^{(\episode)}) \\
    \paction_\ti &= \pi_\ti,
\end{split}
\end{align}
with $C^\pi_t$ being constants that weight the respective trust-region terms.

Generally, the MBRL problem cannot be solved analytically due to the (possibly highly non-linear, non-differentiable, etc.) reward signal and the need for propagating the (uncertain) dynamics model forwards in time.
However, using the simplifying assumptions above, the reduced MBRL problem \eqref{eq:model_based_optimization_problem_with_ilc_assumptions_soft_constraints} can in fact be solved analytically.
We can circumvent the equality constraints by predicting the state trajectory using the error corrected dynamics such that we obtain a lifted dynamics formulation similar to the analysis for ILC,
\begin{align}\label{eq:lifted_dynamics}
    \pstate^{(\episode)} = \F \paction^{(\episode)} + \disturbance^{(\episode)}, \quad \text{with} \quad \disturbance^{(\episode)} = \state^{(\episode)} - \F \action^{(\episode)},
\end{align}
with $\state, \action$ denoting the stacked states and actions of the recorded trajectory.
Using this notation, the optimization problem reduces to
\begin{align}\label{eq:model_based_optimization_problem_simplified}
    \pi_*^{(\episode+1)} = \argmin_{\pi} \, & \frac{1}{2}\norm{\refstate - \pstate^{(\episode)}}^2 + \frac{1}{2} \norm{\pi^{(\episode)} - \pi}_{\C^\pi}^2,
\end{align}
with $\C^\pi = \operatorname{diag}[C^\pi_0, \dots, C^\pi_H]$.
By inserting \eqref{eq:lifted_dynamics} into \eqref{eq:model_based_optimization_problem_simplified} we obtain the closed-form solution as
\begin{align}
    \pi_*^{(\episode+1)} = \left(\F^\top \F + \C^\pi\right)^{-1} \F^\top \left(\refstate - \disturbance^{(\episode)}\right) = \pi_*^{(\episode)} + \left(\F^\top \F + \C^\pi\right)^{-1} \F^\top \left(\refstate - \pstate^{(\episode)}\right),
\end{align}
which, clearly, is equivalent to \eqref{eq:noilc_solution} for $\M = \mathbf{I}$ and $\W = \C^\pi$.

\subsection{Extensions}
In the previous section, we analyzed the most basic setting for NO-ILC, i.e., linear dynamics, no transition noise in the dynamics and no state/input constraints.
Some of these assumptions can easily be liftd to genearlized the presented framework.

\paragraph{Nonlinear Dynamics}
Instead of the system being defined by fixed matrices $\A, \B$, we can just linearize a non-linear dynamics model $f(\state, \action)$ around the last state/input trajectory such that
\begin{align}
    \A^{(\episode)}_\ti = \left. \frac{\partial f}{\partial \state} \right|_{\state_\ti = \state_\ti^{(\episode)}, \action_\ti = \action_\ti^{(\episode)}}, \quad \B^{(\episode)}_\ti = \left. \frac{\partial f}{\partial \action} \right|_{\state_\ti = \state_\ti^{(\episode)}, \action_\ti = \action_\ti^{(\episode)}}
\end{align}
and the corresponding dynamics matrix for the lifted representation becomes (dropping the superscript notation indicating the iteration for clarity)
\begin{align}
\F =
    \begin{bmatrix}
    0 & \cdots &  &  \\
    \B_0 & 0 &  &  \\
    \A_0\B_0 & \B_1 & 0 &  \\
    \A_0 \A_1 \B_0 & \A_0\B_1 & \B_2 & \ddots \\
    \vdots &  & \ddots & \ddots  
    \end{bmatrix} \in \R^{\nstate(\Ti+1)\times \nact\Ti}.    
\end{align}

\paragraph{State/Input Constraints}
Recall that we could solve the quadratic problems \eqref{eq:optimization_problem_ilc} and \eqref{eq:model_based_optimization_problem_with_ilc_assumptions_soft_constraints} analytically because we assumed no explicit inequality constraints on the state and inputs.
In practice, however, both states and actions are limited by physical or safety constraints typically given by polytopes.
While a closed-form solution is not readily available for this case, one can easily employ any numerical solver to deal with such constraints.
See, e.g., \url{https://en.wikipedia.org/w/index.php?title=Quadratic_programming} for a comprehensive list of available solvers.

\paragraph{Stochastic Dynamics}
\cite{schoellig2009optimization} generalize the ILC setting by considering two separate sources of noise: 1) transition noise in the dynamics, i.e., $\state_{\ti+1} = f(\state_\ti, \action_\ti) + \omega^f_t$ as well as 2) varying disturbances, i.e., $\disturbance^{(i+1)} = \disturbance^{(i)} + \omega^d$.
Based on this model, a Kalman-Filter in the \emph{iteration-domain} is developed that estimates the respective random variables and the ILC scheme is adapted to account for the estimated quantities.

\end{document}